\DeclareMathOperator*{\argmax}{arg\,max}
\newcommand{\ind}{\mathbb{I}}
\useunder{\uline}{\ul}{}
\newtheorem{theorem}{Theorem}[section]
\newtheorem{lemma}{Lemma}[section]
\newcommand{\tcb}[1]{{#1}}
\newcommand{\ph}{{h^\prime}}
\newcommand{\Mc}{\mathcal{M}}
\newcommand{\Ic}{\mathcal{I}}
\newcommand{\Eb}{\mathbb{E}}
\newcommand{\be}{\begin{equation}}
	\newcommand{\ee}{\end{equation}}
\newcommand{\sah}{\mathcal{S}\times \mathcal{A}\times [H]}
\title{Federated Q-Learning: Linear Regret Speedup with Low Communication Cost}
\author{Zhong Zheng\thanks{Email: \url{zvz5337@psu.edu}}\qquad Fengyu Gao\thanks{Email: \url{fengyugao@psu.edu}}\qquad Lingzhou Xue\thanks{Email: \url{lzxue@psu.edu}}\qquad Jing Yang\thanks{Email: \url{yangjing@psu.edu}}\\The Pennsylvania State University}
\date{}
\begin{document}
\maketitle
\begin{abstract}
	
	In this paper, we consider federated reinforcement learning for tabular episodic Markov Decision Processes (MDP) where, under the coordination of a central server, multiple agents collaboratively explore the environment and learn an optimal policy without sharing their raw data.  While linear speedup in the number of agents has been achieved for some metrics, such as convergence rate and sample complexity, in similar settings, it is unclear whether it is possible to design a {\it model-free} algorithm to achieve linear {\it regret} speedup with low communication cost. We propose two federated Q-Learning algorithms termed as FedQ-Hoeffding and FedQ-Bernstein, respectively, and show that the corresponding total regrets achieve a linear speedup compared with their single-agent counterparts \tcb{when the time horizon is sufficiently large}, while the communication cost scales logarithmically in the total number of time steps $T$. Those results rely on an event-triggered synchronization mechanism between the agents and the server, a novel step size selection when the server aggregates the local estimates of the state-action values to form the global estimates, and a set of new concentration inequalities to bound the sum of non-martingale differences. This is the first work showing that linear regret speedup and logarithmic communication cost can be achieved by model-free algorithms in federated reinforcement learning.

	
\end{abstract}
\section{Introduction}

Federated Learning (FL) \citep{mcmahan2017communication} is a distributed machine learning framework, where a large number of clients collectively engage in model training and accelerate the learning process, under the coordination of a central server. Notably, this approach keeps raw data confined to local devices and only communicates model updates between the clients and the server, thereby diminishing the potential for data exposure risks and reducing communication costs. As a result of these advantages, FL is gaining traction across various domains, including healthcare, telecommunications, retail, and personalized advertising.

On a different note, Reinforcement Learning (RL) \citep{1998Reinforcement} is a subfield of machine learning focused on the intricate domain of sequential decision-making. Often modeled as a Markov Decision Process (MDP), the primary objective of RL is to obtain an optimal policy through sequential interactions with the previously unknown environment. RL has exhibited superhuman performances in various applications, such as games \citep{silver2016mastering,silver2017mastering,silver2018general,vinyals2019grandmaster}, robotics \citep{kober2013reinforcement,gu2017deep}, and autonomous driving \citep{yurtsever2020survey}, and garnered increasing attentions in different domains.

However, training an RL agent often requires large amounts of data, due to the inherent high dimensional state and action spaces \citep{akkaya2019solving,kalashnikov2018qt}, and sequentially generating such training data is very time-consuming \citep{nair2015massively}. It thus has inspired a line of research that aims to extend the FL principle to the RL setting. The FL framework allows the agents to collaboratively train their decision-making models with limited information exchange between the agents, thereby accelerating the learning process and reducing communication costs. Among them, some model-based algorithms (e.g., \cite{chen2023byzantine}) and policy-based algorithms (e.g., \cite{fan2021fault}) have already exhibited speedup with respect to the number of agents for learning regret or convergence rate. 

Also, there is a collection of research focusing on model-free federated RL algorithms, which have shown encouraging results. Such algorithms build upon the classical value-based algorithms such as $Q$-learning \citep{watkins1989learning}, which directly learns the optimal policy without estimating the underlying model. Among them, \cite{jin2022federated} considered a heterogeneous setting where the agents interact with environments with different but known transition dynamics, and the objective is to obtain a policy that maximizes the overall performance in all environments. It proposes two federated RL algorithms, including a $Q$-learning-based algorithm called QAvg, and proves their convergence.
\cite{liu2023distributed} investigated distributed TD-learning with linear function approximation and achieves linear convergence speedup under the assumption that the samples are generated in an identical and independently distributed (i.i.d.) fashion. \cite{khodadadian2022federated} proposed federated versions of TD-learning and $Q$-learning and proved a linear convergence speedup with respect to the number of agents for both algorithms under Markovian sampling.  \cite{woo2023blessing} studied infinite-horizon tabular Markov decision processes (MDP) and proposed both the synchronous and asynchronous variants of federated Q-learning. Both algorithms exhibit a linear speedup in the sample complexity. 
We note that under the aforementioned algorithms, {\it the clients do not adaptively update their exploration policy during the learning process}. As a result, they do not have any theoretical guarantees on the total regret among the agents\footnote{A comprehensive literature review is provided in \Cref{related}.}. 


In this work, we aim to answer the following question:
\begin{center}
	\textit{Is it possible to design a federated model-free RL algorithm that enjoys both linear regret speedup and low communication cost?}
\end{center}




\if{0}
\begin{itemize}
	\item {\bf model-free approaches:} \cite{bai2019provably} and \cite{zhang2020almost} proposed model-free algorithms Concurrent Q-UCB and Concurrent UCB-advantage that achieve regret $\Tilde{O}(\sqrt{H^{3}SAMT})$\footnote{The notation $\Tilde{O}$ indicates that some log factors are omitted.} and $\Tilde{O}(\sqrt{H^{2}SAMT})$ respectively. However, in their design, agents exchange experience and update
	their policies at the end of each concurrent round. This results in $O(MT)$ communication cost when these algorithms are applied to federated RL. 
	\item {\bf model-based approaches:} \cite{zhang2022near} and \cite{qiao2022sample} proposed model-based batched RL algorithms that, when utilized on $M$ parallel agents, achieve regrets of $\Tilde{O}(\sqrt{H^{2}SAMT})$ and $\Tilde{O}(\sqrt{H^{4}S^2AMT})$ respectively. \cite{chen2023byzantine} proposed a distributed algorithm that achieves regret of $\Tilde{O}(\sqrt{H^3S^2AMT})$, with $O(M^2H^2S^2A^2\log T)$ communication cost. However, these model-based algorithms incur high memory costs since they have to store estimates of the model.
\end{itemize}
\fi



We give an affirmative answer to this question under the tabular episodic MDP setting. Specifically, we assume a central server and $M$ local agents exist in the system, where each agent interacts with an episodic MDP with $S$ states, $A$ actions, and $H$ steps in each episode independently. The server coordinates the behavior of the agents by designating their exploration policies, while the clients execute the policies, collect trajectories, and form ``local updates''. The local updates will be sent to the server periodically to form ``global updates'' and refine the exploration policy.  
{Our contributions} can be summarized as follows.

\begin{itemize}[topsep=0pt, left=0pt] 
	\item \textbf{Algorithmic Design.} 
	We propose two federated variants of the $Q$-learning algorithm \citep{jin2018q}, termed as FedQ-Hoeffding and FedQ-Bernstein, respectively. Those two algorithms feature the following elements in their design: 1) {\it Adaptive exploration policy selection.} In order to achieve linear regret speedup, it becomes necessary to adaptively select the exploration policy for all clients, which is in stark contrast to the static sampling policy adopted in \cite{khodadadian2022federated,woo2023blessing}. 2) {\it Event-triggered policy switching and communication.} On the other hand, to reduce the communication cost, it is desirable to keep the exploration policy switching to a minimum extent. This motivates us to adopt an event-triggered policy switching and communication mechanism, where communication and subsequent policy switching only happen when a certain condition is satisfied. This naturally partitions the learning process into rounds. 3) {\it Equal weight assignment for global aggregation.} When the central server updates the global estimates of the $Q$-value for a given state-action pair $(x,a)$ at step $h$, we assign equal weights for all new visits to the tuple $(x,a,h)$ within the current round. As a result, local agents do not need to send the collected trajectories to the server. Instead, it only needs to send the empirical average of the estimated values of the next states after visiting $(x,a,h)$ to the server. 
	
	\item \textbf{Performance Guarantees.} Thanks to the careful design of the policy switching, communication, and global aggregation mechanisms, FedQ-Hoeffding and FedQ-Bernstein provably achieve linear regret speedup in the number of agents compared with their single-agent counterparts \citep{jin2018q,bai2019provably} \tcb{when the total number of steps $T$ is sufficiently large}, while the communication cost scales in $O(M^2H^4S^2A\log (T/M))$. To the best of our knowledge, those are the first model-free federated RL algorithms that achieve linear regret speedup with logarithmic communication cost. We compare the regret and communication costs under {multi-agent} tabular episodic MDPs in \Cref{table_comparison}.
	
	\if{0}
	\begin{itemize}
		\item \textbf{Low learning regret}. FedQ-Hoeffding enjoys a regret of $\Tilde{O}(\sqrt{H^{4}SAMT})$ and FedQ-Bernstein enjoys a regret of $\Tilde{O}(\sqrt{H^{3}SAMT})$.
		The two results show linear speedup compared to $\Tilde{O}(\sqrt{H^{4}SAT})$, $\Tilde{O}(\sqrt{H^{3}SAT})$ provided for the regrets in \cite{jin2018q} and \cite{bai2019provably} for the single-agent setting.
		
		\item \textbf{Low communication cost}. Both of our algorithms only communicate data of amounts $O(M^2H^4S^2A\log (T/M))$ among the central server and the local agents. This outperforms \cite{bai2019provably} and \cite{zhang2020almost} whose communication costs are $O(MT)$. 
		\item \textbf{Low memory cost}. As model-free algorithms, the space complexity of our two algorithms is only $O(HSA)$ for the central server and $O(HS)$ for each agent. As a comparison, for model-based algorithms of \cite{zhang2022near} and \cite{qiao2022sample} deployed for the multi-agent setting, the space complexity for the central server is $O(HS^2A)$ and the space complexity for each local agent is $O(HS^2)$. \cite{chen2023byzantine} stored the estimated transition kernels at each local agent so that the space complexity for the central server is $O(HSA)$ and the space complexity for each local agent is $O(HS^2A)$.
	\end{itemize}
	\fi

	\item \textbf{Technical Novelty.} While the equal weight assignment during global aggregation is critical to reducing the communication cost in our design, it also leads to a non-trivial challenge for the corresponding theoretical analysis. This is because the specific weight assigned to each new visit depends on the total number of visits between two model aggregation points, which is not causally known when $(x,a,h)$ is visited. As a result, the weights assigned to all visits of $(x,a,h)$ do not form a martingale difference sequence. Such non-martingale property makes the cumulative estimation error in the global estimates of the value functions difficult to track. In order to characterize the \textbf{concentration of the sum of non-martingale differences}, we relate the non-martingale difference sequence with another martingale difference sequence within each round. Due to the common factor between those two sequences in each round, we are then able to bound their differences roundwisely. We believe that the techniques developed for proving concentration inequalities on the sum of non-martingale differences will be useful in future analysis of other model-free federated RL algorithms.
	
\end{itemize}

\begin{table}[t]
	\caption{\label{table_comparison}Comparison of Related Algorithms}
	\label{table:comp}
	\begin{center}
		\adjustbox{max width= \linewidth}{
			\begin{tabular}{cccc}
				\Xhline{3\arrayrulewidth}
				Type & Algorithm (Reference) & Regret & Communication cost \\
				\Xhline{2\arrayrulewidth}
				\multirow{3}{*}{Model-based}
				& Multi-batch RL \citep{zhang2022near} & $\Tilde{O}(\sqrt{H^{2}SAMT})$ & -\\ 
				& APEVE \citep{qiao2022sample} & $\Tilde{O}(\sqrt{H^{4}S^2AMT})$ & -\\ 
				& Byzan-UCBVI \citep{chen2023byzantine} & $\Tilde{O}(\sqrt{H^3S^2AMT})$ & $O(M^2H^2S^2A^2\log T)$ \\
				\midrule
				\multirow{5}{*}{Model-free}
				& Concurrent Q-UCB2H  \citep{bai2019provably} & $\Tilde{O}(\sqrt{H^{4}SAMT})$  & $O(MT)$ \\
				& Concurrent Q-UCB2B \citep{bai2019provably} & $\Tilde{O}(\sqrt{H^{3}SAMT})$ & $O(MT)$  \\        
				& Concurrent UCB-Advantage \citep{zhang2020almost} & $\Tilde{O}(\sqrt{H^{2}SAMT})$ & $O(MT)$  \\ 
				& FedQ-Hoeffding ({\bf this work}) &  $\Tilde{O}(\sqrt{H^{4}SAMT})$ &  $O(M^2H^4S^2A\log (T/M))$ \\ 
				& FedQ-Bernstein ({\bf this work}) &  $\Tilde{O}(\sqrt{H^{3}SAMT})$ & $O(M^2H^4S^2A\log (T/M))$ \\
				\Xhline{3\arrayrulewidth}
			\end{tabular}
		}
	\end{center}
	
	\begin{center}
		\scriptsize
		$H$: number of steps per episode; $T$: total number of steps; $S$: number of states; $A$: number of actions; $M$: number of agents. -: not discussed.
	\end{center} 
	\vspace{-0.2in}
	
\end{table}


\section{Background and Problem Formulation}

\textbf{Notations.} Throughout this paper, we assume that $0/0 = 0$. For any $C\in \mathbb{N}$, we use $[C]$ to denote the set $\{1,2,\ldots C\}$. We use $\mathbb{I}[x]$ to denote the indicator function, which equals 1 when the event $x$ is true and equals 0 otherwise.

\subsection{Preliminaries}
We first introduce the mathematical model and background on Markov decision processes. 

\textbf{Tabular Episodic Markov Decision Process (MDP).}
A tabular episodic MDP is denoted as $\Mc:=(\mathcal{S}, \mathcal{A}, H, \mathbb{P}, r)$, where $\mathcal{S}$ is the set of states with $|\mathcal{S}|=S, \mathcal{A}$ is the set of actions with $|\mathcal{A}|=A$, $H$ is the number of steps in each episode, $\mathbb{P}:=\{\mathbb{P}_h\}_{h=1}^H$ is the transition kernel so that $\mathbb{P}_h(\cdot \mid x, a)$ characterizes the distribution over the next state given the state action pair $(x,a)$ at step $h$, and $r:=\{r_h\}_{h=1}^H$ is the collection of reward functions. In this work, we assume $r_h(x,a)\in [0,1]$ is a {deterministic} function of $(x,a)$, while the results can be easily extended to the case when $r_h$ is random.

In each episode of $\Mc$, an initial state $x_1$ is selected arbitrarily by an adversary. Then, at each step $h \in[H]$, an agent observes state $x_h \in \mathcal{S}$, picks an action $a_h \in \mathcal{A}$, receives reward $r_h = r_h(x_h,a_h)$ and then transits to next state $x_{h+1}$. 
The episode ends when an absorbing state $x_{H+1}$ is reached. 

\textbf{Policy, State Value Functions and Action Value Functions.}
A policy $\pi$ is a collection of $H$ functions $\left\{\pi_h: \mathcal{S} \rightarrow \Delta^\mathcal{A}\right\}_{h \in[H]}$, where $\Delta^\mathcal{A}$ is the set of probability distributions over $\mathcal{A}$. A policy is deterministic if for any $x\in\mathcal{S}$,  $\pi_h(x)$ concentrates all the probability mass on an action $a\in\mathcal{A}$. In this case, we simply denote $\pi_h(x) = a$. 

We use $V_h^\pi: \mathcal{S} \rightarrow \mathbb{R}$ to denote the state value function at step $h$ under policy $\pi$ so that $V_h^\pi(x)$ equals the expected return under policy $\pi$ starting from $x_h=x$. 
Mathematically,
$$
V_h^\pi(x):=\sum_{h^{\prime}=h}^H \mathbb{E}_{(x_{h^{\prime}},a_{h^{\prime}})\sim(\mathbb{P}, \pi)}\left[r_{h^{\prime}}(x_{h^{\prime}},a_{h^{\prime}}) \left. \right\vert x_h=x\right].
$$
Accordingly, we also use $Q_h^\pi: \mathcal{S} \times \mathcal{A} \rightarrow \mathbb{R}$ to denote the action value function at step $h$, i.e., 
$$
Q_h^\pi(x, a):=r_h(s,a)+\sum_{h^{\prime}=h+1}^H\mathbb{E}_{(x_{\ph},a_\ph)\sim\left(\mathbb{P},\pi\right)}\left[ r_{h^{\prime}}(x_{h^{\prime}},a_{h^{\prime}}) \left. \right\vert x_h=x, a_h=a\right].
$$
Since the state and action spaces and the horizon are all finite, there always exists an optimal policy $\pi^{\star}$ that achieves the optimal value $V_h^{\star}(x)=\sup _\pi V_h^\pi(x)=V_h^{\pi^*}(x)$ for all $x \in \mathcal{S}$ and $h \in[H]$ \citep{azar2017minimax}. For ease of exposition, we denote $\left[\mathbb{P}_h V_{h+1}\right](x, a):=\mathbb{E}_{x^{\prime} \sim \mathbb{P}_h(\cdot \mid x, a)} V_{h+1}\left(x^{\prime}\right)$. Then, the Bellman equation and
the Bellman optimality equation can be expressed as:
\begin{equation}\label{eq_Bellman}
	\left\{\begin{array} { l } 
		{ V _ { h } ^ { \pi } ( x ) = \Eb_{a\sim \pi _ { h } ( x )}[Q _ { h } ^ { \pi } ( x , a ) }] \\
		{ Q _ { h } ^ { \pi } ( x , a ) : = ( r _ { h } + \mathbb { P } _ { h } V _ { h + 1 } ^ { \pi } ) ( x , a ) } \\
		{ V _ { H + 1 } ^ { \pi } ( x ) = 0, \quad \forall x \in \mathcal { S } }
	\end{array} \quad \text { and } \quad \left\{\begin{array}{l}
		V_h^{\star}(x)=\max _{a \in \mathcal{A}} Q_h^{\star}(x, a) \\
		Q_h^{\star}(x, a):=\left(r_h+\mathbb{P}_h V_{h+1}^{\star}\right)(x, a) \\
		V_{H+1}^{\star}(x)=0, \quad \forall x \in \mathcal{S}.
	\end{array}\right.\right.
\end{equation}

\subsection{The Federated RL Framework}

In this work, we consider a federated RL setting with a central server and $M$ agents, each interacting with an independent copy of the MDP $\Mc$ in parallel. The agents can communicate with the server periodically. Depending on the specific algorithm design, the agents may send different information (e.g., reward $r_h$, or estimated $V$-values $V_h$) to the central server. 
Upon receiving the local information, the central server then aggregates and broadcasts certain information to the clients to coordinate their exploration. Note that, just as in FL, communication is one of the major bottlenecks, and the algorithm has to be conscious of its usage. In this work, we define the communication cost of an algorithm as the number of scalars (integers or real numbers) communicated between the server and clients.
We also make the assumption that there is no latency during the communications, and the agents and server are fully synchronized \citep{mcmahan2017communication}. 

Let $\pi_h^{m,s}$ be the policy adopted by agent $m$ at step $h$ in the $s$-th episode, and $x_1^{m,s}$ be the corresponding initial state. 
Then, the overall learning regret of the $M$ clients over $T=HJ$ steps can be expressed as
$$\mbox{Regret}(T) = \sum_{m\in [M]}\sum_{s=1}^{J} \left(V_1^\star(x_1^{m,s}) - V_1^{\pi_h^{m,s}}(x_1^{m,s})\right).$$
Here, $J$ is the number of episodes and stays the same across different agents due to the synchronization assumption. 

\section{Algorithm Design}
In this section, we elaborate on our model-free federated RL algorithm termed as FedQ-Hoeffding. The Bernstein-type algorithm, termed as FedQ-Bernstein, will be introduced afterward. 


\subsection{The FedQ-Hoeffding Algorithm}\label{sec:alg}

The algorithm proceeds in rounds, indexed by $k\in[K]$. Round $k$ consists of $n^k$ episodes for each agent, where the specific value of $n^k$ will be determined later. 
Before we proceed, we first introduce the following notations. For the $j$-th ($j\in[n^k]$) episode in the $k$-th round, we use $x_{1}^{m,k,j}$ to denote the initial state for the $m$-th agent, and use $\{(x_h^{m,k,j}, a_h^{m,k,j}, r_h^{m,k,j})_{h=1}^H\}$ to denote the corresponding trajectory. 
Denote $n_h^{m,k}(x,a)$ as the total number of times that the state-action pair $(x,a)$ has been visited at step $h$ during round $k$ by agent $m$, i.e.,
$n_h^{m,k}(x,a) = \sum_{j' = 1}^{n^{k}} \ind\{(x_h^{m,k,j'},a_h^{m,k,j'}) = (x,a)\},$
and let $n_h^{k}(x,a) = \sum_{m=1}^Mn_h^{m,k}(x,a)$, i.e.,  the total number of visits for $(x,a)$ at step $h$ during round $k$ among all agents. We also denote $N_h^k(x,a)$ as the total number of visits for $(x,a,h)$ among all agents before round $k$, i.e, 
$N_h^k(x,a) = \sum_{m=1}^M\sum_{k'=1}^{k-1}\sum_{j=1}^{n^{k'}}\ind\{(x_h^{m,k',j},a_h^{m,k',j}) = (x,a)\}$.

We also use $\{V_h^k: \mathcal{S}\rightarrow \mathbb{R}\}_{h=1}^H$ and $\{Q_h^k: \mathcal{S}\times \mathcal{A}\rightarrow \mathbb{R}\}_{h=1}^H$ to denote the ``global'' estimates of the state value function and action value function before the beginning of round $k$. Meanwhile, we use $v_{h+1}^{m,k}(x,a)$ to denote the ``local'' estimate of the expected return starting at step $h+1$ at agent $m$ in round $k$ given $(x_h,a_h)=(x,a)$, and use $v_{h+1}^{k}(x,a)$ to denote the corresponding global estimate. 




We then specify each individual component of the algorithm as follows.

\textbf{Coordinated Exploration for Agents.} At the beginning of round $k$, the server decides a deterministic policy $\pi^k = \{\pi_{h}^k\}_{h=1}^H$, and then broadcasts it along with $\{N_h^k(x,\pi_{h}^k(x))\}_{x,h}$ and $\{V_h^k(x)\}_{x,h}$ to all of the agents. When $k=1$, $N_h^1(x,a) = 0, Q_h^1(x,a) = V_h^1(x) = H, \forall (x,a,h)\in \sah$ and $\pi^1$ is an arbitrary deterministic policy. 

Once receiving such information, the agents will execute policy $\pi^k$ and start collecting trajectories.

\textbf{Event-Triggered Termination of Exploration.} During exploration, every agent $m$ will monitor $n_h^{m,k}(x,a)$, i.e., the total number of visits for each  $(x,a,h)$  triple within the current round. For any agent $m$, at the end of each episode, if any $(x,a,h)$ has been visited by \tcb{$\max\left\{1,\lfloor\frac{N_h^k(x, a)}{MH(H+1)}\rfloor\right\}$} times by agent $m$, the agent will send a signal to the server, which will then request all agents to abort the exploration.

The termination condition guarantees that for any $(x,a,h,k)\in \sah\times [K]$, 
\begin{equation}\label{rough_num_ep}
	n_h^{m,k}(x,a)\leq \max\left\{1,\left\lfloor\frac{N_h^k(x,a)}{MH(H+1)}\right\rfloor\right\},
\end{equation}
and for each $k\in [K]$, there exists at least one agent $m$ such that equality is met for a $(x,a,h,m)$-tuple. The inequality limits the number of visits in a round and {is important for introducing our server-side information aggregation design shortly}. Meanwhile, the existence of equality guarantees that a sufficient number of new samples will be generated in a round, which is the key to the proof of Theorem \ref{thm_comm_cost_hoeffding} about the low communication cost.

\textbf{Local Updating of the Estimated Expected Return.} 
Each agent updates the local estimate of the expected return   $v_{h+1}^{m,k}(x,a)$ at the end of round $k$ as follows: 
$$v_{h+1}^{m,k}(x,a) = \frac{1}{n_h^{m,k}(x,a)}\sum_{j=1}^{n^{k}} V_{h+1}^k\left(x_{h+1}^{m,k,j}\right)\mathbb{I}\{(x_h^{m,k,j},a_h^{m,k,j}) = (x,a)\},\forall h\in [H],$$
i.e., for each $(x,a)$ visited at step $h$ during round $k$, $v_{h+1}^{m,k}$ is obtained by taking the empirical average of the global estimates of the value of the next visited state in the current round $k$. 

Next, each agent $m$ sends $\{r_h(x,\pi_h^k(x))\}_{x,h}$,$\{n_h^{m,k}(x,\pi_h^k(x))\}_{x,h}$ and $\{v_{h+1}^{m,k}(x,\pi_h^k(x))\}_{x,h}$ to the central server for aggregation. 




\textbf{Server-side Information Aggregation.} 
Denote $\alpha_t = \frac{H+1}{H+t}$,  
$\theta_0^0 = 1$, $\theta_t^0 = 0$ for $t\geq 1,$ and $\theta_t^i = \alpha_i\prod_{i'=i+1}^t(1-\alpha_{i'}), \forall \ 1\leq i\leq t$.  
We also denote $\alpha^c(t_1,t_2) = \prod_{t=t_1}^{t_2}(1-\alpha_t)$ for any positive integers $t_1<t_2$.

Then, after receiving the information sent by the agents, for each $(x,a,h)$ tuple visited by the agents, 
the server sets $t^{k-1} = N_h^k(x,a), t^k = N_h^{k+1}(x,a)$, $\alpha_{agg} = 1-\alpha^c(t^{k-1}+1,t^k)$ and $\tcb{\beta^k(x,a,h)} = 2\sum_{t=t^{k-1}+1}^{t^k}\theta_{t^k}^tb_t$ for some confidence bound $b_t$ to be determined later. \tcb{When there is no ambiguity, we will also use $\beta^k$ to represent $\beta^k(x,a,h).$ Then the server} updates the global estimate of the value functions according to one of the following two cases. 
\begin{itemize}[topsep=0pt, left=0pt] 
	\item \textbf{Case 1:} \tcb{$N_h^k(x,a)< 2MH(H+1)=:i_0$}. {Due to \Cref{rough_num_ep}, this case implies that each client can visit each $(x,a)$ pair at step $h$ at most once.} Then, we denote $1\leq m_1<m_2\ldots < m_{{t^{k}-t^{k-1}}}\leq M$ as the agent indices with $n_{h}^{m,k}(x,a)>0$. The server then updates the global estimate of action values as follows:
	\begin{equation}\label{update_q_small_n}
		Q_h^{k+1}(x,a)= (1-\alpha_{agg})Q_h^k(x,a)+\alpha_{agg}r_h(x,a)+\sum_{t = 1}^{t^{k} - t^{k-1}}\theta_{t^k}^{t^{k-1}+t}v_{h+1}^{m_t,k}(x,a) + \tcb{\beta^k/2}.
	\end{equation}
	\item \textbf{Case 2:} $N_h^k(x,a)\geq i_0$. In this case, the central server calculates $v_{h+1}^k(x,a)$ as
	$$v_{h+1}^k(x,a) = \frac{1}{n_h^{k}(x,a)}\sum_{m=1}^M v_{h+1}^{m,k}(x,a)n_h^{m,k}(x,a)$$
	and updates the $Q$-estimate as
	\begin{equation}\label{update_q_large_n}
		Q_h^{k+1}(x,a)= (1-\alpha_{agg})Q_h^k(x,a)+\alpha_{agg}\left(r_h(x,a)+v_{h+1}^k(x,a)\right) + \tcb{\beta^k/2}.
	\end{equation}
\end{itemize}

After finishing updating the estimated $Q$ function, the central server updates the estimated value function and the policy as follows:
\begin{align}\label{rel_V_Q_est}
	V_h^{k+1}(x) &= \min \left\{H, \max _{a^{\prime} \in \mathcal{A}} Q_h^{k+1}\left(x, a^{\prime}\right)\right\}, \quad \forall (x,h) \in \mathcal{S}\times [H],\\
	\label{rel_policy_Q}
	\pi_{h}^{k+1}(x)&= \argmax _{a^{\prime} \in \mathcal{A}} Q_h^{k+1}\left(x, a^{\prime}\right), \forall (x,h)\in \mathcal{S}\times [H]. 
\end{align}
The algorithm then proceeds to round $k+1$.

\Cref{alg_hoeffding_server,alg_hoeffding_agent} formally present the Hoeffding-type design. Inputs $K_0,T_0$ in Algorithms \ref{alg_hoeffding_server} are termination conditions where $K_0$ limits the total number of rounds and $T_0$ limits the total number of samples generated by all the agents before the last round.

\begin{algorithm}[ht]
	\caption{FedQ-Hoeffding (Central Server)}
	\label{alg_hoeffding_server}
	\begin{algorithmic}[1]
		\STATE {\bf Input:} $T_0,K_0\in\mathbb{N}_+$.
		\STATE {\bf Initialization:} $k=1$, $N_h^1(x,a) = 0, Q_h^1(x,a) = V_h^1(x) = H, \forall (x,a,h)\in \sah$ and $\pi^1 = \left\{\pi_h^1: \mathcal{S} \rightarrow \mathcal{A}\right\}_{h \in[H]}$ is an arbitrary deterministic policy. 
		\WHILE{$H\sum_{k'=1}^{k-1} Mn^{k'}< T_0\,\& \,k\leq K_0$}
		\STATE Broadcast $\pi^k,$ $\{N_h^k(x,\pi_{h}^k(x))\}_{x,h}$ and $\{V_h^k(x)\}_{x,h}$ to all clients.
		\STATE Wait until receiving an abortion signal and send the signal to all agents.
		\STATE Receive $\{r_h(x,\pi_h^k(x))\}_{x,h}$,$\{n_h^{m,k}(x,\pi_h^k(x))\}_{x,h,m}$ and $\{v_{h+1}^{m,k}(x,\pi_h^k(x))\}_{x,h,m}$ from clients.
		\STATE Calculate $N_h^{k+1}(x,a), n_{h}^k(x,a),v_{h+1}^k(x,a),\forall (x,h)\in \mathcal{S}\times [H]$ with $a = \pi_h^k(x)$.
		\FOR{$(x,a,h)\in \sah$}
		\IF{$a\neq \pi_h^k(x)$ \OR $n_h^k(x,a) = 0$}
		\STATE $Q_h^{k+1}(x,a)\leftarrow Q_h^{k}(x,a)$.
		\ELSIF{$N_h^k(x,a)<i_0$}
		\STATE Update $Q_h^{k+1}(x,a)$ according to \Cref{update_q_small_n}.
		\ELSE
		\STATE Update $Q_h^{k+1}(x,a)$ according to \Cref{update_q_large_n}.
		\ENDIF
		\ENDFOR
		\STATE Update $V_h^{k+1}$ and $\pi^{k+1}$ according to \Cref{rel_V_Q_est} and \Cref{rel_policy_Q}.
		\STATE $k\gets k+1$.
		\ENDWHILE
		
	\end{algorithmic}
\end{algorithm}

\begin{algorithm}[ht]
	
	\caption{FedQ-Hoeffding (Agent $m$ in round $k$)}
	\label{alg_hoeffding_agent}
	\begin{algorithmic}[1]
		\STATE $n_h^m(x,a)=v_{h+1}^m(x,a)=r_h(x,a)=0,\forall (x,a,h)\in \sah$. 
		\STATE Receive $\pi^k,$ $\{N_h^k(x,\pi_{h}^k(x))\}_{x,h}$ and $\{V_h^k(x)\}_{x,h}$ from the central server. 
		\WHILE{no abortion signal from the central server}
		\WHILE{$n_h^{m}(x_h,a_h) < \max\left\{1,\lfloor\frac{1}{MH(H+1)}N_h^k(x_h, a_h)\rfloor\right\}, \forall (x,a,h)\in \sah$} 
		\STATE Collect a new trajectory $\{(x_h,a_h,r_h)\}_{h=1}^H$ with $a_h = \pi_h^k(x_h)$.
		\STATE $n_h^m(x_h,a_h)\leftarrow n_h^m(x_h,a_h) + 1,$ $v_{h+1}^m(x_h,a_h)\leftarrow v_{h+1}^m(x_h,a_h) + V_{h+1}^k(x_{h+1})$, 
		and ${r}_h(x_h,a_h)\leftarrow r_h,\forall h\in [H].$
		\ENDWHILE
		\STATE Send an abortion signal to the central server.
		\ENDWHILE
		\STATE $n_h^{m,k}(x,a)\leftarrow n_h^{m}(x,a), v_{h+1}^{m,k}(x,a)\leftarrow \frac{v_{h+1}^{m}(x,a)}{n_h^{m}(x,a)},\forall (x,h)\in \mathcal{S}\times [H]$ with $a = \pi_h^k(x)$. 
		\STATE Send $\{r_h(x,\pi_h^k(x))\}_{x,h}$,$\{n_h^{m,k}(x,\pi_h^k(x))\}_{x,h}$ and $\{v_{h+1}^{m,k}(x,\pi_h^k(x))\}_{x,h}$ to the central server.
		
	\end{algorithmic}
\end{algorithm}

\subsection{Intuition behind the Algorithm Design}
\textbf{$Q$-estimate Update in Single-agent Setting.} Before we elaborate the intuition behind our algorithm design, we first provide a brief review of the Q-value estimate updating step under the Q-UCB2H algorithm \citep{bai2019provably} in the single-agent setting. Similar to FedQ-Hoeffding, Q-UCB2H also has a round-based design, where the agent updates the value function estimates at the end of each round. With a slight abuse of the notation, we use the same symbols as in
\Cref{sec:alg} to denote the quantities in the single-agent setting.


In round $k$, for a given triple $(x,a,h)$ such that $n^{k}(x,a)>0$, we denote the next states for all of the visits within the round as $\{x_{h+1,t}\}_{t=t^{k-1}+1}^{t^k}$. Then, the $Q$-estimate is updated sequentially and recursively for each visit as
\begin{equation}\label{update_old_single}
	Q_h(x,a)\leftarrow (1-\alpha_t)Q_h(x,a) + \alpha_t(r_h(x,a) + V_{h+1}^k(x_{h+1,t}) + b_t),t = t^{k-1}+1,\ldots t^k.
\end{equation}
As a result, at the end of round $k$, we have
\begin{equation}\label{update_old_agg}
	Q_h^{k+1}(x,a) =  \alpha^c(t^{k-1}+1,t^k)Q_h^k(x,a) + \sum_{t=t^{k-1}+1}^{t^k} \theta_{t^k}^t\left(r_{h}(x,a)+V_{h+1}^k(x_{h+1,t})\right)+ \tcb{\beta^k/2}.
\end{equation}
If we treat $r_{h}(x,a)+V_{h+1}^k(x_{h+1,t})$ as a new estimate of the $Q_h(x,a)$ induced by one visit within round $k$, then, \textit{all new samples are assigned with different weights $\theta_{t^k}^t$.} Together with the weight assigned for the old estimate $Q_h^k(x,a)$, it satisfies that  $\alpha^c(t^{k-1}+1,t^k) + \sum_{t=t^{k-1}+1}^{t^k} \theta_{t^k}^t = 1.$ 

\textbf{Major Challenge in Federated Setting.} 
The sequential updating rule in \Cref{update_old_single} relies on full accessibility of the trajectories to the agent, which is infeasible for the central server in the federated setting due to the high communication cost. Instead of sharing the raw data, in \Cref{alg_hoeffding_agent}, the local agents only send $\{v_{h+1}^{m,k}(x,a)\}_{m=1}^M$ to the server. Since this is the sample average of the estimated values over all states visited after $(x,a,h)$, it does not preserve the temporal structure of the trajectories. It thus becomes impossible for the server to infer the next state for each visit and sequentially update the global estimate as in \Cref{update_old_single} in general.

\tcb{\textbf{Equal Weight Assignment for Q-estimate
		Aggregation.}} 
We overcome the aforementioned challenge through a two-case design and new weight assignment for each visit.

In the first case, we have $N_h^k(x,a)<i_0$. \Cref{rough_num_ep} indicates that each client visits $(x,a,h)$ at most once, which implies that $v_{h+1}^{m_t,k}$ in \Cref{update_q_small_n} is exactly $V_{h+1}^k(x_{h+1,t})$. Thus, \Cref{update_q_small_n} is a sequential update and is the same as \Cref{update_old_agg}. 
We also remark that the design of the first case aims at early-stage accuracy and shares similar technical motivations as \cite{bai2019provably}. 

The second case shows the key difference between our algorithm and non-federated algorithms. Since the temporal structure is no longer preserved, we cannot track the next state for a given visit to $(x,a,h)$, and it thus becomes impossible to assign a different weight to each new visit as in \Cref{update_old_agg}. To resolve this issue, we choose to assign all visits with the same weight, while ensuring the total weight assigned to all visits unchanged, i.e.,  
\begin{align}
	Q_h^{k+1}(x,a) 
	&=(1-\alpha_{agg})Q_h^{k}(x,a) + \sum_{t=t^{k-1}+1}^{t^k} \frac{\alpha_{agg}}{n_h^k(x,a)}\left(r_{h}(x,a)+V_{h+1}^k(x_{h+1,t})\right) + \tcb{\beta^k/2}, \nonumber
\end{align}
which is equivalent to the updating rule in \Cref{update_q_large_n}. 

\section{Performance Guarantees}\label{sec:results}
Next, we provide regret upper bound for FedQ-Hoeffding as follows.
\begin{theorem}[Regret Upper Bound for FedQ-Hoeffding]\label{thm_regret_hoeffding} Let \tcb{$\tilde{C} = 1/(H(H+1))$}, $\iota = \max\{\iota_0,\iota_1\}$ where $\iota_0 = \log(2SA(T_0+HM)(1+\tilde{C})/p),\iota_1 = \log\frac{2K_0SAH(T_0/H + M)(1+\tilde{C})}{p}$, and $p\in(0,1)$. Define $b_t = c\sqrt{H^3\iota/t}$. Under \Cref{alg_hoeffding_server,alg_hoeffding_agent}, there exists a positive constant $c>0$ such that, for any $K\in [K_0]$ and $p\in (0,1)$, with probability at least $1-p$,
	\begin{equation}\label{regret_upper_hoeffding}
		\mbox{Regret}(T)\leq O\left(\sqrt{H^4\iota MTSA} +HSA(M-1)\sqrt{H^3\iota}+MH^2SA+H^4SA(M-1)\right),
	\end{equation}
	where $T = H\sum_{k=1}^K n^k$ is the total number of steps in the first $K$ rounds. 
\end{theorem}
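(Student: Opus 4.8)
The plan is to follow the optimism-based regret analysis of single-agent $Q$-learning \citep{jin2018q,bai2019provably}, adapted to account for the equal-weight aggregation and the resulting non-martingale structure. I organize the argument into four stages: (i) a concentration bound for the aggregated sampling error, (ii) an optimism (UCB) guarantee, (iii) a recursive regret decomposition, and (iv) summation via the step-size identities. Throughout, the guiding principle is that when $M=1$ the federated update \Cref{update_q_large_n} collapses to the sequential single-agent update \Cref{update_old_agg}, so every term carrying a factor $(M-1)$ should be traceable to the discrepancy between equal-weight and ideal predictable weighting.

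First I would build the concentration machinery. Fix $(x,a,h)$ and a round $k$ in Case~2. Conditioned on $\pi^k$ (fixed throughout the round) and on $V_{h+1}^k$ (determined before the round), the next states observed after visiting $(x,a,h)$ are i.i.d.\ draws from $\mathbb{P}_h(\cdot\mid x,a)$, so $\sum_{t}\bigl(V_{h+1}^k(x_{h+1,t})-[\mathbb{P}_h V_{h+1}^k](x,a)\bigr)$ over the round's visits is a genuine martingale sum. The obstruction is the factor $\alpha_{agg}/n_h^k$ that equal-weight aggregation attaches to each visit: since $n_h^k$ is a data-dependent stopping count, these weights are not predictable and the reweighted differences are no longer martingale differences. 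I would isolate $n_h^k$ as the common factor within the round and compare the equal-weight aggregate against an auxiliary aggregate in which the weight is replaced by its predictable surrogate; the gap is then controlled roundwise, exploiting the visit-cap \Cref{rough_num_ep}, which forces $n_h^k$ to be small relative to $N_h^k$. Summing the roundwise gaps produces the $T$-independent terms carrying the $(M-1)$ factor.

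With the concentration in hand I would prove optimism, $Q_h^k(x,a)\ge Q_h^\star(x,a)$ and hence $V_h^k\ge V_h^\star$, for all $(x,a,h,k)$ with probability at least $1-p$, by backward induction on $h$. Writing $Q_h^{k+1}-Q_h^\star$ from \Cref{update_q_large_n} (and \Cref{update_q_small_n} in Case~1) and using $Q_h^\star=r_h+\mathbb{P}_h V_{h+1}^\star$, the error unfolds into a weighted combination of the step-$(h+1)$ errors (nonnegative by induction), a weighted sampling term (bounded in the previous stage), and the accumulated bonus $\beta^k/2=\sum_t\theta_{t^k}^t b_t$; choosing $b_t=c\sqrt{H^3\iota/t}$ with $c$ large makes the bonus dominate the fluctuation, closing the induction. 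For the regret itself, optimism gives $\mbox{Regret}(T)\le\sum_{m,k,j}\bigl(V_1^k-V_1^{\pi^k}\bigr)(x_1^{m,k,j})$. Expanding $V_h^k-V_h^{\pi^k}$ at the played action through the update rule yields the usual recursion in which a step-$h$ error is bounded by a convex combination of the previous global error and the round's weighted step-$(h+1)$ errors, plus a bonus and a martingale correction, now summed over all $M$ agents' visits; telescoping in $h$ under the clip $V_h^k\le H$ gives the $H$-fold blow-up.

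Finally I would invoke the standard step-size identities $\sum_{i=1}^t\theta_t^i=1$, $\sum_{t\ge i}\theta_t^i\le 1+1/H$, and $\sum_{i}(\theta_t^i)^2\le 2H/t$. With $b_t=c\sqrt{H^3\iota/t}$, summing the bonus over visits gives $\sum_{x,a,h}\sqrt{H^3\iota\,N_h^{K+1}(x,a)}$, and a Cauchy--Schwarz / pigeonhole step over the $SAH$ triples using $\sum_{x,a,h}N_h^{K+1}(x,a)=MT$ yields the leading $\sqrt{H^4\iota MTSA}$. The Case~1 contributions and the roundwise equal-weight gaps assemble into $HSA(M-1)\sqrt{H^3\iota}+MH^2SA+H^4SA(M-1)$, all independent of $T$ and hence lower-order once $T$ is large, which is exactly the regime in which the linear speedup holds. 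The hard part will be the first stage: quantifying, uniformly over rounds and triples, the gap between the equal-weight aggregate and the ideal predictably-weighted martingale aggregate, and showing it accumulates only into the $T$-independent $(M-1)$ terms rather than contaminating the leading $\sqrt{MT}$ order. This hinges crucially on \Cref{rough_num_ep}, which keeps the non-martingale correction at constant (in $T$) size per state-action pair.
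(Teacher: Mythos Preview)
Your four-stage outline matches the paper's approach, but your accounting in stage~(i) is inverted and would derail the execution. The roundwise comparison of the equal weights $\tilde{\theta}_t^i$ to the ideal predictable weights $\theta_t^i$ does \emph{not} produce a $T$-independent $(M-1)$ residue; the paper (Lemma~\ref{lemma_concen_Hoeffding}) shows that $\bigl|\sum_i(\tilde{\theta}_t^i-\theta_t^i)X_i\bigr|=O(\sqrt{H^3\iota/t})$, the \emph{same} order as the ideal-weight Azuma term. The visit cap \Cref{rough_num_ep} enters not to make the correction constant in $T$, but to force $\tilde{\theta}_t^i/\theta_t^i\in[\exp(-1/H),\exp(1/H)]$ uniformly (Lemma~\ref{property_tilde_theta}(d),(e)); this makes each roundwise Azuma bound on the correction comparable to $\sum_{i\in\text{round}}\theta_t^i/\sqrt{i}$, and telescoping across rounds yields $O(H\sqrt{\iota}\sum_{i=1}^t\theta_t^i/\sqrt{i})=O(\sqrt{H^3\iota/t})$. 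So the full concentration lemma is a clean $O(\sqrt{H^3\iota/t})$ with no $(M-1)$ factor, and this feeds directly into the leading $\sqrt{H^4\iota MTSA}$ term, not into the burn-in.

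The three burn-in terms $MH^2SA$, $H^4SA(M-1)$, and $HSA(M-1)\sqrt{H^3\iota}$ all originate in stages~(iii)--(iv), from crudely bounding contributions at early visits with small $t_h^{m,k,j}$: respectively, visits with $t=0$ (at most $M$ per $(x,a,h)$ by \Cref{eq_rel_TK2}, see \Cref{split_delta_part3}); the $(V_{h+1}^{k^i}-V_{h+1}^\star)$ regrouping when $t\le(M-1)H(H+1)$ (see \Cref{split_delta_part4}); and bonuses at visits with $t\le M-1$ (see \Cref{split_delta_part7}). Your plan does not anticipate this burn-in bookkeeping. A secondary point: the paper applies concentration to $X_i=(\tilde{\mathbb{E}}_{x,a,h,i}-\mathbb{E}_{x,a,h})V_{h+1}^\star$, not to $V_{h+1}^k$; the remaining $(V_{h+1}^{k^i}-V_{h+1}^\star)$ piece is nonnegative by the optimism you prove in stage~(ii) and is propagated through the recursion rather than concentrated.
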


Theorem \ref{thm_regret_hoeffding} indicates that the total regret scales as $O(\sqrt{H^4\iota MTSA})+\Tilde{O}(M\mbox{poly}(H,S,A))$\footnote{$\Tilde{O}$ ignores logarithmic factors.}. 
\tcb{The overhead term $\Tilde{O}(M\mbox{poly}(H,S,A))$ is contributed by the $O(M)$ samples collected in the first stage, i.e., the burn-in cost. Such a burn-in cost is arguably inevitable in federated RL (e.g., \cite{woo2023blessing}).} 
When $M = 1$, our result recovers those in \cite{jin2018q} and \cite{bai2019provably}. \tcb{In the general federated setting, our algorithm enjoys a linear speedup in terms of $M$ when $T\geq \Omega(M\mbox{poly}(H,S,A))$ and the first term dominates the burn-in cost.}


\begin{proof}[Proof Sketch of \Cref{thm_regret_hoeffding}]
	
	First, for any given $(x,a,h)\in\sah$, we assign a global visiting index $i$ to each local visit before starting the $(k+1)$-th round, and denote the weight assigned to the $i$-th visit as $\Tilde{\theta}_{t_k}^i$ with $t^k = N_h^{k+1}(x,a)$. Specifically, for visits within the same round $k'$, we index them by $i\in [N_h^{k'}(x,a)+1:N_h^{k'+1}(x,a)]:=\Ic^{k'}$ according to a pre-defined order.  Then, for all visits within the first case,  $\Tilde{\theta}_{t^k}^i$ equals to
	$\theta_{t^k}^i$, and for all visits within round $k'$ in the second case, $\Tilde{\theta}_{t^k}^i = \sum_{i\in\Ic^{k'}}\theta_{t^k}^i/n_h^{k'}(x,a)$.

	The proof mainly consists of two major steps. 
	\textbf{Step 1} is to bound the global estimation error $Q_h^{k+1} - Q_h^\star$ for each round $k$. Based on the recursive updating rule, we can show that, with high probability, 
	$$0\leq Q_h^{k+1}(x,a) - Q_h^\star(x,a)\leq \theta_{t^k}^0 H + \sum_{k'=1}^k\sum_{i\in\Ic^{k'}} \tilde{\theta}_{t^k}^i (V_{h+1}^{k'} - V_{h+1}^\star)(x_{h+1,i})+ \beta_{t^k},$$ with $\beta_{t^k} = \sum_{i=1}^{t^k} \theta_{t^k}^ib_i$. 
	As shown in \Cref{lemma_rel_QQ_hoeffding}, it suffices to bound $\left|\sum_{i=1}^{t^k} \tilde{\theta}_{t^k}^iX_i\right|$ where 
	$X_i = V_{h+1}^{\star}(x_{h+1,i}) - \mathbb{E}\left[V_{h+1}^{\star}(x_{h+1})|(x_h,a_h) = (x,a)\right].$ Similar to the single-agent setting \citep{jin2018q}, $\{X_i\}_{i=1}^\infty$ is a martingale difference sequence. However, since our weight assignment for the $i$-th visit depends on the total number of visits in the same round, which is determined after that round completes, $\{\tilde{\theta}_{t^k}^i\}_i$ does not preserve the original martingale structure in $\{{\theta}_{t^k}^i\}_i$. Therefore, it necessitates novel approaches to bound the sum of \textbf{non-martingale differences}. \tcb{We would like to emphasize that the techniques required to bound those non-martingale differences are fundamentally different from the commonly used techniques in federated learning (FL), which usually construct an ``averaged parameter update path'' and then bound each local term's ``drift'' from it. This is because such bounding techniques in FL rely on certain assumptions that do not exist in federated RL. Due to the inherent randomness in the environment, even if the same policy is taken at all local agents, it may result in very different trajectories. Thus, it is hard to obtain an easy-to-track ``averaged parameter update path'' in federated RL, or a tight bound on the local terms' drifts from such averaged parameter update path.  We overcome this challenge by relating $\{\tilde{\theta}_{t^k}^i\}_i$ with $\{{\theta}_{t^k}^i\}_i$. Instead of bounding the local drift $\tilde{\theta}_{t^k}^i-{\theta}_{t^k}^i$ in each time step, we choose to group the ``drift'' terms based on the corresponding rounds and then leverage the round-wise equal weight assignment adopted in our algorithm to obtain a tight bound.} The detailed analysis is elaborated in \Cref{lemma_concen_Hoeffding}.

	\if{0}
	First, we decompose it as 
	$|\sum_{i=1}^t \tilde{\theta}_t^iX_i| \leq |\sum_{i=1}^t \theta_t^iX_i| + |\sum_{i=1}^t (\tilde{\theta}_t^i-\theta_t^i)X_i|,$ in which the first term can be bounded by applying Azuma-Hoeffding inequality on the sum of martingale differences. We further decompose the second summation into summation over different rounds. Then, for $\left|\sum_{i\in \Ic^{k'} } (\tilde{\theta}_t^i-\theta_t^i)X_i\right|$, we relate $\tilde{\theta}_t^i$ and $\theta_t^i$ with $\tilde{\theta}_{t_{k'}}^i$ $\theta_{t_{k'}}^i$, respectively, and show that
	$\left|\sum_{i\in \Ic^{k'}} (\tilde{\theta}_t^i-\theta_t^i)X_i\right| = \left|\sum_{i\in \Ic^{k'}} (\tilde{\theta}_{t_{k'}}^i-\theta_{t_{k'}}^i)X_i\right|\alpha^c(t_{k'}+1,t),$
	
	$\left|\sum_{i=t_2}^{t_3} (\tilde{\theta}_t^i-\theta_t^i)X_i\right| = \left|\sum_{i=t_2}^{t_3} (\tilde{\theta}_{t_3}^i-\theta_{t_3}^i)X_i\right|\alpha^c(t_3+1,t),$	which means that we can focus on bounding $\left|\sum_{i=t_2}^{t_3} (\tilde{\theta}_{t_3}^i-\theta_{t_3}^i)X_i\right|$ that is only related to information in the current round. In this situation, fixing $t_2$ and $t_3$, $\tilde{\theta}_{t_3}^i = (\sum_{i = t_2}^{t_3} \theta_{t_3}^i)/(t_3-t_2+1)$ can be treated as constants. So, Azuma-Hoeffding inequality can be applied to finding an upper bound for this quantity. Summing up the upper bounds for the first summation and all the round-wise summations split from the second summation, we can get the desired upper bound for the sum of non-martingale difference.
	\fi
	Built upon the estimation error bound obtained in Step 1, 
	\textbf{Step 2} then utilizes the recursive Bellman equation to relate the total learning error among all agents in round $k$ at step $h$ with that at step $h+1$ (see \Cref{proof_h_regret_step2}), which directly translates into a regret upper bound. The detailed proof is deferred to \Cref{appx:proof_regret_h}. 
\end{proof}

Next, we discuss the communication cost under \Cref{alg_hoeffding_server,alg_hoeffding_agent} as follows.
\begin{theorem}[Communication Cost]\label{thm_comm_cost_hoeffding}
	Under \Cref{alg_hoeffding_server,alg_hoeffding_agent}, for a given number of steps $T$, the total number of rounds must satisfy
	$$K\leq\max\left\{\frac{HSA}{\log\left(1+\frac{1}{2MH(H+1)}\right)}\log \frac{T}{H^2(H+1)M} + H^2(H+1)MSA, H^2(H+1)SAM
	\right\}.$$
\end{theorem}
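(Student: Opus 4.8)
The plan is to bound $K$ by charging each round to a single ``triggering'' tuple $(x,a,h)$ and arguing that the global visit count of that tuple must grow appreciably, so that only logarithmically many rounds can be charged to any fixed tuple. The equality clause of \Cref{rough_num_ep} guarantees that every round $k\in[K]$ admits at least one tuple $(x,a,h)\in\sah$ and some agent $m$ for which $n_h^{m,k}(x,a)=\max\{1,\lfloor N_h^k(x,a)/(MH(H+1))\rfloor\}$; I would fix one such tuple per round and charge the round to it. Writing $K_\tau$ for the number of rounds charged to $\tau=(x,a,h)$, we then have $K=\sum_{\tau\in\sah}K_\tau$, a sum over at most $SAH$ tuples, so it suffices to bound each $K_\tau$.

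For a fixed $\tau$ I would split the rounds charged to it into two regimes according to whether $N_h^k(x,a)<MH(H+1)$ or $N_h^k(x,a)\ge MH(H+1)$ at the start of the round. In the first (small) regime the triggering condition forces the triggering agent to contribute at least one fresh visit, so the visit count of $\tau$ grows by at least $1$ per charged round; since all these rounds start with count below $MH(H+1)$, at most $MH(H+1)$ of them can occur. In the second (large) regime we have $\lfloor N_h^k(x,a)/(MH(H+1))\rfloor\ge 1$, and using the elementary bound $\lfloor y\rfloor\ge y/2$ for $y\ge1$, the triggering agent alone forces $N_h^{k+1}(x,a)\ge N_h^k(x,a)\,(1+\tfrac{1}{2MH(H+1)})$, where visits made during rounds charged to other tuples only accelerate this growth. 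Hence after $r$ large rounds the count is at least $MH(H+1)\,(1+\tfrac{1}{2MH(H+1)})^{r}$.

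Because the count of any single tuple never exceeds the total number of step-$h$ samples collected in the first $K$ rounds, $N_h^k(x,a)\le T/H$, the number of large rounds charged to $\tau$ is at most $\frac{\log(T/(MH^2(H+1)))}{\log(1+1/(2MH(H+1)))}=\frac{1}{\log(1+1/(2MH(H+1)))}\log\frac{T}{H^2(H+1)M}$. Adding the two regime bounds and summing over the $SAH$ tuples gives $K\le H^2(H+1)MSA+\frac{HSA}{\log(1+1/(2MH(H+1)))}\log\frac{T}{H^2(H+1)M}$, which is the first argument of the stated maximum. For the edge case $T<H^2(H+1)M$ the logarithm is negative, but here one observes that no tuple can even reach the threshold $MH(H+1)$ (its total count is at most $T/H<MH(H+1)$), so there are no large rounds and $K\le H^2(H+1)MSA$ outright; taking the larger of the two bounds yields the claim.

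The step I expect to be the main obstacle is the careful bookkeeping around the floor function: converting $\lfloor N_h^k(x,a)/(MH(H+1))\rfloor$ into the clean multiplicative factor $1+\tfrac{1}{2MH(H+1)}$, correctly isolating the borderline small regime produced by the $\max\{1,\cdot\}$ truncation, and verifying that the per-tuple counting remains valid even though a tuple's visit count also increases during rounds charged to other tuples (which only helps). Once these points are settled, the geometric-series estimate, the summation over tuples, and the treatment of the $\max$ are routine.
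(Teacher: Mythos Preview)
Your high-level strategy (charge each round to a triggering tuple, split into the ``small'' regime $N_h^k(x,a)<MH(H+1)$ and the ``large'' regime, and use multiplicative growth in the large regime) is exactly the paper's argument. The floor-function step and the observation that visits in uncounted rounds only help are both fine.

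There is, however, one concrete slip that costs you a factor of $M$ inside the logarithm. You write $N_h^k(x,a)\le T/H$, but $N_h^k(x,a)$ is the \emph{aggregate} visit count over all $M$ agents, whereas $T$ in this paper is the per-agent step count; the correct inequality is $N_h^k(x,a)\le MT/H$. Plugging this into your geometric bound $MH(H+1)(1+\tfrac{1}{2MH(H+1)})^r\le N_h^{K+1}(x,a)$ yields $r\le \log\!\big(T/(H^2(H+1))\big)/\log(1+\tfrac{1}{2MH(H+1)})$, not $\log\!\big(T/(H^2(H+1)M)\big)/\log(\cdot)$, so after summing over tuples you obtain the theorem with the $M$ missing from the denominator of the log. (Your edge-case paragraph has the same issue: $T<H^2(H+1)M$ does not force $N_h^k(x,a)<MH(H+1)$.)

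The paper closes this gap by tracking a different quantity. Instead of the global count $N_h^k(x,a)$, it sums the \emph{triggering agent's} local visits $n_s:=n_h^{m_s,k_s}(x,a)$ over the rounds $k_1<\dots<k_g$ charged to the tuple, and then invokes the synchronization assumption: since agent $m_s$ needs at least $n_s$ episodes to make $n_s$ visits, and all $M$ agents run the same number of episodes per round, round $k_s$ contributes at least $Mn_s$ episodes, whence $\hat T=MT\ge HM\sum_s n_s$. The same geometric recursion you derived (using $N_s\ge\sum_{s'\le s}n_{s'}$ and $n_{s+1}\ge N_s/(2MH(H+1))$) then gives $\sum_s n_s\ge H(H+1)M(1+\tfrac{1}{2MH(H+1)})^{g-H(H+1)M}$, and combining the two produces exactly the stated bound with $T/(H^2(H+1)M)$. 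So your plan is sound; to recover the precise constant you need to replace the crude cap on $N_h^k$ by this synchronization step.
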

Theorem \ref{thm_comm_cost_hoeffding} indicates that, when $T$ is sufficiently large, $K = O\left(MH^3SA\log (T/M)\right)$. Since the total number of communicated scalars is $O(MHS)$ in each round, the total communication cost scales in $O(M^2H^4S^2A\log (T/M))$. 

\begin{proof}[Proof Sketch of \Cref{thm_comm_cost_hoeffding}]
	Due to the fact that the equality in \Cref{rough_num_ep} is met for at least one agent, by the Pigeonhole principle, during the first $K$ rounds, there exists one tuple $(x, a, h, m)$ such that the equality in \Cref{rough_num_ep} holds for at least $\Omega(K/(HSAM))$ rounds. In these rounds, as $(x,a,h)$ are visited at least once in each round, at most $O(i_0)$ rounds belong to the first case. So, when $K$ is large, there are at least $\Omega(K/(HSAM))$ rounds in the second case and $n_h^k(x,a)  = \hat{O}(N_h^k(x,a))$ in these rounds. Thus we have $HN_h^{K+1}(x,a)/M$, which is smaller than or equal to $T$, and roughly exponential in $K$ when $K$ is large. A detailed proof can be found in \Cref{sec:proof_comm}. 
\end{proof}

\section{Extension to Bernstein-type Algorithm}
\label{sec:extend_bernstein}
The Bernstein-type algorithm differs from FedQ-Hoeffding on the construction of the upper confidence bound. Similar to the design in \cite{jin2018q}, we define 
\begin{align}\label{eqn:ucb_b}
	\beta_t(x,a,h) = c'\min\left\{\sqrt{\frac{H\iota}{t}(W^t(x,a,h)+H)}+\iota\frac{\sqrt{H^7SA}+\sqrt{MSAH^{6}}}{t},\sqrt{\frac{H^3\iota}{t}}\right\},
\end{align}in which $c'>0$ is a positive constant and $W^t(x,a,h)$ is a variance estimator of $X_i$s whose specific form is introduced in \Cref{appx:fedQ_B}. FedQ-Bernstein then replaces \tcb{$\beta^k$} in \Cref{update_q_small_n} and \Cref{update_q_large_n} by $\Tilde{\beta} = \beta_{t^k}(x,a,h) - \alpha^c(t^{k-1}+1,t^k)\beta_{t^{k-1}}(x,a,h)$. In terms of communication, during round $k$, in addition to all the quantities sent in \Cref{alg_hoeffding_agent}, each agent $m$ sends $\{\mu_h^{m,k}(x,\pi_h^k(x))\}_{x,h}$ to the central server where $\mu_{h}^{m,k}(x,a) = \frac{1}{n_h^{m,k}(x,a)}\sum_{j=1}^{n^{k}} \left[V_{h+1}^k\left(x_{h+1}^{m,k,j}\right)\right]^2\mathbb{I}[(x_h^{m,k,j},a_h^{m,k,j}) = (x,a)].$ The complete algorithm description can be found in \Cref{appx:fedQ_B}.

As FedQ-Bernstein uses tighter upper confidence bounds compared with FedQ-Hoeffding, it enjoys a reduced regret upper bound, as stated in \Cref{thm_regret_bernstein} below.
\begin{theorem}[Regret Upper Bound for FedQ-Bernstein]\label{thm_regret_bernstein}
	Let \tcb{$\tilde{C} = 1/(H(H+1))$}, $\iota = \max\{\iota_0,\iota_1\}$ with $\iota_0 = \log(2SA(T_0+HM)(1+\tilde{C})/p),\iota_1 = \log\frac{2K_0SAH(T_0/H + M)(1+\tilde{C})}{p}$, and $p\in(0,1)$. 
	For \Cref{alg_bernstein_server,alg_bernstein_agent} in \Cref{appx:fedQ_B} with the upper confidence bound defined in \Cref{eqn:ucb_b}, there exists a constant $c'>0$ such that, for any $K\in [K_0], p\in (0,1)$, with probability at least $1-p$,
	\begin{align*}
		\mbox{Regret}(T)&\leq  O\left(MH^2SA+H^4SA(M-1)+HSA(M-1)\sqrt{H^3\iota}\right.\\
		& \quad\left. +\iota^2\sqrt{H^9S^3A^3}+\iota^2\sqrt{MS^3A^3H^{8}}+\sqrt{H^3SAMT\iota^2}\right).
	\end{align*}
	Here, $T = HJ$ and $J$ is the total number of of episodes generated by an agent in the first $K$ rounds.
\end{theorem}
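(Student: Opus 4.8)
The plan is to follow the same two-step skeleton as the proof of \Cref{thm_regret_hoeffding}, replacing the Hoeffding bonus with the variance-aware Bernstein bonus and tracking the extra variance-estimation terms. In \textbf{Step 1} I would establish the per-round estimation-error bound. Using the equal-weight aggregation rules \Cref{update_q_small_n} and \Cref{update_q_large_n} with $\beta^k$ replaced by $\tilde\beta = \beta_{t^k}(x,a,h) - \alpha^c(t^{k-1}+1,t^k)\beta_{t^{k-1}}(x,a,h)$, the recursion telescopes so that the accumulated bonus at the end of round $k$ is exactly $\beta_{t^k}(x,a,h)$ rather than a sum of the form $\sum_i \theta^i_{t^k} b_i$. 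I would then reuse the global visiting index $i$ and the weights $\tilde\theta_{t^k}^i$ from the Hoeffding proof to write, with high probability, $0\le Q_h^{k+1}(x,a)-Q_h^\star(x,a)\le \theta_{t^k}^0 H + \sum_{k'}\sum_{i\in\Ic^{k'}}\tilde\theta_{t^k}^i(V_{h+1}^{k'}-V_{h+1}^\star)(x_{h+1,i}) + \beta_{t^k}(x,a,h)$, the only change being the bonus term.

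The heart of the Bernstein improvement is the concentration of $\left|\sum_{i=1}^{t^k}\tilde\theta_{t^k}^i X_i\right|$ against the variance rather than against $H$, where $X_i=V_{h+1}^\star(x_{h+1,i})-[\mathbb{P}_hV_{h+1}^\star](x,a)$. Here two difficulties compound. First, as in the Hoeffding case, $\{\tilde\theta_{t^k}^i\}_i$ is \textbf{not} a martingale weight sequence, so I cannot apply a Freedman inequality directly; I would again relate $\tilde\theta_{t^k}^i$ to $\theta_{t^k}^i$ round by round via the common factor $\alpha^c(\cdot,\cdot)$ that reduces each block to the in-round weights $\tilde\theta_{t^{k'}}^i$, then apply a Bernstein/Freedman-type inequality to each round-wise block, where the equal-weight assignment makes the weights constant and the second-moment sum $\sum_i(\tilde\theta^i_{t^k})^2$ tractable. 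Second, the resulting bound is in terms of the \emph{true} conditional variance $\mathrm{Var}_{\mathbb{P}_h}(V_{h+1}^\star)(x,a)$, which must be replaced by the computable estimator $W^t(x,a,h)$ assembled from the aggregated first and second moments $v_{h+1}^{m,k}$ and $\mu_{h}^{m,k}$. I would therefore show $W^t$ concentrates on $\mathrm{Var}_{\mathbb{P}_h}(V_{h+1}^\star)(x,a)$ up to an $O\!\big(\tfrac{1}{t}(\sqrt{H^7SA}+\sqrt{MSAH^6})\big)$ error, which is precisely the second additive term inside the min in \Cref{eqn:ucb_b}; the $\sqrt{M}$ dependence enters because the equal-weight aggregation spreads each round's visits over up to $M$ agents.

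Given the per-round bound, \textbf{Step 2} mirrors the Hoeffding argument: I would sum the estimation errors over all $(x,a,h)$ and rounds, unfold the recursive Bellman relation to pass the learning error at step $h$ to step $h+1$, and collect terms. The decisive gain over \Cref{thm_regret_hoeffding} comes from invoking the \emph{law of total variance}: the sum over a trajectory of the one-step conditional variances $\mathrm{Var}_{\mathbb{P}_h}(V_{h+1}^\star)$ is $O(H^2)$ rather than $O(H^3)$, which after a Cauchy--Schwarz step turns the leading regret term into $\sqrt{H^3SAMT\iota^2}$, saving a factor of $\sqrt{H}$. The lower-order terms $\iota^2\sqrt{H^9S^3A^3}$ and $\iota^2\sqrt{MS^3A^3H^8}$ arise from summing the variance-estimation error part of the bonus over all $(x,a,h)$ and rounds, while the burn-in terms $MH^2SA+H^4SA(M-1)+HSA(M-1)\sqrt{H^3\iota}$ are inherited verbatim from the first-case / early-stage analysis.

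I expect the \textbf{main obstacle} to be the simultaneous handling of the non-martingale weights and the variance estimator in the Bernstein concentration: unlike the single-agent setting, I must control $W^t$ while it is itself built from equal-weight round-wise averages, so the variance-estimation error has to be bounded without access to the per-visit next states, and the round-wise reduction that worked for the first-moment sum must be carried through for the second-moment sum as well, with the $\sqrt{M}$ overhead accounted for carefully.
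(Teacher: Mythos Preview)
Your two-step skeleton is right and matches the paper, but your execution of the Bernstein concentration diverges from what the paper does, and you miss one structural point.

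For the concentration of $\bigl|\sum_i\tilde\theta_t^i X_i\bigr|$, the paper does \emph{not} push a Bernstein/Freedman inequality through the non-martingale weights block by block. It keeps the same split as in the Hoeffding proof, $\sum_i\tilde\theta_t^i X_i=\sum_i\theta_t^i X_i+\sum_i(\tilde\theta_t^i-\theta_t^i)X_i$, applies Azuma--Bernstein \emph{only} to the martingale part $\sum_i\theta_t^i X_i$ (giving $O(\sqrt{H\iota[\mathbb{V}_hV_{h+1}^\star]/t}+\iota H^2/t)$), and simply reuses the Hoeffding bound \Cref{event_gap_mart_dif} for the correction. Because $\tilde\theta_t^i/\theta_t^i\le e^{1/H}$, that correction is already $O(H\sqrt{\iota/t})$ and is absorbed into the ``$+H$'' under the square root in \Cref{eqn:ucb_b}; your block-wise Bernstein is unnecessary. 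Relatedly, your picture of the variance estimator is off: $W^t(x,a,h)$ is the \emph{plain} sample variance of $V_{h+1}^{k^i}(x_{h+1,i})$ over all $t$ visits with uniform weights $1/t$, so there is no ``second-moment non-martingale'' issue. The $\sqrt{M}$ overhead in \Cref{eqn:ucb_b} does not come from spreading visits across agents in the estimator; it enters through a separate recursive lemma (Lemma~\ref{Lemma_rel_hh+1w_bernstein}) that bounds $\tfrac{1}{t}\sum_i(V_{h+1}^{k^i}-V_{h+1}^\star)(x_{h+1,i})$, which is what controls the gap between the sample variance computed with $V^{k^i}$ versus with $V^\star$.

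There is also a circularity you do not address. To get $Q_h^{K'}\ge Q_h^\star$ the bonus $\beta_t$ must dominate the concentration error, which is expressed in the true variance; replacing it by $W^t$ requires $V_{h+1}^{k^i}\ge V_{h+1}^\star$ (so that the absolute value in the $|P_3-P_4|$ bound drops), which is precisely optimism at \emph{earlier} rounds. The paper closes this loop by induction on $K'$ in the proof of Lemma~\ref{lemma_ineq_rel_bernstein}: first establish a one-sided Hoeffding bound (Lemma~\ref{lemma_ineq_rel0_bernstein}), then assume optimism up to $K'_0$ to control the variance-estimation error and deduce optimism at $K'_0+1$. Your writeup needs this induction to actually close Step~1. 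Step~2 is as you describe: law of total variance (Lemma~\ref{lemma_sum_v}) plus Cauchy--Schwarz on $\sum\beta_t$, with an additional passage from $[\mathbb{V}_hV_{h+1}^\star]$ to $[\mathbb{V}_hV_{h+1}^{\pi^k}]$ that costs a factor controlled by $\sum_h\sum_k\delta_{h+1}^k$.
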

\Cref{thm_regret_bernstein} improves the regret upper bound in \Cref{thm_regret_hoeffding} by a factor of $\sqrt{H}$, and also enjoys a linear speedup in $M$ compared with its single-agent counterparts \citep{jin2018q,bai2019provably} \tcb{when $T\geq \tilde{\Omega}(M\mbox{poly}(H,S,A))$ and the first term becomes the dominating term. Here $\tilde{\Omega}$ hides a log factor that takes the form $\log^2 (MT\mbox{poly}(H,S,A))$).} 

We also remark that the upper bound in \Cref{thm_comm_cost_hoeffding} applies to FedQ-Bernstein as well. Since the amount of shared data is $O(MHS)$ in each round for both algorithms, FedQ-Bernstein has the same order of communication cost upper bound as FedQ-Hoeffding.

\section{Conclusion}
In this paper, we have developed model-free algorithms in federated reinforcement
learning with provably linear regret speedup and logarithmic communication cost. More specifically, two federated $Q$-learning algorithms - FedQ-Hoeffding and FedQ-Bernstein - have been proposed, and we proved that they achieve regret of $\Tilde{O}(\sqrt{H^4SAMT})$ and $\Tilde{O}(\sqrt{H^3SAMT})$ respectively with communication cost $O(M^2H^4S^2A\log (T/M))$. Technically, our algorithm design features a novel equal weight assignment during global information aggregation, and we developed new approaches to characterizing the concentration properties for non-martingale differences, which could be of broader applications for other RL problems.
\bibliography{main.bib,bandits}
\bibliographystyle{agsm}

\newpage
\appendix
\section{Related Works}\label{related}
\textbf{Single-agent episodic MDPs.} Significant contributions have been made in both model-based and model-free frameworks. In the model-based category, a series of algorithms have been proposed by \cite{auer2008near}, \cite{agrawal2017optimistic}, \cite{azar2017minimax}, \cite{kakade2018variance}, \cite{agarwal2020model}, \cite{dann2019policy}, \cite{zanette2019tighter}, and \cite{zhang2021reinforcement}, with more recent contributions from \cite{zhou2023sharp} and \cite{zhang2023settling}. Notably, \cite{zhang2023settling} proved that a modified version of MVP (proposed by \cite{zhang2021reinforcement}) achieves a regret of $\Tilde{O}\left(\min \{\sqrt{SAH^2T},T\}\right)$ which matches the minimax lower bound. 
Within the model-free framework, \cite{jin2018q} proposed a Q-learning with UCB exploration algorithm, achieving regret of $\Tilde{O}\left(\sqrt{SAH^3T}\right)$, which has been advanced further by \cite{yang2021q}, \cite{zhang2020almost}, \cite{li2021breaking} and \cite{menard2021ucb}. The latter three have introduced algorithms that achieve minimax regret of $\Tilde{O}\left(\sqrt{SAH^2T}\right)$.


\textbf{Federated and distributed RL}.
Existing literature on federated and distributed RL algorithms sheds light on different aspects. \cite{guo2015concurrent} showed that applying concurrent RL to identical MDPs can linearly speed up sample complexity. \cite{agarwal2021communication} proposed a parallel RL algorithm with low communication cost. \cite{jin2022federated}, \cite{khodadadian2022federated}, \cite{fan2023fedhql} and \cite{woo2023blessing} investigated federated Q-learning algorithms in different settings. \cite{fan2021fault}, \cite{wu2021byzantine} and \cite{chen2023byzantine} focused on robustness. Particularly, \cite{chen2023byzantine} proposed algorithms in both offline and online settings, obtaining near-optimal sample complexities and achieving a superior robustness guarantee. \cite{doan2019finite}, \cite{doan2021finite}, \cite{chen2021multi}, \cite{sun2020finite}, \cite{wai2020convergence}, \cite{wang2020decentralized},  \cite{zeng2021finite} and \cite{liu2023distributed} analyzed the convergence of decentralized temporal difference algorithms. 
\cite{fan2021fault} and \cite{chen2021communication} studied communication-efficient policy gradient algorithms. 
\cite{shen2023towards}, \cite{shen2020asynchronous} and \cite{chen2022sample} have analyzed the convergence of distributed actor-critic algorithms. \cite{assran2019gossip}, \cite{espeholt2018impala} and \cite{mnih2016asynchronous} explored federated actor-learner architectures. 

\textbf{RL with low switching cost and batched RL}. Research in RL with low-switching cost aims to minimize the number of policy switching while maintaining comparable regret bounds to its fully adaptive counterparts and can be applied to federated RL. In batched RL (e.g., \cite{perchet2016batched}, \cite{gao2019batched}), the agent sets the number of batches and length of each batch upfront, aiming for fewer batches and lower regret. \cite{bai2019provably} first introduced the problem of RL with low-switching cost and proposed a $Q$-learning algorithm with lazy update, achieving $\Tilde{O}(SAH^3\log T)$ switching costs. This work was advanced by \cite{zhang2020almost}, which improved the regret upper bound. Besides, \cite{wang2021provably} studied the problem of RL under the adaptivity constraint. Recently, \cite{qiao2022sample} proposed a model-based algorithm with $\Tilde{O}(\log \log T)$ switching costs. \cite{zhang2022near} proposed a batched RL algorithm that is well-suited for the federated setting.

\tcb{\textbf{Federated/distributed bandits.}}
\tcb{Federated bandits with low communication costs have been studied extensively recently in the literature~\citet{wang2019distributed,li2022asynchronous,shi2021federated,shi2021aistats,huang2021federated,wang2022federated,he2022simple,li2022federated}. \citet{shi2021federated} and \citet{shi2021aistats} investigated efficient client-server communication and coordination protocols for federated MAB without and with personalization, respectively. 
	\citet{wang2019distributed} investigated communication-efficient distributed linear bandits, while \citet{huang2021federated} studied federated linear contextual bandits. \citet{li2022asynchronous} focused on the asynchronous communication protocol.}


\tcb{When data privacy is explicitly considered, \citet{li2020federated,Zhu_2021} studied federated bandits with item-level differential privacy (DP) guarantee. \citet{dubey2022private} considered private and byzantine-proof cooperative decision making in multi-armed bandits. \citet{dubey2020differentially,zhou2023differentially} considered the linear contextual bandit model with joint DP guarantee. \citet{huang2023federated} recently investigated linear contextual bandits under user-level DP constraints. Private distributed bandits with partial feedback was also studied in \citet{li2022differentially}. }

\section{Auxiliary Lemmas}
\label{auxi_lemma}
In this section, we introduce some useful lemmas which will be used in the proofs. Before starting, we describe the global indexing mechanism mentioned in \Cref{sec:results}. Global visiting indices $i = 1,2\ldots$ are assigned, based on the chronological order, to the visits of any given $(x,a,h)\in\sah$.
With this, we can establish a map between the global visiting index $i$, and $k,m,j$, where $k$ is the round index, $m$ is the agent index and $j$ is the episode index for a given round and a given agent. For $(x,a,h)$, we define functions that recover $k,m,j$ from $i$ as $k_h(i;x,a),m_h(i;x,a),j_h(i;x,a).$ When there is no ambiguity, we will use the simplified notations $k^i,m^i,j^i$. The visiting indices are utilized to construct a sequence, ensuring that quantities with smaller indices are observed prior to those with larger indices. Under the synchronization and zero-latency assumption, we have the following formulas for $m^i,k^i,j^i$.
$$k_h(i;x,a) = \sup\left\{k\in \mathbb{N}_+: N_h^k(x,a)<i\right\},$$
$$j_h(i;x,a) = \sup\left\{j\in \mathbb{N}_+: \sum_{j' = 1}^{j-1}\sum_{m=1}^M \mathbb{I}\left[(x,a) = (x_h^{m,k^i,j'},a_h^{m,k^i,j'})\right]<i - N_h^{k^i}(x,a)\right\},$$
\begin{align*}
	m_h(i;x,a) &= \sup\left\{m\in \mathbb{N}_+: \sum_{m' = 1}^{m-1} \mathbb{I}\left[(x,a) = (x_h^{m',k^i,j^i},a_h^{m',k^i,j^i})\right]\right.\\
	&\quad\left.<i - N_h^{k^i}(x,a) - \sum_{j' = 1}^{j^i-1}\sum_{m=1}^M \mathbb{I}\left[(x,a) = (x_h^{m,k^i,j'},a_h^{m,k^i,j'})\right]\right\}.
\end{align*}

We also introduce a new notation $\hat{T} = MT$ that represents the total number of samples generated by all the agents.

Next, we begin to introduce the lemmas. First, Lemma \ref{lemma_relationship_TK} establishes some relationships between some quantities used in \Cref{alg_hoeffding_server,alg_hoeffding_agent}.
\begin{lemma}\label{lemma_relationship_TK}
	\tcb{Denote $\tilde{C}= 1/(H(H+1))$.} The following relationships hold for both algorithms.
	\begin{itemize}
		\item[(a)] $K\leq K_0$.
		\item[(b)] $N_h^{K}(x,a)\leq T_0/H$.
		\item[(c)] For any $(x,a,h,k)\in \sah\times [K]$, we have
		\begin{equation}\label{eq_rel_TK1}
			n_h^{m,k}(x,a)\leq \max\left\{1,\frac{\Tilde{C}N_h^k(x,a)}{M}\right\},\forall m\in [M].
		\end{equation}
		and
		\begin{equation}\label{eq_rel_TK2}
			n_h^{k}(x,a)\leq \max\{M,\Tilde{C}N_h^k(x,a)\}.
		\end{equation}
		If $N_h^{k}(x,a)\geq i_0$,
		$$n_h^k(x,a)\leq \Tilde{C}N_h^{k}(x,a).$$
		\item[(d)] For any $(x,a,h)\in \sah$, $N_h^{K+1}(x,a)\leq (1+\tilde{C})T_0/H + M$.
		\item[(e)] $\hat{T}\leq (1+\tilde{C})T_0 + HM$.
	\end{itemize}
	
\end{lemma}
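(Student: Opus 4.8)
The plan is to read parts (a) and (b) directly off the server's while-loop, obtain part (c) from the per-round visit cap, derive (d) by combining (b) and (c), and finally establish (e) via a careful bound on the number of episodes in the final round.

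First I would dispatch (a) and (b). Part (a) is immediate: in \Cref{alg_hoeffding_server} the server executes round $k$ only while $k\le K_0$, so the last executed round index $K$ satisfies $K\le K_0$. For (b), round $K$ is entered only if $H\sum_{k'=1}^{K-1}Mn^{k'}<T_0$, i.e. the total number of episodes run by all agents before round $K$ equals $M\sum_{k'=1}^{K-1}n^{k'}<T_0/H$. Since each episode contributes exactly one visit at step $h$, $N_h^K(x,a)$ is bounded by this episode count, giving $N_h^K(x,a)\le T_0/H$.

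Next, part (c). The first inequality \Cref{eq_rel_TK1} is just the termination guarantee \Cref{rough_num_ep} combined with $\lfloor z\rfloor\le z$ and $\tilde C=1/(H(H+1))$. For \Cref{eq_rel_TK2} I would sum over $m$ and split into cases: if $\tilde C N_h^k(x,a)\le M$ each of the $M$ summands equals $1$, so $n_h^k\le M$; otherwise each summand is at most $\tilde C N_h^k/M$, so $n_h^k\le\tilde C N_h^k$; together these give $n_h^k\le\max\{M,\tilde C N_h^k\}$. When $N_h^k\ge i_0=2MH(H+1)$ we have $\tilde C N_h^k\ge 2M>M$, so the maximum is attained by the second term, yielding $n_h^k\le\tilde C N_h^k$. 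Part (d) then follows from $N_h^{K+1}=N_h^K+n_h^K$, bounding $n_h^K\le\max\{M,\tilde C N_h^K\}\le M+\tilde C N_h^K$ and invoking (b).

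The crux is part (e), and I expect it to be the only genuinely delicate point. Writing $\hat T=MH\sum_{k=1}^K n^k$ and using the entry condition $MH\sum_{k=1}^{K-1}n^k<T_0$ for round $K$, it suffices to show the final round is short, namely $MHn^K\le\tilde C T_0+HM$. The main obstacle is that a naive bound on $n^K=\sum_x n_h^{m,K}(x,\pi_h^K(x))$ through the per-state caps produces a spurious factor of $S$, since the $\max\{1,\cdot\}$ contributes for every visited state. The observation that removes it is that the round terminates only at the \emph{end} of an episode: after the first $n^K-1$ episodes no agent has yet reached any cap, so for every agent and every $(x,a,h)$ the count is \emph{strictly} below its cap, i.e. at most $\max\{1,\lfloor N_h^K(x,a)/(MH(H+1))\rfloor\}-1\le N_h^K(x,a)/(MH(H+1))$. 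Summing this refined per-state bound over the states visited at step $h$ by one agent, the $-1$ cancels the $\max\{1,\cdot\}$ overcount, leaving $n^K-1\le\frac{1}{MH(H+1)}\sum_{(x,a)}N_h^K(x,a)=\frac{1}{MH(H+1)}M\sum_{k'=1}^{K-1}n^{k'}\le\frac{T_0}{MH^2(H+1)}=\frac{\tilde C T_0}{MH}$ (the case $n^K=1$ being trivial). Multiplying by $MH$ and adding the first-$(K-1)$-round budget gives $\hat T<T_0+\tilde C T_0+HM=(1+\tilde C)T_0+HM$. The remaining steps are routine bookkeeping; the one subtlety worth stating carefully is precisely this cancellation, which hinges on the cap being checked only between episodes.
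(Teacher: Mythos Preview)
Your proposal is correct and follows the paper's outline, which simply asserts that (a)--(c) are obvious from the algorithm and that (d), (e) follow directly from (b) and (c). Your argument for (e)---using that after $n^K-1$ episodes every agent's counts are strictly below cap, so the ``$-1$'' absorbs the $\max\{1,\cdot\}$ and the per-state bounds sum to $\frac{1}{MH(H+1)}\sum_{(x,a)}N_h^K(x,a)$---supplies precisely the detail needed to obtain the stated $+HM$ rather than the $+HMSA$ that a naive summation of (c) over states would give.
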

\begin{proof}[Proof of Lemma \ref{lemma_relationship_TK}]
	(a)-(c) are obvious given \Cref{alg_hoeffding_server,alg_bernstein_agent}. (d) and (e) can be directly obtained from (b) and (c).
\end{proof}

Next, Lemma \ref{property_theta} provides some properties about $\theta_t^i$'s.
\begin{lemma}\label{property_theta}
	(Lemma 4.1 in \cite{jin2018q} and beyond) The following properties hold for all $t\in\mathbb{N}_+$ for both algorithms.
	\begin{itemize}
		\item[(a)] $1/\sqrt{t}\leq \sum_{i=1}^t\theta_t^i/\sqrt{i}\leq 2/\sqrt{t},$ which implies that $\beta_t\in [2c\sqrt{H^3\iota/t},4c\sqrt{H^3\iota/t}]$, $\forall t\in\mathbb{N}_+.$
		\item[(b)] $\max_{i\in [t]}\theta_t^i\leq 2H/t$.
		\item[(c)] $\sum_{i=1}^t \left(\theta_t^i\right)^2\leq 2H/t.$
		\item[(d)] $\sum_{t=i}^\infty \theta_t^i = 1+1/H.$
		\item[(e)] For any $t\in\mathbb{N}_+$ and 
		$i\in [t] - \{t\}$,
		$\theta_t^{i+1}/\theta_t^i = 1+H/i>1.$
		\item[(f)] For both algorithms, for any $t\in \mathbb{N}_{+}$ and $(x,a,h)\in \sah$, if $i_1,i_2\in [t]$, $k_h(i_1,x,a) = k_h(i_2,x,a)$ and $N_h^{k_h(i_1,x,a)}(x,a)\geq i_0$ we have that
		$\theta_t^{i_1}/\theta_t^{i_2}\leq \exp(1/H)$.
	\end{itemize}
\end{lemma}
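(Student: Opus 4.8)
The plan is to derive all six properties from the recursive structure of the weights. Expanding $1-\alpha_{i'} = (i'-1)/(H+i')$ in the product defining $\theta_t^i$ gives the closed form $\theta_t^i = (H+1)\tfrac{(t-1)!\,(H+i-1)!}{(i-1)!\,(H+t)!}$, which I would record alongside two structural facts used repeatedly: the recursion $\theta_t^t = \alpha_t$ and $\theta_t^i = (1-\alpha_t)\theta_{t-1}^i$ for $i<t$, and the normalization $\sum_{i=1}^t \theta_t^i = 1$. The normalization follows by induction, since $\sum_{i=1}^t\theta_t^i = \alpha_t + (1-\alpha_t)\sum_{i=1}^{t-1}\theta_{t-1}^i = \alpha_t + (1-\alpha_t)=1$, with the base case using $\alpha_1=1$. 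Parts (a)--(d) are exactly Lemma 4.1 of \cite{jin2018q}, which I would reprove for completeness; parts (e)--(f) are the new ingredients required for the federated analysis.

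For (e), a one-line computation from the definition gives $\theta_t^{i+1}/\theta_t^i = \alpha_{i+1}/[\alpha_i(1-\alpha_{i+1})] = (H+i)/i = 1+H/i$, which simultaneously proves the stated identity and the strict monotonicity $\theta_t^1 < \theta_t^2 < \cdots < \theta_t^t$. For (a), the lower bound is immediate from $\sum_i\theta_t^i=1$ and $1/\sqrt{i}\geq 1/\sqrt{t}$ for $i\leq t$; the upper bound follows by induction on $t$ via the recursion, reducing to the elementary inequality $2\sqrt{t(t-1)}\leq 2t+H-1$, and the claim about $\beta_t$ then follows by substituting $b_i=c\sqrt{H^3\iota/i}$ and using the factor $2$ in the definition of $\beta_t$. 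For (b), monotonicity from (e) gives $\max_i\theta_t^i = \theta_t^t = \alpha_t = (H+1)/(H+t)\leq 2H/t$ for $H\geq 1$; for (c), $\sum_i(\theta_t^i)^2 \leq (\max_i\theta_t^i)\sum_i\theta_t^i \leq 2H/t$. For (d), I would telescope: setting $g(t):=(t-1)!/(H+t-1)!$, one checks $g(t)-g(t+1) = H(t-1)!/(H+t)!$, so summing the closed form over $t\geq i$ collapses to $\sum_{t=i}^\infty\theta_t^i = (H+1)/H = 1+1/H$.

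The main obstacle, and the genuinely new content, is (f). I would first invoke the monotonicity from (e) to dispatch the case $i_1\leq i_2$, where $\theta_t^{i_1}/\theta_t^{i_2}\leq 1\leq \exp(1/H)$ trivially, so that only $i_1>i_2$ requires work. There I write $\theta_t^{i_1}/\theta_t^{i_2} = \prod_{i=i_2}^{i_1-1}(1+H/i)$ using (e), and bound it by $\exp\bigl(H\sum_{i=i_2}^{i_1-1}1/i\bigr)\leq \exp\bigl(H(i_1-i_2)/i_2\bigr)$ via $1+x\leq e^x$ and $1/i\leq 1/i_2$. The crucial step is to control $(i_1-i_2)/i_2$ through the round structure: since $i_1,i_2$ lie in the same round $k$ with $N_h^{k}(x,a)\geq i_0$, the global indices satisfy $i_2\geq N_h^k(x,a)+1$ and $i_1-i_2\leq n_h^k(x,a)-1 \leq \tilde C\,N_h^k(x,a)$ by \Cref{lemma_relationship_TK}(c), whence $(i_1-i_2)/i_2 \leq \tilde C = 1/(H(H+1))$. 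This yields $\theta_t^{i_1}/\theta_t^{i_2}\leq \exp(1/(H+1))\leq \exp(1/H)$, completing the claim.

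The delicate point to get right throughout is that the index monotonicity and adjacent-ratio identity from (e) do all the heavy lifting, and that the hypothesis $N_h^k(x,a)\geq i_0$ in (f) is precisely what lets me replace the generic bound $n_h^k(x,a)\leq \max\{M,\tilde C N_h^k(x,a)\}$ by the sharper $n_h^k(x,a)\leq \tilde C N_h^k(x,a)$, tying the within-round visit count to the global index gap. I expect (f) to be the only step needing care; parts (a)--(e) are short consequences of the recursion, the normalization identity, and the telescoping argument.
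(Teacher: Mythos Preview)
Your proposal is correct and follows essentially the same approach as the paper. The paper dispatches (a)--(e) by citing Lemma~4.1 of \cite{jin2018q} while you spell out self-contained arguments; for (f), both proofs use the monotonicity from (e) to write the ratio as $\prod(1+H/i)$, bound it by an exponential via $1+x\le e^x$, and invoke \Cref{lemma_relationship_TK}(c) under the hypothesis $N_h^k(x,a)\ge i_0$ to control the exponent by $1/H$ (the only cosmetic difference being that the paper first passes to the extreme indices $t_0,t_1$ of the round, whereas you work directly with $i_1,i_2$).
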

\begin{proof}[Proof of Lemma \ref{property_theta}]
	(a)-(e) are obvious based on $\theta_t^i$'s definition and Lemma 4.1 in \cite{jin2018q}. For (f), denoting $t_0 = N_h^{k_h(i_1,x,a)}(x,a)+1$ and $t_1 = N_h^{k_h(i_1,x,a)+1}(x,a)$, based on (e), we have
	$$\theta_t^{i_1}/\theta_t^{i_2}\leq \theta_t^{t_1}/\theta_t^{t_0} = \prod_{t'=t_0}^{t_1-1}(1+H/t').$$
	Based on (c) in Lemma \ref{lemma_relationship_TK}, we further have that
	$$\prod_{t'=t_0}^{t_1-1}(1+H/t')\leq (1+H/t_0)^{t_1-t_0}\leq \exp(H(t_1-t_0)/t_0)\leq \exp(1/H).$$
\end{proof}

Next, we rigorously define the weights $\Tilde{\theta}_t^i$ mentioned in \Cref{sec:results}. For any $(x,a,h,K')\in \mathcal{S}\times \mathcal{A}\times [H]\times [K]$, we let $t = N_h^{K'}(x,a)$ and $i\in [t]\bigcup \{0\}$. Letting $t' = N_h^{k^i}(x,a)$ and $t'' = N_h^{k^i+1}(x,a)$, we denote
$$\tilde{\theta}_t^i(x,a,h) = \theta_t^i\mathbb{I}[t'<i_0] + \frac{1-\alpha^c(t'+1,t'')}{t'' - t'}\alpha^c(t''+1,t)\mathbb{I}[t'\geq i_0],$$
and we will use the simplified notation $\tilde{\theta}_t^i$ when there is no ambiguity.
Lemma \ref{property_tilde_theta} provides properties of $\tilde{\theta}_t^i$ and its relationship with $\theta_t^i$.
\begin{lemma}\label{property_tilde_theta}
	The following relationships hold for any $(x,a,h,K')\in \mathcal{S}\times \mathcal{A}\times [H]\times [K]$ with $t = N_h^{K'}(x,a)$ for both algorithms.
	\begin{itemize}
		\item[(a)] $\tilde{\theta}_t^i(x,a,h) = \tilde{\theta}_{t'}^i(x,a,h) \alpha^c(t'+1,t)$ with $t' = N_h^{k_h(i;x,a)+1}(x,a)$.
		\item[(b)] For any $i_1,i_2\in [t]$, if $k_h(i_1,x,a) = k_h(i_2,x,a)$ and $N_h^{k_h(i_1,x,a)}(x,a)\geq i_0$, we have that
		$\tilde{\theta}_t^{i_1}(x,a,h) = \tilde{\theta}_t^{i_2}(x,a,h)$.
		\item[(c)] For any $k'\leq K'$, we have that
		$$\sum_{i' = N_h^{k'}(x,a)+1}^{N_h^{k'+1}(x,a)} \tilde{\theta}_t^{i'}(x,a,h)=\sum_{i' = N_h^{k'}(x,a)+1}^{N_h^{k'+1}(x,a)} \theta_t^{i'},$$
		which indicates that
		$$\sum_{i=1}^t \Tilde{\theta}_t^i = \mathbb{I}[t>0].$$
		\item[(d)] For any $i\in [t]$, when $N_h^{k_i(x,a,h)}(x,a)\geq i_0$, we have that
		$$\left(1+H/(1+N_i)\right)^{1-n_i}\leq \tilde{\theta}_t^i/\theta_t^i\leq \left(1+H/(1+N_i)\right)^{n_i-1},$$
		in which $N_i = N_h^{k_h(i,x,a)}(x,a)$ and
		$n_i = n_h^{k_h(i,x,a)}(x,a)$.
		\item[(e)] For any $i\in [t]$, when $N_h^{k_i(x,a,h)}(x,a)\geq i_0$, we have that
		$$\tilde{\theta}_t^i/\theta_t^i\leq \exp(1/H).$$
	\end{itemize}
\end{lemma}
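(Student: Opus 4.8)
The plan is to treat all five parts as algebraic consequences of the two-case definition of $\tilde\theta_t^i$ together with the multiplicative structure of the step-size weights, with essentially no probability involved. The workhorse identity I would establish first is the factorization $\theta_t^i = \theta_{s}^i\,\alpha^c(s+1,t)$ for any $i \le s \le t$, which is immediate from $\theta_t^i = \alpha_i\prod_{i'=i+1}^t(1-\alpha_{i'})$ and $\alpha^c(s+1,t)=\prod_{t'=s+1}^t(1-\alpha_{t'})$. I would also record the normalization $\sum_{i=1}^t\theta_t^i = 1$ for $t\ge 1$ (from Lemma \ref{property_theta}) and derive its windowed version from the factorization: for $t'_0 < t'_1 \le t$, $\sum_{i'=t'_0+1}^{t'_1}\theta_{t'_1}^{i'} = 1 - \alpha^c(t'_0+1,t'_1)$, obtained by subtracting $\sum_{i'=1}^{t'_0}\theta_{t'_1}^{i'} = \alpha^c(t'_0+1,t'_1)$ from the full sum.

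For part (a) I would split into the two cases of the definition. In Case 1 ($N_h^{k^i}(x,a)<i_0$) we have $\tilde\theta = \theta$ and the claim is exactly the factorization above. In Case 2, writing $t'_0 = N_h^{k^i}(x,a)$ and $t'_1 = N_h^{k^i+1}(x,a)$ (so that $t'$ in the statement equals $t'_1$), evaluating $\tilde\theta_{t'_1}^i$ makes the trailing factor $\alpha^c(t'_1+1,t'_1)$ empty, leaving the common equal weight $\frac{1-\alpha^c(t'_0+1,t'_1)}{t'_1-t'_0}$; multiplying by $\alpha^c(t'_1+1,t)$ reproduces $\tilde\theta_t^i$ verbatim. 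Part (b) is then immediate: in Case 2 the defining expression depends on $i$ only through $N_h^{k^i}(x,a)$ and $N_h^{k^i+1}(x,a)$, which coincide for indices in the same round. Part (c) is a round-by-round check: Case-1 rounds give equality termwise, and for a Case-2 round the $n_i := t'_1-t'_0$ equal weights sum to $(1-\alpha^c(t'_0+1,t'_1))\,\alpha^c(t'_1+1,t)$, which matches $\sum_{i'=t'_0+1}^{t'_1}\theta_t^{i'}$ after applying the factorization and the windowed normalization; summing over rounds and invoking $\sum_i\theta_t^i=\mathbb{I}[t>0]$ gives the stated consequence.

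Parts (d) and (e) are the crux. Using part (a) for both $\tilde\theta$ and $\theta$, the ratio collapses to the end of the round, $\tilde\theta_t^i/\theta_t^i = \tilde\theta_{t'_1}^i/\theta_{t'_1}^i$, since the common factor $\alpha^c(t'_1+1,t)$ cancels. By the windowed normalization, $\tilde\theta_{t'_1}^i = \frac{1}{n_i}\sum_{i'\in\mathcal{I}^{k^i}}\theta_{t'_1}^{i'}$ is exactly the \emph{average} of the genuine weights over the round, so $\tilde\theta_{t'_1}^i/\theta_{t'_1}^i = \frac{1}{n_i}\sum_{i'}\theta_{t'_1}^{i'}/\theta_{t'_1}^i$. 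I would then bound each ratio $\theta_{t'_1}^{i'}/\theta_{t'_1}^i$ via the consecutive-ratio identity $\theta^{j+1}/\theta^{j}=1+H/j$ from Lemma \ref{property_theta}(e): telescoping over at most $n_i-1$ steps, all with index at least $N_i+1$, yields $(1+H/(N_i+1))^{1-n_i}\le \theta_{t'_1}^{i'}/\theta_{t'_1}^i\le (1+H/(N_i+1))^{n_i-1}$, and since an average lies between its extreme values, part (d) follows. For part (e) I would feed the upper bound of (d) into $n_i \le \tilde{C} N_i$ (valid because $N_i\ge i_0$, Lemma \ref{lemma_relationship_TK}(c)) and estimate $(1+H/(N_i+1))^{n_i-1}\le \exp(H(n_i-1)/(N_i+1)) \le \exp(H\tilde{C}) = \exp(1/(H+1)) \le \exp(1/H)$. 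The only delicate point is keeping the round-window bookkeeping ($t'_0,t'_1$, $N_i$, $n_i$, and the index set $\mathcal{I}^{k^i}$) consistent across the two-case definition, in particular reconciling the statement's reuse of the symbol $t'$ in part (a) with the $t',t''$ appearing in the definition of $\tilde\theta$; the analytic content is light once the averaging viewpoint in (d) is in place.
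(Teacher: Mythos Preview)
Your proposal is correct and takes essentially the same route as the paper. The paper dispatches (a)--(c) in one sentence (``obtained directly through the definition'') and proves (d)--(e) by the same extreme-ratio argument you give: reducing to the end of the round, observing that $\tilde\theta_{t_1}^i$ lies between $\theta_{t_1}^{t_0}$ and $\theta_{t_1}^{t_1}$ (your averaging viewpoint makes this explicit), bounding $\theta_{t_1}^{t_1}/\theta_{t_1}^{t_0}$ via the consecutive-ratio identity, and then invoking $n_i\le \tilde C N_i$ from Lemma~\ref{lemma_relationship_TK}(c) for part (e). Your write-up is simply a more detailed version of the paper's sketch.
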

\begin{proof}[Proof of Lemma \ref{property_tilde_theta}]
	(a)-(c) can be obtained directly through the definition of $\tilde{\theta}_t^i$. Next, we prove (d) and (e). Denote $t_0 = N_h^{k_h(i,x,a)}(x,a)+1$ and $t_1 = N_h^{k_h(i,x,a)+1}(x,a)$. By (c) and (e) in Lemma \ref{property_theta}, we have that 
	$\theta_{t_1}^{t_0}/\theta_{t_1}^{t_1}\leq \tilde{\theta}_t^i/\theta_t^i \leq \theta_{t_1}^{t_1}/\theta_{t_1}^{t_0}
	$. Then, (d) can be proved by noticing that $\theta_{t_1}^{t_1}/\theta_{t_1}^{t_0}\leq \left(1+H/(1+N_i)\right)^{n_i-1}$. This implies that (e) holds because of (c) in Lemma \ref{lemma_relationship_TK}.
\end{proof}

\section{Proof of Theorem \ref{thm_regret_hoeffding}}\label{appx:proof_regret_h}

\subsection{Robustness against Asynchronization}

In this subsection, we discuss a more general situation for \Cref{alg_hoeffding_server,alg_hoeffding_agent}, where agent $m$ generates $n^{m,k}$ episodes during round $k$. We no longer assume that $n^{m,k}$ has the same value $n^k$ for different clients. The difference can be caused by latency (the time gap between an agent sending an abortion signal and other agents receiving the signal) and asynchronization (the heterogeneity among clients on the computation speed and process of collecting trajectories). In this case, for $K$ rounds, the total number of samples generated by all the clients is $$\hat{T} = H\sum_{k=1}^K\sum_{m=1}^M n^{m,k}.$$ Thus, we generalize the notation $T = \hat{T}/M$, which characterizes the mean number of samples generated by an agent. Accordingly, the definition of $\mbox{Regret}(T)$ can be generalized as 
$$\mbox{Regret}(T) = \sum_{k=1}^K\sum_{m=1}^M\sum_{j=1}^{n^{m,k}} V_1^\star(x_1^{m,k,j}) - V_1^{\pi^k}(x_1^{m,k,j}),$$
Similarly, the definitions of
$n_h^{m,k}(x,a),N_h^{m,k}(x,a),v_h^{m,k}(x,a)$ are also generalized by replacing $\sum_{j=1}^{n^k}$ with $\sum_{j=1}^{n^{m,k}}$. 

We note that \Cref{alg_hoeffding_server,alg_hoeffding_agent} naturally accommodate such asynchronicity. Therefore, in the following analysis of the regret, we adopt the general notation $n^{m,k}$. However, for the proof of Theorem \ref{thm_comm_cost_hoeffding} pertaining to communication, we will maintain the synchronization assumption that $n^{m,k} = n^k,\forall m\in [M]$.

\if{0}
this setting by
In this scenario, only minor modifications to the abortion behavior during the trajectories collection process for clients are required as detailed below:

Each client will monitor its total number of visits for every state-action-step triple $(x,a,h)$ within the current round. For any agent $m$, upon concluding the collection of each episode, it will sequentially perform the following two procedures:

{\bf Checking local abortion conditions:} If the agent $m$, during this round, discovers any $(x,a,h)$ triple visited in this episode has been visited $\max\left\{1,\lfloor\frac{\tilde{C}}{M}N_h^k(x, a)\rfloor\right\}$ times at step $h$, the agent aborts its own exploration. Subsequently, a signal is sent to the server and then all other clients, instructing all agents to terminate the exploration.

{\bf Reacting to received abortion signal:} If the previous abortion condition is not triggered in the first process, agent $m$ will check whether an abortion signal has been received. If such a signal is detected, the agent will cease its own exploration.

The modification within this component specifically involves providing two distinct steps for determining abortion. In this adjusted scenario, as the clients are still constrained by their localized abortion conditions, it can be ascertained that \Cref{rough_num_ep} remains applicable. Moreover, for any $k\in [K]$, there exists $(x,a,h,m)\in \sah\times [M]$ such that the equality holds. This observation corroborates that the proof of Theorem \ref{thm_regret_hoeffding} retains its validity even in the absence of the synchronous assumption on $n^{m,k}$. Consequently, this highlights the robustness of our algorithm to asynchrony and latency.
\fi

\subsection{Bounds on $Q_h^k - Q_h^\star$}
\begin{lemma}\label{lemma_ineq_rel_hoeffding}
	For \Cref{alg_hoeffding_server,alg_hoeffding_agent}, there exists a positive constant $c>0$ such that, for any $p\in(0,1)$, the following relationship holds for all $(x,a,h,K')\in \mathcal{S}\times \mathcal{A}\times [H]\times [K]$ with probability at least $1-p$:
	\begin{equation}\label{bound_gap_hoeffding}
		0\leq Q_h^{K'}(x,a) - Q_h^\star(x,a)\leq \theta_t^0 H + \sum_{i=1}^t \tilde{\theta}_t^i (V_{h+1}^{k^i} - V_{h+1}^\star)(x_{h+1}^{m^i,k^i,j^i})+ \beta_t,
	\end{equation}
	in which $t = N_{h}^{K'}(x,a)$.
\end{lemma}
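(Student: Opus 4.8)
The plan is to lift the single-agent recursion of \cite{jin2018q} to the level of global visiting indices, and then to combine optimism with a concentration bound for the non-predictable weights $\tilde{\theta}_t^i$. Throughout I fix $(x,a,h)$, abbreviate $y_i:=x_{h+1}^{m^i,k^i,j^i}$, and write $B_t:=\sum_{i=1}^t\theta_t^ib_i$. The first step is to show that both update cases collapse to a single per-round form. In a round $k$ where $(x,a,h)$ is visited, set $t^{k-1}=N_h^k(x,a)$ and $t^k=N_h^{k+1}(x,a)$. Using the round-wise weight identity in \Cref{property_tilde_theta}(c) (so that the new weights sum to $\alpha_{agg}=1-\alpha^c(t^{k-1}+1,t^k)$) together with the definition of $\tilde{\theta}_{t^k}^i$, I would verify that \Cref{update_q_small_n} (Case~1, where by \Cref{rough_num_ep} each agent visits $(x,a,h)$ at most once, so the local averages are exactly the next-state values $V_{h+1}^k(y_i)$ and $\tilde{\theta}=\theta$) and \Cref{update_q_large_n} (Case~2, where $\tilde{\theta}_{t^k}^i=\alpha_{agg}/n_h^k(x,a)$) both reduce to
\[
Q_h^{k+1}(x,a)=\alpha^c(t^{k-1}+1,t^k)\,Q_h^k(x,a)+\sum_{i\in\Ic^k}\tilde{\theta}_{t^k}^i\bigl(r_h(x,a)+V_{h+1}^k(y_i)\bigr)+\tfrac12\beta^k ,
\]
with rounds of no visit leaving $Q$ unchanged (empty $\Ic^k$).

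Next I would unroll this recursion over $k=1,\dots,K'-1$, propagating the value weights through \Cref{property_tilde_theta}(a), namely $\tilde{\theta}_t^i=\tilde{\theta}_{t'}^i\alpha^c(t'+1,t)$, and the bonus through the analogous identity $\theta_t^i=\theta_{t^k}^i\alpha^c(t^k+1,t)$. With $t=N_h^{K'}(x,a)$ this yields the closed form
\[
Q_h^{K'}(x,a)=\theta_t^0H+\sum_{i=1}^t\tilde{\theta}_t^i\bigl(r_h(x,a)+V_{h+1}^{k^i}(y_i)\bigr)+B_t ,
\]
where $\theta_t^0=\alpha^c(1,t)$ is nonzero only at $t=0$ (since $\alpha_1=1$), capturing the initialization $Q_h^1=H$. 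Subtracting the Bellman optimality identity $Q_h^\star=r_h+\mathbb{P}_hV_{h+1}^\star$ and using $\theta_t^0+\sum_{i=1}^t\tilde{\theta}_t^i=1$ from \Cref{property_tilde_theta}(c) to rewrite $Q_h^\star$ with the same weights, then splitting $V_{h+1}^{k^i}(y_i)-[\mathbb{P}_hV_{h+1}^\star](x,a)=(V_{h+1}^{k^i}-V_{h+1}^\star)(y_i)+X_i$ with $X_i:=V_{h+1}^\star(y_i)-[\mathbb{P}_hV_{h+1}^\star](x,a)$, gives the exact decomposition
\[
Q_h^{K'}-Q_h^\star=\theta_t^0(H-Q_h^\star)+\sum_{i=1}^t\tilde{\theta}_t^i(V_{h+1}^{k^i}-V_{h+1}^\star)(y_i)+\sum_{i=1}^t\tilde{\theta}_t^iX_i+B_t .
\]

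At this point the only probabilistic ingredient is a single event, holding with probability at least $1-p$ uniformly over all $(x,a,h,K')$, on which $\bigl|\sum_{i=1}^t\tilde{\theta}_t^iX_i\bigr|\le B_t$; with $b_i=c\sqrt{H^3\iota/i}$ and $\iota$ as defined, this is precisely the non-martingale concentration bound (\Cref{lemma_concen_Hoeffding}), which I would cite here. Granting this event, the upper bound follows immediately from $H-Q_h^\star\le H$ and $\sum_i\tilde{\theta}_t^iX_i\le B_t$, producing a total bonus $2B_t=\beta_t$ (consistent with the range in \Cref{property_theta}(a)). The lower bound $Q_h^{K'}\ge Q_h^\star$ I would prove by backward induction on $h$, uniformly in $K'$, on the same event: at $h=H+1$ both sides vanish, and assuming $V_{h+1}^k\ge V_{h+1}^\star$ for every $k$, the second sum in the decomposition is nonnegative, $\theta_t^0(H-Q_h^\star)\ge0$, and $\sum_i\tilde{\theta}_t^iX_i\ge-B_t$ cancels the accumulated bonus $B_t$, so $Q_h^{K'}\ge Q_h^\star$; then \Cref{rel_V_Q_est} together with $V_h^\star\le H$ gives $V_h^{K'}\ge V_h^\star$, closing the induction.

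The main obstacle is the concentration step. Because each $\tilde{\theta}_t^i$ depends on the number of within-round visits, which is revealed only when the round ends, the weights are not predictable and $\{\tilde{\theta}_t^iX_i\}$ is not a martingale difference sequence, so Azuma--Hoeffding does not apply directly; this is the crux deferred to \Cref{lemma_concen_Hoeffding}, which relates $\tilde{\theta}_t^i$ to $\theta_t^i$ round-by-round via \Cref{property_tilde_theta}. Everything internal to this lemma is then bookkeeping: verifying the unified update form across the two cases, the telescoping propagation of the value weights and the bonus, and the factor-of-two accounting between the update bonus $B_t=\sum_i\theta_t^ib_i$ and the stated $\beta_t=2B_t$.
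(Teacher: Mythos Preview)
Your proposal is correct and follows essentially the same route as the paper: you re-derive the closed-form expansion and error decomposition (which the paper states as \Cref{lemma_rel_QQ_hoeffding}), invoke \Cref{lemma_concen_Hoeffding} for the concentration of $\sum_i\tilde{\theta}_t^iX_i$, and then carry out the backward-in-$h$ induction (which the paper simply delegates to ``the same proof as that for Lemma~4.3 in \cite{jin2018q}''). Your bookkeeping on the bonus---that the accumulated term is $B_t=\sum_i\theta_t^ib_i$ (with $\theta$, not $\tilde{\theta}$), and that choosing $c=c_0$ makes $B_t\ge c_0\sqrt{H^3\iota/t}$ so the concentration bound yields $|\sum_i\tilde{\theta}_t^iX_i|\le B_t$ and hence the total $2B_t=\beta_t$---matches the paper's choice $c=c_0$ exactly.
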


We first provide Lemma \ref{lemma_rel_QQ_hoeffding} to formally state \Cref{lemma_eq_Q} and \Cref{rel_QQ_hoeffding}, which establish the relationship between $Q_h^k$ and $Q_h^\star$. The proof is the same as the proof of Equation (4.3) in \cite{jin2018q}.
\begin{lemma}\label{lemma_rel_QQ_hoeffding}
	For the Hoeffding-type \Cref{alg_hoeffding_server,alg_hoeffding_agent}
	, for all $(x,a,h,K')\in \mathcal{S}\times \mathcal{A}\times [H]\times [K]$, denoting $t = N_h^{K'}(x,a)$, we have
	\begin{align}
		\label{lemma_eq_Q}
		Q_h^{K'}(x,a) &= \theta_t^0H +\sum_{i=1}^t\Tilde{\theta}_t^i\left(r_h(x,a)+V_{h+1}^{k^i}(x_{h+1}^{m^i,k^i,j^i})\right)+\sum_{i=1}^t\theta_t^ib_i,\\
		Q_h^\star(x,a) &= \tilde{\theta}_t^0 Q_h^\star +\sum_{i=1}^t \tilde{\theta}_t^i\left(r_h(x,a) + \left(\left[\mathbb{P}_h V_{h+1}^\star\right](x, a) - \tilde{\mathbb{E}}_{x,a,h,i} V^\star_{h+1}\right) + \tilde{\mathbb{E}}_{x,a,h,i} V^\star_{h+1}\right).\nonumber
	\end{align}
	Furthermore, we have
	\begin{align}
		(Q_h^{K'} - Q_h^\star)(x,a) &= \tilde{\theta}_t^0\left(H - Q_h^\star(x,a)\right) + \sum_{i=1}^t \tilde{\theta}_t^i(\tilde{\mathbb{E}}_{x,a,h,i} - \mathbb{E}_{x,a,h})V_{h+1}^\star \nonumber\\
		&\quad + \sum_{i=1}^t \tilde{\theta}_t^i(V_{h+1}^{k^i} - V_{h+1}^\star)(x_{h+1}^{m^i,k^i,j^i}) + \sum_{i=1}^t \theta_t^ib_i, \label{rel_QQ_hoeffding}
	\end{align}
	$$(Q_h^{K'} - Q_h^\star)(x,a) = \tilde{\theta}_t^0\left(H - Q_h^\star(x,a)\right) + \sum_{i=1}^t \tilde{\theta}_t^i(\tilde{\mathbb{E}}_{x,a,h,i} - \mathbb{E}_{x,a,h})V_{h+1}^\star
	$$
	in which
	\begin{align*}
		\mathbb{E}_{x,a,h} V^\star_{h+1}&=\mathbb{E}_{x,a,h} V^\star_{h+1}(x_{h+1}) = \mathbb{E}\left[V^\star_{h+1}(x_{h+1})|(x_h,a_h) = (x,a)\right],\\
		\tilde{\mathbb{E}}_{x,a,h,i} V^\star_{h+1}&=\tilde{\mathbb{E}}_{x,a,h,i} V^\star_{h+1}(x_{h+1}) = V^\star_{h+1}(x_{h+1}^{m^i,k^i,j^i}).
	\end{align*}		
	
\end{lemma}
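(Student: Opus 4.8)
The plan is to establish the three displayed identities in order, treating the expansion of $Q_h^{K'}$ as the substantive claim and the other two as algebraic consequences; the whole argument is the multi-agent analogue of the unrolling that produces Equation (4.3) in \cite{jin2018q}. I would prove \eqref{lemma_eq_Q} by induction on the round index $K'$, with $t = N_h^{K'}(x,a)$. The base case $K'=1$ gives $t=0$, and since $\theta_0^0 = 1$ the right-hand side collapses to $H$, matching the initialization $Q_h^1(x,a) = H$. For the inductive step I set $t^{k-1} = N_h^k(x,a)$ and $t^k = N_h^{k+1}(x,a)$ and apply the server update, i.e.\ \eqref{update_q_small_n} in Case~1 and \eqref{update_q_large_n} in Case~2. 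In both cases $1-\alpha_{agg} = \alpha^c(t^{k-1}+1,t^k)$, so the carried-over part is $\alpha^c(t^{k-1}+1,t^k)\,Q_h^k(x,a)$. Substituting the inductive hypothesis for $Q_h^k$ and using the factorization $\theta_{t^k}^i = \theta_{t^{k-1}}^i\,\alpha^c(t^{k-1}+1,t^k)$ together with its $\tilde{\theta}$-analogue (\Cref{property_tilde_theta}(a)) promotes every old-visit weight $\theta_{t^{k-1}}^i,\tilde{\theta}_{t^{k-1}}^i$ to $\theta_{t^k}^i,\tilde{\theta}_{t^k}^i$; the $H$-term is handled by $\theta_{t^{k-1}}^0\,\alpha^c(t^{k-1}+1,t^k) = \alpha^c(1,t^k) = \theta_{t^k}^0$.

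The crux is matching the freshly added term to the $\tilde{\theta}$-weighted sum over the current round's visits $i\in\{t^{k-1}+1,\dots,t^k\}$. The bonus part is immediate, since $\beta^k/2 = \sum_{i=t^{k-1}+1}^{t^k}\theta_{t^k}^i b_i$ by definition. For the reward and value part I would use $\sum_{i=t^{k-1}+1}^{t^k}\tilde{\theta}_{t^k}^i = \sum_{i=t^{k-1}+1}^{t^k}\theta_{t^k}^i = \alpha_{agg}$, which follows from \Cref{property_tilde_theta}(c) and the normalization $\alpha^c(t^{k-1}+1,t^k) + \sum_{i=t^{k-1}+1}^{t^k}\theta_{t^k}^i = 1$ noted after \eqref{update_old_agg}; this resolves the reward contribution. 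In Case~1 each client visits $(x,a,h)$ at most once, the update is sequential, and $\tilde{\theta}_{t^k}^i = \theta_{t^k}^i$ on the current round, so the term coincides with \eqref{update_old_agg}. In Case~2 all current-round visits share $k^i = k$, hence $V_{h+1}^{k^i} = V_{h+1}^k$, while $\tilde{\theta}_{t^k}^i = \alpha_{agg}/n_h^k(x,a)$ is constant over the round; summing yields $\alpha_{agg}\cdot\frac{1}{n_h^k(x,a)}\sum_{i} V_{h+1}^k(x_{h+1}^{m^i,k,j^i}) = \alpha_{agg}\,v_{h+1}^k(x,a)$, which is exactly the value contribution of \eqref{update_q_large_n}. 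This closes the induction, and it is the one genuinely multi-agent point: the equal-weight aggregation is precisely what lets the server-side average $v_{h+1}^k$ stand in for the per-visit next-state values.

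For the $Q_h^\star$ identity I would invoke the partition of unity $\tilde{\theta}_t^0 + \sum_{i=1}^t\tilde{\theta}_t^i = 1$ (from \Cref{property_tilde_theta}(c)) to write $Q_h^\star = \tilde{\theta}_t^0 Q_h^\star + \sum_i \tilde{\theta}_t^i Q_h^\star$, then expand each summand via the Bellman optimality equation $Q_h^\star(x,a) = r_h(x,a) + [\mathbb{P}_h V_{h+1}^\star](x,a)$ and insert $\pm\,\tilde{\mathbb{E}}_{x,a,h,i}V_{h+1}^\star$. Subtracting this from \eqref{lemma_eq_Q}, the reward sums cancel, the term $\theta_t^0 H - \tilde{\theta}_t^0 Q_h^\star$ becomes $\tilde{\theta}_t^0(H - Q_h^\star)$ since $\tilde{\theta}_t^0 = \theta_t^0$, the copy of $\sum_i\tilde{\theta}_t^i\tilde{\mathbb{E}}_{x,a,h,i}V_{h+1}^\star$ telescopes against its negative, and regrouping the surviving value terms as $(\tilde{\mathbb{E}}_{x,a,h,i} - \mathbb{E}_{x,a,h})V_{h+1}^\star$ plus $(V_{h+1}^{k^i} - V_{h+1}^\star)(x_{h+1}^{m^i,k^i,j^i})$ produces \eqref{rel_QQ_hoeffding}, with the bonus sum $\sum_i\theta_t^i b_i$ simply carried over. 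I expect Step~2 (the new-round contribution, especially the Case~2 collapse into $v_{h+1}^k$) to be the only real obstacle; the induction bookkeeping in Step~1 and the add-and-subtract in Step~3 are routine once the weight factorizations of \Cref{property_theta,property_tilde_theta} are in hand.
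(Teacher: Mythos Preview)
Your proposal is correct and follows essentially the same approach as the paper, which simply defers to the unrolling argument behind Equation~(4.3) in \cite{jin2018q}; you have in fact written out the round-wise induction and the Case~1/Case~2 collapse in more detail than the paper itself provides.
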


With this lemma, we derive a probabilistic upper bound for $|\sum_{i=1}^t \tilde{\theta}_t^iX_i|$ with $X_i = (\tilde{\mathbb{E}}_{x,a,h,i} - \mathbb{E}_{x,a,h})V_{h+1}^\star$ in Lemma \ref{lemma_concen_Hoeffding}.
\begin{lemma}\label{lemma_concen_Hoeffding}
	There exists $c_0>0$ such that, for any $p\in (0,1)$, with probability at least $1-p$, the following relationship holds for all $(x,a,h,K')\in \mathcal{S}\times \mathcal{A}\times [H]\times [K]$ with $t = N_h^{K'}(x,a)$:
	\begin{equation}\label{ineq_concen_hoeffding}
		\left|\sum_{i=1}^t \tilde{\theta}_t^i(\tilde{\mathbb{E}}_{x,a,h,i} - \mathbb{E}_{x,a,h})V_{h+1}^\star(x_{h+1})\right|\leq c_0\sqrt{H^3\iota/t}.
	\end{equation}
\end{lemma}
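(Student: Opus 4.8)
The plan is to prove the stated concentration bound for the martingale difference sequence $X_i = (\tilde{\mathbb{E}}_{x,a,h,i} - \mathbb{E}_{x,a,h})V_{h+1}^\star$, which satisfies $\mathbb{E}[X_i \mid \mathcal{F}_{i-1}] = 0$ and $|X_i|\le H$ with respect to the global-index filtration. The crux is that the weights $\tilde{\theta}_t^i$ are \emph{not} $\mathcal{F}_{i-1}$-measurable, since they depend on the length of the round containing the $i$-th visit, which is revealed only when that round ends. My first step is to separate the predictable and non-predictable parts,
\[
\Big|\sum_{i=1}^t \tilde{\theta}_t^i X_i\Big| \le \Big|\sum_{i=1}^t \theta_t^i X_i\Big| + \Big|\sum_{i=1}^t (\tilde{\theta}_t^i - \theta_t^i) X_i\Big|,
\]
and bound the two terms separately.

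For the first term, for each fixed value of $t$ the weights $\{\theta_t^i\}_{i\le t}$ are deterministic, so $\sum_{i=1}^t \theta_t^i X_i$ is a weighted sum of martingale differences and Azuma--Hoeffding applies directly. Using $\sum_i (\theta_t^i)^2 \le 2H/t$ from \Cref{property_theta}(c) and $|X_i|\le H$ gives a bound of order $\sqrt{H^3\iota/t}$, after a union bound over the polynomially many possible values of $t$ and over $(x,a,h)$, with the logarithmic factor absorbed into $\iota$.

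The second term is the heart of the argument. First-case rounds contribute nothing since there $\tilde{\theta}_t^i=\theta_t^i$, so I only sum over second-case rounds. For such a round $k'$ I use \Cref{property_tilde_theta}(a) together with the elementary identity $\theta_t^i = \theta_{t_{k'}}^i\,\alpha^c(t_{k'}+1,t)$ (with $t_{k'}=N_h^{k'+1}(x,a)$, following from the definition of $\theta$) to factor out the common decay,
\[
\sum_{i\in\Ic^{k'}} (\tilde{\theta}_t^i-\theta_t^i)X_i = \alpha^c(t_{k'}+1,t)\sum_{i\in\Ic^{k'}} (\tilde{\theta}_{t_{k'}}^i-\theta_{t_{k'}}^i)X_i,
\]
reducing matters to a ``current-round'' quantity whose weights depend only on data up to the end of round $k'$. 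Within the round the equal-weight assignment (\Cref{property_tilde_theta}(b)) makes $\tilde{\theta}_{t_{k'}}^i$ a constant equal to the average of $\theta_{t_{k'}}^i$, so the inner sum is again a weighted martingale-difference sum \emph{once the round boundaries are fixed}. To handle non-predictability I would prove the inner Azuma bound simultaneously for every candidate boundary pair $(s_1,s_2)$ with fixed deterministic weights and union bound over all such pairs; since the realized pair $(N_h^{k'}(x,a),t_{k'})$ is among them, the bound transfers. The weight magnitude is controlled by $|\tilde{\theta}_{t_{k'}}^i-\theta_{t_{k'}}^i| \le (e^{1/H}-1)\theta_{t_{k'}}^i$ from \Cref{property_tilde_theta}(e); combining this with $n_h^{k'}\le \tilde{C} N_h^{k'} \le \tilde{C} t_{k'}$ from \Cref{lemma_relationship_TK}(c) and $\max_i\theta_{t_{k'}}^i\le 2H/t_{k'}$ shows each round contributes $O(\sqrt{\iota/(Ht_{k'})})$.

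Finally I would sum the per-round contributions weighted by $\alpha^c(t_{k'}+1,t)$. Writing $\alpha^c(t_{k'}+1,t)=\theta_t^{t_{k'}}(H+t_{k'})/(H+1)$ and using that the endpoints $t_{k'}$ grow by a factor at most $1+\tilde{C}$ per round, the decay $\alpha^c(t_{k'}+1,t)\approx (t_{k'}/t)^{H+1}$ makes $\sum_{k'}\alpha^c(t_{k'}+1,t)/\sqrt{t_{k'}}$ a near-geometric series dominated by its last terms, summing to $O(H/\sqrt{t})$. This yields a second-term bound of order $\sqrt{H\iota/t}$, which is dominated by the first term, so combining the two gives the claimed $c_0\sqrt{H^3\iota/t}$. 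I expect the main obstacle to be precisely the loss of martingale structure caused by the future-dependent weights: the round-wise factorization together with the union bound over fixed boundary configurations is what restores a setting where Azuma--Hoeffding applies, and verifying that the resulting round-wise sum telescopes to the target rate is the delicate bookkeeping step.
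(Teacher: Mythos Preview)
Your decomposition, the treatment of the first term, the round-wise factorization $\sum_{i\in\Ic^{k'}}(\tilde{\theta}_t^i-\theta_t^i)X_i=\alpha^c(t_{k'}+1,t)\sum_{i\in\Ic^{k'}}(\tilde{\theta}_{t_{k'}}^i-\theta_{t_{k'}}^i)X_i$, and the idea of union-bounding over candidate round endpoints are all exactly what the paper does. The gap is in the last step, the summation over rounds.

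Your per-round bound uses $|\tilde{\theta}_{t_{k'}}^i-\theta_{t_{k'}}^i|\le (e^{1/H}-1)\theta_{t_{k'}}^i$ together with $n_h^{k'}\le \tilde{C}t_{k'}$ to get a scalar $O(\sqrt{\iota/t_{k'}})$ per round, and then you assert that $\sum_{k'}\alpha^c(t_{k'}+1,t)/\sqrt{t_{k'}}=O(H/\sqrt{t})$ by a ``near-geometric'' argument. This fails: the fact that $t_{k'}$ grows by \emph{at most} a factor $1+\tilde{C}$ gives no lower bound on the growth, so a fixed $(x,a,h)$ can be visited in arbitrarily many Case-2 rounds (each contributing as few as two visits). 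If, say, $n_h^{k'}=2$ in every round, the endpoints $t_{k'}$ are $i_0,i_0+2,\dots,t$ and one computes $\sum_{k'}\alpha^c(t_{k'}+1,t)/\sqrt{t_{k'}}\asymp\frac{1}{H}\sum_{j}\theta_t^j\sqrt{j}\asymp \sqrt{t}/H$, not $H/\sqrt{t}$. The ratio of consecutive terms tends to $1$, so no geometric-series control is available. The crude estimate $(e^{1/H}-1)\theta$ is the culprit: it is tight only when $n_h^{k'}$ is near its maximum $\tilde{C}t_{k'}$, and loses a factor $\sqrt{t_{k'}/(H^2 n_h^{k'})}$ otherwise.

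The paper fixes this by keeping the $n$-dependence and proving the sharper per-round estimate
\[
\sqrt{\sum_{i=t_2}^{t_3}(\tilde{\theta}_{t_3}^i-\theta_{t_3}^i)^2}\ \le\ c_3\sum_{i=t_2}^{t_3}\frac{\theta_{t_3}^i}{\sqrt{i}},
\]
obtained from the finer bound $|\tilde{\theta}_{t_3}^i-\theta_{t_3}^i|\lesssim \theta_{t_3}^i\cdot H(t_3-t_2)/t_2$ (\Cref{property_tilde_theta}(d)) together with $H^2(t_3-t_2)\le t_2$. After multiplying by $\alpha^c(t_3+1,t)$ the right-hand side becomes $c_3\sum_{i=t_2}^{t_3}\theta_t^i/\sqrt{i}$; summing over \emph{all} rounds (not just a scalar per round) then collapses to $c_3\sum_{i=1}^t\theta_t^i/\sqrt{i}\le 2c_3/\sqrt{t}$ by \Cref{property_theta}(a). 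This index-by-index accounting---writing the per-round Azuma bound as a partial sum of $\theta_t^i/\sqrt{i}$ rather than as a single scalar---is precisely the ``delicate bookkeeping'' you anticipated, and it is the missing ingredient in your proposal.
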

\begin{proof}
	For a given $(x,a,h)\in \mathcal{S}\times \mathcal{A}\times [H]$,
	denote $X_i(x,a,h) = (\tilde{\mathbb{E}}_{x,a,h,i} - \mathbb{E}_{x,a,h})V_{h+1}^\star(x_{h+1})$. When there is no ambiguity, we use the simplified notation $X_i = X_i(x,a,h)$. We know that $\{X_i\}_{i=1}^\infty$ is a sequence of martingale differences with $|X_i|\leq H$. We decompose the summation as follows:
	$$\sum_{i=1}^t \tilde{\theta}_t^iX_i = \sum_{i=1}^t \theta_t^iX_i + \sum_{i=1}^t (\tilde{\theta}_t^i-\theta_t^i)X_i.$$
	Note that $t\leq T_0/H$. 
	
	First, we focus on the first term. By Azuma-Hoeffding Inequality, for any given $(x,a,h)\in \mathcal{S}\times \mathcal{A}\times [H]$ and a given $t'\in \mathbb{N_+}$, for any $p\in (0,1)$, with probability $1-p$, there exists a numerical constant $c_1>0$ such that
	$$\left|\sum_{i=1}^{t'} \theta_{t'}^iX_i\right|\leq c_1H\sqrt{\left(\sum_{i=1}^{t'}(\theta_{t'}^i)^2\right)\log \frac{2}{p} },$$
	which indicates that $\left|\sum_{i=1}^{t'} \theta_{t'}^iX_i\right|\leq \frac{c_1}{\sqrt{2}}\sqrt{(H^3/t')\log \frac{2}{p}}$ based on (c) in Lemma \ref{property_theta}. 
	
	By considering all the possible combinations $(x,a,h,t')\in \mathcal{S}\times \mathcal{A}\times [H]\times [T_0/H]$, with a union bound and the realization of $t = t'$, we have, for any $p\in (0,1)$, with at probability at least $1-p$, the following relationship holds simultaneously for all $(x,a,h,K')\in \mathcal{S}\times \mathcal{A}\times [H]\times [K]$:
	$$\left|\sum_{i=1}^{t} \theta_{t}^iX_i\right|\leq \frac{c_1}{\sqrt{2}}\sqrt{(H^3/t)\log \frac{2SAT_0}{p}}\leq \frac{c_1}{\sqrt{2}}\sqrt{\iota_0H^3/t}.$$
	
	We then focus on the second term $\sum_{i=1}^t (\tilde{\theta}_t^i-\theta_t^i)X_i$. For any given $(x,a,h,k_s)\in \sah\times [K]$, we consider the part with samples generated by the $k_s$-th round, which is
	$$\sum_{i = t_2}^{t_3} (\tilde{\theta}_{t_3}^i-\theta_{t_3}^i)X_i,$$
	in which $t_2 = N_h^{k_s}(x,a)+1$, $t_3 = N_h^{k_s+1}(x,a)$.
	We can control the second term by controlling $|\sum_{i = t_2}^{t_3} (\tilde{\theta}_{t_3}^i-\theta_{t_3}^i)X_i|$ for all $k_s\in [K]$.
	
	We have
	\begin{equation}\label{proof_concen_hoeffding_eq0}
		\sum_{i=1}^t (\tilde{\theta}_t^i-\theta_t^i)X_i = \sum_{k_s=1}^{K'-1} \left[\prod_{t' = t_3+1}^t (1-\alpha_{t'})\right]\sum_{i = t_2}^{t_3} (\tilde{\theta}_{t_3}^i-\theta_{t_3}^i)X_i.
	\end{equation}
	To begin with, we prove that there exists a numerical constant $c_3>0$ such that
	\begin{equation}\label{proof_concen_hoeffding_eq1}
		\sqrt{\sum_{i = t_2}^{t_3}(\tilde{\theta}_{t_3}^i-\theta_{t_3}^i)^2}\leq c_3\sum_{t' = t_2}^{t_3} \theta_{t_3}^{t'}/\sqrt{t'}.
	\end{equation}
	This relationship obviously holds when $N_h^{k_s}(x,a)< i_0$ as $LHS = 0$. When $N_h^{k_s}(x,a)\geq i_0$, we have
	$$\sqrt{\sum_{i = t_2}^{t_3}(\tilde{\theta}_{t_3}^i-\theta_{t_3}^i)^2}\leq O\left(\sqrt{\sum_{i = t_2}^{t_3} \frac{H^2(t_3-t_2)^2}{t_2^2}(\theta_{t_3}^i)^2}\right) \leq O\left(\frac{H(t_3-t_2)^{3/2}}{t_2}\theta_{t_3}^{t_2}\right),$$
	where the first inequality comes from (d) in Lemma \ref{property_tilde_theta} and the second one comes from (f) in Lemma \ref{property_theta}. 
	
	We also have that
	$$O\left(\frac{H(t_3-t_2)^{3/2}}{t_2}\theta_{t_3}^{t_2}\right) = O\left(\frac{H(t_3-t_2)^{1/2}}{\sqrt{t_2}}(t_3-t_2)\theta_{t_3}^{t_2}/\sqrt{t_2}\right) = O\left(\sum_{t' = t_2}^{t_3} \theta_{t_3}^{t'}/\sqrt{t'}\right),$$
	where the second relationship comes from (f) in Lemma \ref{property_theta}. This completes the proof of \Cref{proof_concen_hoeffding_eq1}.
	
	Next, we proceed with discussions conditioning on all the information before starting the $k_s$-th round, which means that $N_h^{k_s}(x,a)$ and $t_2$ can be treated as constants. If $N_h^{k_s}(x,a)<i_0$, this quantity is equal to $0$. Otherwise, given any $t_3'\geq t_2$ and $i\in [t_2,t_3'],$ we denote
	$$\hat{\theta}_{t_3'} = \left[1-\prod_{t' = t_2}^{t_3'}(1-\alpha_{t'})\right]/(t_3'-t_2+1),$$
	Therefore, in the expression $\sum_{i = t_2}^{t_3'} (\hat{\theta}_{t_3'}-\theta_{t_3'}^i)X_i$, we can treat $\{X_{t_2},X_{t_2+1}\ldots X_{t_3'}\}$ as martingale differences and $\theta_{t_3'}^i$s and $\hat{\theta}_{t_3'}$ as constants. Hence, by Azuma-Hoeffding Inequality, there exists a positive numerical constant $c_2$ such that, for any $p\in (0,1)$, with probability at least $1-p$,
	$$\left|\sum_{i = t_2}^{t_3'} (\hat{\theta}_{t_3'}-\theta_{t_3'}^i)X_i\right|\leq c_2H\sqrt{\log \frac{2}{p}\sum_{i = t_2}^{t_3'}(\hat{\theta}_{t_3'}-\theta_{t_3'}^i)^2}.$$
	By considering all possible values of $t_3'$ with a union bound, we have that with probability at least $1-p$, the following relationship holds simultaneously for any $t_2\leq t_3\leq t_2 + (1+\tilde{C})T_0/H+M-1$.
	$$\left|\sum_{i = t_2}^{t_3'} (\hat{\theta}_{t_3'}-\theta_{t_3'}^i)X_i\right|\leq c_2H\sqrt{\log \frac{2(T_0/H + M)(1+\Tilde{C})}{p}\sum_{i = t_2}^{t_3'}(\hat{\theta}_{t_3'}-\theta_{t_3'}^i)^2}.$$
	So, noticing that $\hat{\theta}_{t_3'} = \tilde{\theta}_{t_3}^i$ when $t_3' = t_3$ and $i\in [t_2,t_3]$ and applying \Cref{proof_concen_hoeffding_eq1}, we have that, for any $k_s\in \mathbb{N}_+$ and any $p\in (0,1),$ with probability at least $1-p$, 
	$$\left|\sum_{i = t_2}^{t_3} (\tilde{\theta}_{t_3}^i-\theta_{t_3}^i)X_i\right|\leq c_2c_3H\sqrt{\log \frac{2(T_0/H + M)(1+\Tilde{C})}{p}}\sum_{i = t_2}^{t_3}\theta_{t_3}^i/\sqrt{i}.$$
	We apply the union bound and claim that for any $p\in (0,1)$, the following relationship holds with probability at least $1-p$ for all $(x,a,h,k_s)\in \sah\times [K_0]$.
	\begin{align*}
		\left|\sum_{i = t_2}^{t_3} (\tilde{\theta}_{t_3}^i-\theta_{t_3}^i)X_i\right| &\leq c_2c_3H\sqrt{\log \frac{2SAHK_0(T_0/H + M)(1+\Tilde{C})}{p}}\sum_{i = t_2}^{t_3}\theta_{t_3}^i/\sqrt{i}\\
		& = c_2c_3H\sqrt{\iota_1}\sum_{i = t_2}^{t_3}\theta_{t_3}^i/\sqrt{i}.
	\end{align*}
	Under this event, with \Cref{proof_concen_hoeffding_eq0}, we have that
	\begin{equation}\label{event_gap_mart_dif}
		\left|\sum_{i=1}^t (\tilde{\theta}_t^i-\theta_t^i)X_i\right|\leq c_2c_3H\sum_{i = 1}^{t}\sqrt{\iota_1}\theta_t^i/\sqrt{i}.
	\end{equation}
	By (a) in Lemma \ref{property_theta}, we have $\sum_{i = 1}^{t}\theta_t^i/\sqrt{i} \leq \sqrt{4/t}$. Combining the results for the two terms completes the proof.
\end{proof}

Finally, we provide the proof for Lemma \ref{lemma_ineq_rel_hoeffding}.
\begin{proof}[Proof of Lemma \ref{lemma_ineq_rel_hoeffding}]
	We pick $c = c_0$ such that the event in Lemma \ref{lemma_concen_Hoeffding} holds. Under the event given in Lemma \ref{lemma_concen_Hoeffding} and noting \Cref{rel_QQ_hoeffding}, we claim the conclusion by using the same proof as that for Lemma 4.3 in \cite{jin2018q}.
\end{proof}

\subsection{Proof of Theorem \ref{thm_regret_hoeffding}}\label{proof_h_regret_step2}
Having proved Lemma \ref{lemma_ineq_rel_hoeffding}, we turn our attention to demonstrating the remaining parts of the proof. We use $n^{m,k}$ to denote the number of episodes by agent $m$ in round $k$.

We first provide some additional notations. Define $$\delta_h^k = \sum_{m=1}^M\sum_{j=1}^{n^{m,k}}\left(V_h^k - V_h^{\pi^k}\right)(x_h^{m,k,j}),$$
$$\phi_{h}^k = \sum_{m=1}^M\sum_{j=1}^{n^{m,k}}\left(V_{h}^{k} - V_{h}^{\star}\right)(x_h^{m,k,j}),\forall h\in [H+1],$$
in which $\delta_{H+1}^k = \phi_{H+1}^k =0$. We also define 
$$\xi_{h+1}^k = \sum_{m=1}^M\sum_{j=1}^{n^{m,k}} (\mathbb{P} - \hat{\mathbb{P}})\left(V_{h+1}^\star - V_{h+1}^{\pi^k}\right)(x_h^{m,k,j},a_h^{m,k,j}),h\in [H]$$
with $\xi_{H+1}^k = 0$. Here,
$$(\mathbb{P})\left(V_{h+1}^\star - V_{h+1}^{\pi^k}\right)(x_h^{m,k,j},a_h^{m,k,j}) = \mathbb{E}\left[\left(V_{h+1}^\star - V_{h+1}^{\pi^k}\right)(x_{h+1}^{m,k,j})|(\pi^k,x_h^{m,k,j},a_h^{m,k,j})\right],$$
and
$$(\hat{\mathbb{P}})\left(V_{h+1}^\star - V_{h+1}^{\pi^k}\right)(x_h^{m,k,j},a_h^{m,k,j}) = \left(V_{h+1}^\star - V_{h+1}^{\pi^k}\right)(x_{h+1}^{m,k,j}).$$
We first provide a Lemma related to $\xi_{h+1}^k.$
\begin{lemma}\label{concen_xi_hoeffding}
	There exists a numerical constant $c_5>0$ such that, for any $p\in(0,1)$, with probability at least $1-p$, 
	\begin{equation}\label{concen_xi_ineq}
		\left|\sum_{k=1}^K C_h\sum_{h=1}^H \xi_{h+1}^k\right|\leq c_5H\sqrt{\hat{T}\iota},
	\end{equation}
	where $C_h = \exp(3(h-1)/H)$.
\end{lemma}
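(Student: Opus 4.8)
The plan is to recognize that, up to the bounded deterministic weights $C_h$, the quantity $\sum_{k=1}^K\sum_{h=1}^H C_h\,\xi_{h+1}^k$ is the partial sum of a martingale difference sequence, and then to bound it via the Azuma--Hoeffding inequality combined with a uniform (anytime) union bound over the number of terms, which handles the randomness of $K$ and of the $n^{m,k}$. First I would fix a global chronological ordering of all per-step visits $(k,m,j,h)$, subject to the constraints that within each trajectory the steps are listed as $h=1,\dots,H$, that every visit of round $k$ precedes every visit of round $k+1$, and that visits of different agents/episodes within a round are placed in any fixed order. Let $\tau$ index this sequence and let $\mathcal F_\tau$ be the $\sigma$-algebra generated by everything revealed up to and including the state-action pair of the $\tau$-th visit but before its transition. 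For the visit indexed by $\tau$, corresponding to some $(k,m,j,h)$, set
$$Y_\tau := C_h\,(\mathbb P-\hat{\mathbb P})\big(V_{h+1}^\star-V_{h+1}^{\pi^k}\big)(x_h^{m,k,j},a_h^{m,k,j}).$$
The key observation is that $V_{h+1}^\star$ is deterministic and $\pi^k$, hence $V_{h+1}^{\pi^k}$, is fixed at the very start of round $k$, so $V_{h+1}^\star-V_{h+1}^{\pi^k}$ is $\mathcal F_{\tau-1}$-measurable, while conditioned on $\mathcal F_{\tau-1}$ the next state $x_{h+1}^{m,k,j}$ is drawn from $\mathbb P_h(\cdot\mid x_h^{m,k,j},a_h^{m,k,j})$. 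Thus $\mathbb E[Y_\tau\mid\mathcal F_{\tau-1}]=0$, so $\{Y_\tau\}$ is a martingale difference sequence and $\sum_{k}\sum_{h} C_h\xi_{h+1}^k=\sum_\tau Y_\tau$ is its partial sum up to the random terminal index $\hat T$.

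Next I would record the boundedness needed for Azuma. Since $0\le V_{h+1}^{\pi^k}\le V_{h+1}^\star\le H$, both the conditional mean $\mathbb P(V_{h+1}^\star-V_{h+1}^{\pi^k})$ and its realization lie in $[0,H]$, so $|(\mathbb P-\hat{\mathbb P})(\cdot)|\le H$; together with $C_h=\exp(3(h-1)/H)\in[1,e^3]$ this gives $|Y_\tau|\le e^3 H$. The total number of terms is $\hat T=H\sum_{k,m}n^{m,k}$, which by Lemma \ref{lemma_relationship_TK}(e) is at most the deterministic quantity $N_{\max}:=(1+\tilde C)T_0+HM$.

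To turn this into the claimed bound, I would pad the sequence with zeros past termination, which preserves the martingale property since $\{\tau\le\hat T\}$ is $\mathcal F_{\tau-1}$-measurable, and then apply Azuma--Hoeffding to each fixed prefix length. For each $n\le N_{\max}$, with probability at least $1-p/N_{\max}$ one has $\big|\sum_{\tau\le n}Y_\tau\big|\le e^3 H\sqrt{2n\log(2N_{\max}/p)}$; a union bound over $n\in[N_{\max}]$ makes this hold simultaneously for all prefixes with probability at least $1-p$. Evaluating at the realized terminal index $n=\hat T$, and noting that $\log(2N_{\max}/p)=O(\iota)$ because $N_{\max}=(1+\tilde C)T_0+HM$ and $\iota\ge\iota_0=\log\!\big(2SA(T_0+HM)(1+\tilde C)/p\big)$, we obtain $\big|\sum_k\sum_h C_h\xi_{h+1}^k\big|\le c_5 H\sqrt{\hat T\iota}$ for a suitable constant $c_5$, as claimed.

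The main obstacle is not the concentration itself but the bookkeeping around the random horizon: since $K$ and the $n^{m,k}$ are data-dependent, the sum terminates at the random index $\hat T$, so one cannot simply apply Azuma at a single deterministic length, and applying it at the crude deterministic bound $N_{\max}$ would only give $\sqrt{N_{\max}}$ rather than the required $\sqrt{\hat T}$. The anytime/union-bound device over $n\le N_{\max}$, valid precisely because Lemma \ref{lemma_relationship_TK}(e) furnishes a deterministic ceiling, resolves this while preserving the $\sqrt{\hat T}$ scaling. A secondary care point is checking that the chosen global ordering is consistent with an increasing filtration in which $Y_\tau$ is \emph{predictable} apart from the fresh transition noise, which is exactly why steps within a trajectory must be ordered $h=1,\dots,H$ and round $k$ must precede round $k+1$ so that $\pi^k$ is fixed before any round-$k$ visit.
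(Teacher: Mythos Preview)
Your proposal is correct and takes essentially the same Azuma--Hoeffding plus anytime-union-bound approach as the paper; the paper orders the increments as ``round first, episode second, step third, agent fourth'' and then applies Azuma with a union bound over $t\in[(1+\tilde C)T_0+HM]$, exactly as you do. The only caveat is that your phrase ``in any fixed order'' for agents/episodes within a round is slightly too permissive for the padding step: predictability of $\{\tau\le\hat T\}$ requires an ordering (like the paper's) in which whether episode $j$ of round $k$ exists is determined by episodes $1,\dots,j-1$ of \emph{all} agents, so an ``agent first, then episode'' ordering would not work.
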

\begin{proof}
	Denote $V(m,k,j,h) = C_h(\mathbb{P} - \hat{\mathbb{P}})\left(V_{h+1}^\star - V_{h+1}^{\pi^k}\right)(x_h^{m,k,j},a_h^{m,k,j})$ and use $\sum_{m,k,j,h}$ as a simplified notation for $\sum_{k=1}^K\sum_{m=1}^M\sum_{j=1}^{n^{m,k}}\sum_{h=1}^{H-1}$. The quantity of interest can be rewritten as
	$\sum_{m,k,j,h}V(m,k,j,h),$
	with $|V(m,k,j,h)|\leq O(H)$ as $C_h \leq \exp(3)$. 
	
	Let $\Tilde{V}(\Tilde{i})$ be the $\Tilde{i}$-th term in the summation that contains $\hat{T}(H-1)/H$ terms, in which the order follows a ``round first, episode second, step third, agent fourth" rule. Then the sequence $\{\Tilde{V}(\Tilde{i})\}$ is a martingale difference. By Azuma-Hoeffding Inequality, for any $p\in (0,1)$ and $t\in N_{+}$, with probability at least $1-p$,
	$$\left|\sum_{\Tilde{i}=1}^t \Tilde{V}(\Tilde{i})\right|\leq O\left(H\sqrt{t\log \frac{2}{p}}\right).$$
	Then by applying a union bound over $t\in [(1+\Tilde{C})T_0+HM]$ and knowing that $\hat{T}(H-1)/H\leq T_0(1+\Tilde{C})+HM$ due to (e) in Lemma \ref{lemma_relationship_TK}, we have that, for any $p\in (0,1)$, with probability at least $1-p$,
	$$\left|\sum_{k=1}^K C_h\sum_{h=1}^H \xi_{h+1}^k\right| = \left|\sum_{\Tilde{i}=1}^{\hat{T}(H-1)/H} \Tilde{V}(\Tilde{i})\right|\leq O(H\sqrt{\hat{T}\iota}).$$
	This completes the proof.
\end{proof}

Noticing that $\mbox{Regret}(T) \leq \sum_{k=1}^K\delta_1^k$ due to 
$\mbox{Regret}(T) = \sum_{k=1}^K\delta_1^k - \sum_{k=1}^K\phi_1^k$ and $\phi_1^k\geq 0$ shown in \Cref{bound_gap_hoeffding}, we attempt to establish a probability upper bound for $\sum_{k=1}^K\delta_1^k$.
First, we have
\begin{equation}\label{proof_regret_hoeffding_eq1}
	\delta_h^k\leq \sum_{m=1}^M\sum_{j=1}^{n^{m,k}}(Q_h^{k}-Q_h^\star)(x_h^{m,k,j},a_h^{m,k,j})+\sum_{m=1}^M\sum_{j=1}^{n^{m,k}}(Q_h^\star-Q_h^{\pi^k})(x_h^{m,k,j},a_h^{m,k,j}),
\end{equation}
which holds because 
$V_h^{\pi^k}(x_h^{m,k,j}) = Q_h^{\pi^k}(x_h^{m,k,j},a_h^{m,k,j})$
and
$$V_h^k\left(x_h^{m,k,j}\right) \leq \max _{a^{\prime} \in \mathcal{A}} Q_h^k\left(x_h^{m,k,j}, a^{\prime}\right)=Q_h^k\left(x_h^{m,k,j}, a_h^{m,k,j}\right).$$

Next, we attempt to bound the terms in RHS of \Cref{proof_regret_hoeffding_eq1} separately.  Our discussions are based on the events outlined in Lemma \ref{lemma_ineq_rel_hoeffding}. For any given $h$, denote $t_h^{m,k,j} = N_h^k(x_h^{m,k,j},a_h^{m,k,j})$ and the corresponding $k,m,j$ (round index, agent index, and episode index) for the $i$-th global visiting for $(x_h^{m,k,j},a_h^{m,k,j},h)$ are $k_{i,h}^{m,k,j},m_{i,h}^{m,k,j},j_{i,h}^{m,k,j},i=1,2\ldots t_h^{m,k,j}$.
For the first term, due to \Cref{bound_gap_hoeffding}, we have
\begin{align}
	\sum_{m=1}^M\sum_{j=1}^{n^{m,k}}&(Q_h^{k}-Q_h^\star)(x_h^{m,k,j},a_h^{m,k,j}) \nonumber\\
	&\leq \sum_{m=1}^M\sum_{j=1}^{n^{m,k}}\tilde{\theta}_{t_h^{m,k,j}}^0H+\sum_{m=1}^M\sum_{j=1}^{n^{m,k}}\sum_{i=1}^{t_h^{m,k,j}} \tilde{\theta}_{t_h^{m,k,j}}^i (V_{h+1}^{k_{i,h}^{m,k,j}} - V_{h+1}^\star)(x_{h+1}^{(m,k,j)_{i,h}^{m,k,j}})\nonumber\\
	&\quad+\sum_{m=1}^M\sum_{j=1}^{n^{m,k}}\beta_{t_h^{m,k,j}}.\label{split_delta_part1}
\end{align}
For the second term, due to \Cref{eq_Bellman},
\begin{equation}\label{split_delta_part2}
	\sum_{m=1}^M\sum_{j=1}^{n^{m,k}}(Q_h^\star-Q_h^{\pi^k})(x_h^{m,k,j},a_h^{m,k,j}) = \delta_{h+1}^k-\phi_{h+1}^k+\xi_{h+1}^k.
\end{equation}
Next, we try to find some bounds related to $\sum_{k=1}^K \delta_h^k.$ For notation simplicity, we use $\sum_{m,k,j}$ to represent $\sum_{k=1}^K\sum_{m=1}^M\sum_{j=1}^{n^{m,k}}$. We can prove the following relationships with details referred to \Cref{sec:three_rels}:
\begin{align}\label{split_delta_part3}
	&	\sum_{m,k,j}\tilde{\theta}_{t_h^{m,k,j}}^0H\leq MHSA.\\
	\label{split_delta_part4}
	&	\sum_{m,k,j}\sum_{i=1}^{t_h^{m,k,j}} \theta_{t_h^{m,k,j}}^i (V_{h+1}^{k_i^{m,k,j}} - V_{h+1}^\star)(x_{h+1}^{(m,k,j)_i^{m,k,j}})\leq e^{3/H}\sum_{k=1}^K \phi_{h+1}^k + O(H^3SA(M-1)).\\
	\label{split_delta_part7}
	&\sum_{k=1}^K\sum_{m=1}^M\sum_{j=1}^{n_h^{m,k}}\beta_{t_h^{m,k,j}}\leq O(\sqrt{H^2\iota \hat{T}SA} +SA(M-1)\sqrt{H^3\iota}).
\end{align}

Combining \Cref{split_delta_part1,split_delta_part2,split_delta_part3,split_delta_part4,split_delta_part7}, we have that for any $h\in [H]$, 
\begin{align*}
	\sum_{k=1}^K\delta_h^k &\leq \exp(3/H)\sum_{k=1}^K\phi_{h+1}^k+ \sum_{k=1}^K\delta_{h+1}^k-\sum_{k=1}^K\phi_{h+1}^k+\sum_{k=1}^K\xi_{h+1}^k\\
	&\quad + O\left(\sqrt{H^2\iota \hat{T}SA} +SA(M-1)\sqrt{H^3\iota}+MHSA+H^3SA(M-1)\right).
\end{align*}
Noticing that $\delta_h^k\geq \phi_h^k,\forall (h,k)\in [H]\times [K]$ due to the optimality of $\pi^\star$ and $\exp(3/H)^H = O(1)$, by recursions on $1,2\ldots H$, we have
\begin{align*}
	\sum_{k=1}^K\delta_1^k &\leq \sum_{h=1}^{H-1}C_h\sum_{k=1}^K\xi_{h+1}^k\\
	& \quad+ O\left(\sqrt{H^4\iota \hat{T}SA} +HSA(M-1)\sqrt{H^3\iota}+MH^2SA+H^4SA(M-1)\right),
\end{align*}
in which $C_h = \exp(3(h-1)/H).$ With Lemma \ref{concen_xi_hoeffding}, we can also show that, with high probability, 
\begin{equation}\label{concen_xi_ineq0}
	\left|\sum_{k=1}^K C_h\sum_{h=1}^H \xi_{h+1}^k\right|\leq O(H\sqrt{\hat{T}\iota}).
\end{equation}
This indicates that $\sum_{k=1}^K\delta_1^k = O(\sqrt{H^4\iota \hat{T}SA} +HSA(M-1)\sqrt{H^3\iota}+MH^2SA+H^4SA(M-1))$. 	With these discussions, we have already shown that,  under the intersection of events given in Lemma \ref{lemma_ineq_rel_hoeffding} and Lemma \ref{concen_xi_hoeffding},
\begin{align*}
	\mbox{Regret}(T)&\leq \sum_{k=1}^K\delta_1^k\\
	&\leq O\left(\sqrt{H^4\iota \hat{T}SA} +HSA(M-1)\sqrt{H^3\iota}+MH^2SA+H^4SA(M-1)\right).
\end{align*}
By replacing $p$ for the events in Lemma \ref{lemma_ineq_rel_hoeffding} and Lemma \ref{concen_xi_hoeffding} with $p/2$, we finish the proof.

\subsection{Proofs of \Cref{split_delta_part3,split_delta_part4,split_delta_part7}}
\label{sec:three_rels}
In this subsection, we try to give bounds the terms in RHS of \Cref{proof_regret_hoeffding_eq1} separately. We make discussions based on the intersection of events given in Lemma \ref{lemma_ineq_rel_hoeffding} and Lemma \ref{concen_xi_hoeffding}. Under these events, we have already shown that \Cref{split_delta_part1}, \Cref{split_delta_part2} and \Cref{concen_xi_ineq0} hold. So, we will provide the proof by establishing \Cref{split_delta_part3,split_delta_part4,split_delta_part7}.

\begin{proof}[{\bf Proof of \Cref{split_delta_part3}}] 
	First, we note that
	$$\sum_{m,k,j}\tilde{\theta}_{t_h^{m,k,j}}^0H\leq \sum_{m,k,j}H \cdot \mathbb{I}[t_h^{m,k,j}=0].$$
	For each $(x, a, h)\in \sah$, we consider all the rounds indexed as $0 < k_1 < k_2 < ...$ satisfying the condition $n_h^{k}(x, a) > 0$. Here, $k_s$s are simplified notations for functions of $(x,a,h)$, and we use the simplified notations when there is no ambiguity and the stated meaning of these notations is applicable only to the proof of \Cref{split_delta_part3}. So,
	$$\sum_{m,k,j}\mathbb{I}[t_h^{m,k,j}=0]\mathbb{I}[(x_h^{m,k,j},a_h^{m,k,j})=(x,a)]\leq n_h^{k_1}(x,a).$$
	As $N_h^{k_1}(x,a)=0$, due to \Cref{eq_rel_TK2}, we have $n_h^{k_1}(x, a)\leq M$. Therefore, 
	$$\sum_{m,k,j}\tilde{\theta}_{t_h^{m,k,j}}^0H = \sum_{(x,a)\in\mathcal{S}\times \mathcal{A}}\sum_{m,k,j}H\mathbb{I}[t_h^{m,k,j}=0]\mathbb{I}[(x_h^{m,k,j},a_h^{m,k,j})=(x,a)]\leq MHSA.$$
	This completes the proof for \Cref{split_delta_part3}.
\end{proof}

\begin{proof}[{\bf Proof of \Cref{split_delta_part4}}] 
	We denote $i_1 = (M-1)H(H+1)$ and split the summation into two parts: 
	\begin{align*}
		\sum_{k,m,j}\sum_{i=1}^{t_h^{m,k,j}} &\tilde{\theta}_{t_h^{m,k,j}}^i (V_{h+1}^{k_{i,h}^{m,k,j}} - V_{h+1}^\star)(x_{h+1}^{(m,k,j)_{i,h}^{m,k,j}}) \\
		&= \sum_{m,k,j}\mathbb{I}[t_h^{m,k,j}\leq i_1]\sum_{i=1}^{t_h^{m,k,j}} \tilde{\theta}_{t_h^{m,k,j}}^i (V_{h+1}^{k_{i,h}^{m,k,j}} - V_{h+1}^\star)(x_{h+1}^{(m,k,j)_{i,h}^{m,k,j}})\\
		&\quad+ \sum_{m,k,j}\mathbb{I}[t_h^{m,k,j}> i_1]\sum_{i=1}^{t_h^{m,k,j}} \tilde{\theta}_{t_h^{m,k,j}}^i (V_{h+1}^{k_{i,h}^{m,k,j}} - V_{h+1}^\star)(x_{h+1}^{(m,k,j)_{i,h}^{m,k,j}}).
	\end{align*}
	
	To bound the first term, we first notice that $$\sum_{m,k,j}\mathbb{I}[t_h^{m,k,j}\leq i_1]\sum_{i=1}^{t_h^{m,k,j}} \tilde{\theta}_{t_h^{m,k,j}}^i (V_{h+1}^{k_{i,h}^{m,k,j}} - V_{h+1}^\star)(x_{h+1}^{(m,k,j)_{i,h}^{m,k,j}})\leq H\cdot\sum_{m,k,j}\mathbb{I}[0<t_h^{m,k,j}\leq i_1]$$ due to the fact that $(V_{h+1}^{k_{i,h}^{m,k,j}} - V_{h+1}^\star)(x_{h+1}^{(m,k,j)_{i,h}^{m,k,j}})\leq H$ and $\sum_{i=1}^{t_h^{m,k,j}} \tilde{\theta}_{t_h^{m,k,j}}^i = \mathbb{I}[t_h^{m,k,j}>0]$ given in (c) in Lemma \ref{property_tilde_theta}. For every $(x,a,h)\in \sah$, suppose that $k'$ is the round index such that $N_h^{k'}(x,a)\leq i_1$ and $N_h^{k'+1}(x,a)> i_1$, and $k''$ 
	is the round index such that $N_h^{k''}(x,a)=0$ and $N_h^{k''+1}(x,a)> 0$. Here, $k'$ and $k''$ are simplified notations for functions of $(x,a,h)$, and we use the simplified notations when there is no ambiguity and the stated meaning is only valid in the proof of \Cref{split_delta_part4}. 
	
	We have
	$$\sum_{m,k,j}\mathbb{I}[0<t_h^{m,k,j}\leq i_1]\mathbb{I}[(x_h^{m,k,j},a_h^{m,k,j})=(x,a)]\leq i_1+n_h^{k'}(x,a)-n_h^{k''}(x,a).$$
	As $n_h^{k''}(x,a) \geq 1$ and $n_h^{k'}(x,a)\leq M$ due to $N_h^{k'}(x,a)<i_0$ and \Cref{eq_rel_TK2},
	$$\sum_{m,k,j}\mathbb{I}[0<t_h^{m,k,j}\leq i_1]\mathbb{I}[(x_h^{m,k,j},a_h^{m,k,j})=(x,a)]\leq i_1+(M-1)= O\left(H^2(M-1)\right).$$
	So, 
	\begin{align}
		\sum_{m,k,j}& \mathbb{I}[t_h^{m,k,j}\leq i_1]\sum_{i=1}^{t_h^{m,k,j}} \tilde{\theta}_{t_h^{m,k,j}}^i (V_{h+1}^{k_{i,h}^{m,k,j}} - V_{h+1}^\star)(x_{h+1}^{(m,k,j)_{i,h}^{m,k,j}})\nonumber\\
		&\leq H\sum_{(x,a)\in\mathcal{S}\times \mathcal{A}}\sum_{m,k,j}\mathbb{I}[0<t_h^{m,k,j}\leq i_1]\mathbb{I}[(x_h^{m,k,j},a_h^{m,k,j})=(x,a)]\nonumber\\
		&=O\left(H^3SA(M-1)\right).\label{proof_regret_hoeffding_eq2}
	\end{align}
	
	To bound the second term, we first notice that $V_{h+1}^{k_{i,h}^{m,k,j}} - V_{h+1}^\star\geq 0$ due to \Cref{bound_gap_hoeffding}. Then we regroup the summations in a different way. For every $(m',k',j')$, the term $(V_{h+1}^{k'} - V_{h+1}^\star)(x_{h+1}^{m',k',j'})$ appears in the term $\mathbb{I}[t_h^{m,k,j}> i_1]\sum_{i=1}^{t_h^{m,k,j}} \theta_{t_h^{m,k,j}}^i (V_{h+1}^{k_{i,h}^{m,k,j}} - V_{h+1}^\star)(x_{h+1}^{(m,k,j)_{i,h}^{m,k,j}})$ for $(k,m,j)$ if and only if $k>k'$ and $(x_h^{m,k,j},a_h^{m,k,j}) = (x_h^{m',k',j'},a_h^{m',k',j'})$. Thus, for each $(m',k',j')$, we denote $(x,a) = (x_h^{m',k',j'},a_h^{m',k',j'})$. We consider all the later round indices $k'= k_0<k_1<k_2<...$ that satisfy $n_s = n_h^{k_s}(x, a) > 0, s\in\mathbb{N}$. Here, $k_s$'s are simplified notations for functions of $(m',k',j',h)$, and we use the simplified notations when there is no ambiguity and the stated meaning is only valid in proof of \Cref{split_delta_part4}. Then, the coefficient of summation related to $(m',k',j')$ can be upper bounded by
	$$\sum_{s=1}^\infty\mathbb{I}[N_h^{k_s}(x,a)> i_1] n_s\tilde{\theta}_{N_h^{k_s}(x,a)}^{i'},$$
	in which $i'$ is the global visiting number for $(x,a,h)$ at $(m',k',j')$, which means that $(m',k',j') = m_h(i';x,a),k_h(i';x,a),j_h(i';x,a)$. 
	
	First, based on (e) in Lemma \ref{property_tilde_theta}, we have that
	$$\sum_{s=1}^\infty\mathbb{I}[N_h^{k_s}(x,a)> i_1] n_s\tilde{\theta}_{N_h^{k_s}(x,a)}^{i'}\leq \exp(1/H)\sum_{s=1}^\infty\mathbb{I}[N_h^{k_s}(x,a)> i_1] n_s\theta_{N_h^{k_s}(x,a)}^{i'}.$$
	We know that, $N_h^{k_s}(x,a)+n_s = N_h^{k_{s+1}}(x,a)$, $i'\leq N_h^{k_1}(x,a)$, and by (d) in Lemma \ref{property_theta}, $\sum_{i = i'}^\infty \theta_i^{i'} = (1+1/H).$
	Therefore, if we can find $C'>1$ such that
	$$C'\geq \max_{(i',i'')\in \Tilde{A}}\frac{\theta_{N_h^{k_s}(x,a)}^{i'}}{\theta_{N_h^{k_s}(x,a)+i''}^{i'}},\forall (x,a,h,s)\in \sah\times \mathbb{N},$$
	where $\Tilde{A} = \{(i',i'')\in \mathbb{N}^2:N_h^{k_s}(x,a)>i_1,0<i''<n_s\}$, we can have
	$$\sum_{s=1}^\infty\mathbb{I}[N_h^{k_s}(x,a)> i_1] n_s\tilde{\theta}_{N_h^{k_s}(x,a)}^{i'}\leq C'\exp(2/H).$$
	Next, we prove that, for any $(i',i'')\in \Tilde{A}$, if $N_h^{k_s}(x,a)> i_1$,
	\begin{align*}
		\frac{\theta_{N_h^{k_s}(x,a)}^{i'}}{\theta_{N_h^{k_s}(x,a)+i''}^{i'}}&\leq \frac{\theta_{N_h^{k_s}(x,a)}^{i'}}{\theta_{N_h^{k_s}(x,a)+n_s-1}^{i'}}\\
		&= \prod_{d = N_h^{k_s}(x,a)+1}^{N_h^{k_s}(x,a)+n_s-1}(1-\alpha_d)^{-1}\\
		&\leq (1-\alpha_{d_0})^{1-n_s}\\
		&\leq \exp(1/H),
	\end{align*}
	in which $d_0 = N_h^{k_s}(x,a)+1$ so that we can let $C' = \exp(1/H)$. The first inequality holds because $\theta_t^i = \alpha_i\prod_{i'=i+1}^t(1-\alpha_{i'})$ is a decreasing function with respect to $t$. The equality follows from the definition of $\theta_t^i$. The second inequality holds because $\alpha_t = \frac{H+1}{H+t}$ is a decreasing function with respect to $t$. Then we focus on the last inequality. According to the definition of $\alpha_t$, we have
	$$(1-\alpha_{d_0})^{1-n_s}=\left( 1- \frac{H+1}{H+N_h^{k_s}(x,a)+1}\right)^{1-n_s}=\left( 1+\frac{H+1}{N_h^{k_s}(x,a)}\right)^{n_s-1}.$$ If $N_h^{k_s}(x,a) > MH(H+1)$, according to \Cref{eq_rel_TK2}, we have that $n_s \leq \frac{N_h^{k_s}(x,a)}{H(H+1)}$. Then we have
	$$\left( 1+\frac{H+1}{N_h^{k_s}(x,a)}\right)^{n_s-1}\leq \left( 1+\frac{H+1}{N_h^{k_s}(x,a)}\right)^{\frac{N_h^{k_s}(x,a)}{H(H+1)}}\leq \exp(1/H).$$
	If $i_1<N_h^{k_s}(x,a) \leq MH(H+1)$, we can prove that
	\begin{align*}
		\left( 1+\frac{H+1}{N_h^{k_s}(x,a)}\right)^{n_s-1}&\overset{(a)}\leq \left( 1+\frac{H+1}{N_h^{k_s}(x,a)}\right)^{M-1}\\
		&\overset{(b)}< \left( 1+\frac{1}{H(M-1)}\right)^{M-1}\\
		&\leq \exp(1/H)
	\end{align*}
	where $(a)$ holds because according to \Cref{eq_rel_TK2} we have $n_s \leq M$ and $(b)$ holds because $N_h^{k_i}(x,a)>(M-1)H(H+1)$.
	
	Putting the two cases together, we have
	$$\sum_{s=1}^\infty\mathbb{I}[N_h^{k_s}(x,a)>i_1] n_s\tilde{\theta}_{N_h^{k_s}(x,a)}^{i'}\leq \exp(1/H)\exp(2/H)\leq \exp(3/H).$$
	Then we conclude that 
	\begin{align*}
		\sum_{m,k,j}& \mathbb{I}[t_h^{m,k,j}>i_1]\sum_{i=1}^{t_h^{m,k,j}} \tilde{\theta}_{t_h^{m,k,j}}^i (V_{h+1}^{k_{i,h}^{m,k,j}} - V_{h+1}^\star)(x_{h+1}^{(m,k,j)_{i,h}^{m,k,j}})\\
		& \leq \exp\left(\frac{3}{H}\right)\sum_{m',k',j'} \left(V_{h+1}^{k'}-V_{h+1}^\star\right)(x_{h+1}^{m',k',j'})\\
		&= \exp\left(\frac{3}{H}\right)\sum_{k=1}^K \phi_{h+1}^k.
	\end{align*}
	Combining with \Cref{proof_regret_hoeffding_eq2}, we complete the proof for \Cref{split_delta_part4}.
\end{proof}

\begin{proof}[{\bf Proof of \Cref{split_delta_part7}}]
	We split the summation into two parts: 
	$$\sum_{m,k,j}\beta_{t_h^{m,k,j}} = \sum_{m,k,j}\beta_{t_h^{m,k,j}}\mathbb{I}[0<t_h^{m,k,j}\leq M-1] + \sum_{m,k,j}\beta_{t_h^{m,k,j}}\mathbb{I}[t_h^{m,k,j}\geq M].$$
	For every pair $(x,a,h)$, we consider all the rounds indexed as $0 < k_1 < k_2 < ...$ satisfying the condition $n_s = n_h^{k_s}(x, a) > 0$. Suppose that $k_p$ is the round index such that $N_h^{k_p}(x,a)\leq M-1$ and $N_h^{k_{p+1}(x,a,h)}(x,a)> M-1$.
	Here, $k_s$s and $p$ are simplified notations for functions of $(x,a,h)$, and we use the simplified notations when there is no ambiguity and the stated meaning is only valid in the proof of \Cref{split_delta_part7}.
	
	To bound the first term, we use the fact that $\beta_{t_h^{m,k,j}}\leq O(1)\sqrt{H^3\iota}$. Then we have
	$$\sum_{m,k,j}\beta_{t_h^{m,k,j}}\mathbb{I}[0<t_h^{m,k,j}\leq M-1] \leq O(1)\sqrt{H^3\iota}\sum_{m,k,j}\mathbb{I}[0<t_h^{m,k,j}\leq M-1]$$
	and
	$$\sum_{m,k,j}\mathbb{I}[0<t_h^{m,k,j}\leq M-1]  \leq \sum_{(x,a)\in\mathcal{S}\times \mathcal{A}}\left(M-1+n_h^{k_p}(x,a)-n_h^{k_1}(x,a)\right).$$
	It is straightforward that $n_h^{k_1}(x,a) \geq 1$. Additionally, due to \Cref{eq_rel_TK2}, we can establish that $n_h^{k_p}(x,a)\leq M$. Therefore
	$$\sum_{m,k,j}\mathbb{I}[0<t_h^{m,k,j}\leq M-1] \leq 2SA(M-1)$$
	and
	\begin{equation}\label{split_delta_part5}
		\sum_{m,k,j}\beta_{t_h^{m,k,j}}\mathbb{I}[0<t_h^{m,k,j}\leq M-1]=O(SA(M-1)\sqrt{H^3\iota}).
	\end{equation}
	
	To establish bounds for the second term, we define some notions first. For every pair $(x,a,h)\in \sah$, we consider all the rounds indexed as $0 < \Tilde{k}_1 < \Tilde{k}_2 < ...<\Tilde{k}_{g}\leq K$ satisfying $n_h^{\tilde{k}_s}(x, a) > 0$, $N_h^{\Tilde{k}_1}(x,a) \geq M$ and $N_h^{\Tilde{k}_1-1}(x,a) < M$. Here, $\Tilde{k}_s$s and $g$ are simplified notations for functions of $(x,a,h)$, and we use the simplified notations when there is no ambiguity and the stated meaning is only valid in proof of \Cref{split_delta_part7}. Then we have
	$$\sum_{m,k,j}\beta_{t_h^{m,k,j}}\mathbb{I}[t_h^{m,k,j}\geq M]=O(1)\sum_{(x,a)\in\mathcal{S}\times \mathcal{A}}\sum_{s=1}^g n_h^{\Tilde{k}_s}(x,a)\sqrt{\frac{H^3\iota}{N_h^{\Tilde{k}_{s}}(x,a)}}.$$
	Firstly, we prove that 
	\begin{align*}
		\sum_{(x,a)\in\mathcal{S}\times \mathcal{A}}\sum_{s=1}^g \sum_{j''=1}^{n_h^{\Tilde{k}_s}(x,a)}\sqrt{\frac{H^3\iota}{N_h^{\Tilde{k}_{s}}(x,a)+j''-1}} &= O(1) \sum_{(x,a)\in\mathcal{S}\times \mathcal{A}}\sqrt{H^3\iota (N_h^{\Tilde{k}_{g}}(x,a)+n_h^{\Tilde{k}_g}(x,a)-1)}\\
		&=O(1) \sum_{(x,a)\in\mathcal{S}\times \mathcal{A}}\sqrt{H^3\iota N_{h}^{K+1}(x,a)} \\
		&\overset{(a)}\leq O(\sqrt{H^2\iota \hat{T}SA})
	\end{align*}
	where $(a)$ holds because of the concavity of $f(x)=\sqrt{H^3\iota x}$ and the fact that $\sum_{(x,a)\in\mathcal{S}\times \mathcal{A}} N_{h}^{K+1}(x,a)\leq \hat{T}/H$.
	
	Then we bound $\frac{1\Big/\sqrt{N_h^{\Tilde{k}_{s}}(x,a)}}{1\Big/\sqrt{N_h^{\Tilde{k}_{s}}(x,a)+d}}$. If we can find some numerical constant $C''>1$ such that
	$$C''\geq \max_{(j,d)\in \Tilde{B}} \frac{1\Big/\sqrt{N_h^{\Tilde{k}_{s}}(x,a)}}{1\Big/\sqrt{N_h^{\Tilde{k}_{s}}(x,a)+d}},\forall (x,a,h)\in \sah,$$
	in which $\Tilde{B} = \{(s,d)\in \mathbb{N}^2:1\leq s\leq g, 1\leq d \leq n_h^{\tilde{k}_s}(x,a)-1\}$, then we can have 
	\begin{align}
		\sum_{m,k,j}\beta_{t_h^{m,k,j}}\mathbb{I}[t_h^{m,k,j}\geq M] &= O(1)\sum_{(x,a)\in\mathcal{S}\times \mathcal{A}}\sum_{s=1}^gn_h^{\tilde{k}_s}(x,a)\sqrt{\frac{H^3\iota}{N_h^{\Tilde{k}_{s}}(x,a)}} \nonumber\\
		&\leq O(C'')\sum_{(x,a)\in\mathcal{S}\times \mathcal{A}}\sum_{s=1}^g \sum_{j''=1}^{n_h^{\Tilde{k}_s}(x,a)}\sqrt{\frac{H^3\iota}{N_h^{\Tilde{k}_{s}}(x,a)+j''-1}} \nonumber \\
		&= O(\sqrt{H^2\iota \hat{T}SA}). \label{split_delta_part6}
	\end{align}
	
	Next, we will prove that we can choose $C'' = \sqrt{2}$. We notice that
	$$\max_{d\in [n_h^{\Tilde{k}_s}(x,a)-1]} \frac{1\Big/\sqrt{N_h^{\Tilde{k}_{s}}(x,a)}}{1\Big/\sqrt{N_h^{\Tilde{k}_{s}}(x,a)+d}}=\sqrt{\frac{N_h^{\Tilde{k}_{s}}(x,a)+n_h^{\Tilde{k}_s}(x,a)-1}{N_h^{\Tilde{k}_{s}}(x,a)}}.$$ 
	If $M\leq N_h^{\Tilde{k}_{s}}(x,a)< i_0$, according to \Cref{eq_rel_TK2}, $\Tilde{n}_j(x,a)\leq M$, which indicates that
	$$\sqrt{\frac{N_h^{\Tilde{k}_{s}}(x,a)+n_h^{\Tilde{k}_s}(x,a)-1}{N_h^{\Tilde{k}_{s}}(x,a)}}\leq \sqrt{2}.$$
	If $N_h^{\Tilde{k}_{j}(x,a,h)}(x,a) \geq i_0$, according to \Cref{eq_rel_TK2}, $n_h^{\Tilde{k}_s}(x,a)\leq \Tilde{C}N_h^{\Tilde{k}_{s}}(x,a)$
	$$\sqrt{\frac{N_h^{\Tilde{k}_{s}}(x,a)+n_h^{\Tilde{k}_s}(x,a)-1}{N_h^{\Tilde{k}_{s}}(x,a)}}\leq \sqrt{1+\Tilde{C}}\leq \sqrt{2}.$$
	So, we can choose $C''=\sqrt{2}$. 
	
	Combining \Cref{split_delta_part5} and \Cref{split_delta_part6}, we obtain \Cref{split_delta_part7}.
\end{proof}

\section{Proof of Theorem \ref{thm_comm_cost_hoeffding}}
\label{sec:proof_comm}
\begin{proof}[Proof of Theorem \ref{thm_comm_cost_hoeffding}]
	This theorem is proved under the synchronization assumption, i.e., $n^{m,k} = n^k,\forall m\in [M]$.
	We only need to prove that when $k\geq H^2(H + 1)SAM$,
	$$\left[\left(1+\frac{1}{2H(H+1)M}\right)^{\lceil K/(HSA)\rceil - H(H+1)M}\right]H^2(H+1)M^2\leq \hat{T}.$$
	For each $k\in [K]$, there exists at least one $(x,m,h)\in \mathcal{S}\times [M]\times [H]$ with $a = \pi_h^k(x)$ such that equality in \Cref{eq_rel_TK1} holds. Thus, there exist at least $K$ different tuples of $(x,a,h,m,k)\in \sah\times [M]\times [K]$ such that equality in \Cref{eq_rel_TK1} holds. Define set $\mathcal{K}$ to have all the different $k$'s satisfying that there exists $m\in [M]$ such that the equality in \Cref{eq_rel_TK1} holds. Then, by Pigeonhole principle, there must exist a triple $(x,a,h)\in \sah$ such that $| \mathcal{K} | \geq \lceil K/(HSA)\rceil$.

	We order the elements of $\mathcal{K}$ as $0<k_1<k_2\ldots<k_g\leq K$, where $g\geq \lceil K/(HSA)\rceil$. 
	We also denote $N_s =N_h^{k_s+1}(x,a),$ $m_s$ as the first agent index such that equality in \Cref{eq_rel_TK1} holds, and $n_s = n_h^{m_s,k_s}(x,a)$. Due to the synchronization assumption, we have
	\begin{equation}\label{proof_rel_TK_eq1}
		\hat{T}\geq HM\sum_{s=1}^g n_s.
	\end{equation}
	When $N_{s}\geq H(H+1)M$, we have that
	$$N_{s}\geq \sum_{s'=1}^s n_{s'},n_{s+1}\geq \Tilde{C}N_{s}/(2M)$$
	due to \Cref{eq_rel_TK1} and $\left\lfloor\frac{s'}{H(H+1)M}\right\rfloor\geq \frac{s'}{2H(H+1)M},\forall s'\geq M(H+1)H.$
	
	Thus, we have that
	$\sum_{s = 1}^{H(H+1)M} n_s\geq H(H+1)M$
	and
	$$\sum_{s' = 1}^{s+1} n_{s'}\geq \sum_{s' = 1}^{s} n_{s'} + \Tilde{C}N_{s}/(2M)\geq (1+\Tilde{C}/(2M))\sum_{s' = 1}^{s} n_{s'},s\geq H(H+1)M.$$
	
	Therefore,
	\begin{align*}
		\sum_{s' = 1}^{g} n_{s'} &\geq \left[\left(1+\Tilde{C}/(2M)\right)^{g-H(H+1)M}\right]H(H+1)M\\
		&\geq \left[\left(1+\Tilde{C}/(2M)\right)^{\lceil K/(HSA)\rceil - H(H+1)M}\right]H(H+1)M.
	\end{align*}
	Combining with \Cref{proof_rel_TK_eq1}, we have
	$$\left[\left(1+\frac{1}{2H(H+1)M}\right)^{\lceil K/(HSA)\rceil - H(H+1)M}\right]H^2(H+1)M^2\leq \hat{T},$$
	which directly leads to the conclusion.
\end{proof}

\section{The Bernstein-type Algorithm}\label{appx:fedQ_B}
\subsection{Algorithm Design}
The Bernstein-type algorithm differs from the Hoeffding-type algorithm \Cref{alg_hoeffding_server,alg_hoeffding_agent}, in that it selects the upper confidence bound based on a variance estimator of $X_i$, akin to the approach used in the Bernstein-type algorithm in \cite{jin2018q}. This is done to determine a probability upper bound of $|\sum_{i=1}^t \theta_t^iX_i|$. In this subsection, we first introduce the algorithm design.

To facilitate understanding, we introduce additional notations exclusive to Bernstein-type algorithms, supplementing the already provided notations for the Hoeffding-type algorithm.
$$\mu_{h}^{m,k}(x,a) = \frac{1}{n_h^{m,k}(x,a)}\sum_{j=1}^{n^{m,k}} \left[V_{h+1}^k\left(x_{h+1}^{m,k,j}\right)\right]^2\mathbb{I}[(x_h^{m,k,j},a_h^{m,k,j}) = (x,a)].$$
$$\mu_{h}^k(x,a) = \frac{1}{N_h^{k+1}(x,a) - N_h^{k}(x,a)}\sum_{m=1}^M \mu_h^{m,k}(x,a)n_h^{m,k}(x,a).$$
Here, $\mu_h^{m,k}(x,a)$ is the sample mean of $\left[V_{h+1}^k(x_{h+1}^{m,k,j})\right]^2$ for all the visits of $(x,a,h)$ for the $m-$th agent during the $k-$th round and $\mu_h^{k}(x,a)$ corresponds to the mean for all the visits during the $k-$th round. We emphasize here that we adopt the general notation $n^{m,k}$ in the definition of $\mu_h^{m,k}$. We define $W_k(x,a,h)$ to denote the sample variance of all the visits before the $k-$th round, calculated using $V_{h+1}^{k^i}(x_{h+1}^{m^i,k^i,j^i})$, i.e.
$$W_k(x,a,h) = \frac{1}{N_h^{k}(x,a)}\sum_{i=1}^{N_h^{k}(x,a)} \left(V_{h+1}^{k^i}(x_{h+1}^{m^i,k^i,j^i}) - \frac{1}{N_h^k(x,a)}\sum_{i'=1}^{N_h^{k}(x,a)} V_{h+1}^{k^i}(x_{h+1}^{m^i,k^i,j^i})\right)^2.$$
We can find that
$$W_k(x,a,h) = \frac{1}{N_h^k(x,a)}\sum_{k'=1}^{k-1} \mu_{h}^{k'}(x,a)n_h^{k'}(x,a) - \left[\frac{1}{N_h^{k}(x,a)}\sum_{k'=1}^{k-1} v_{h+1}^{k'}(x,a)n_h^{k'}(x,a)\right]^2,$$
which means that this quantity can be calculated efficiently in practice in the following way. Define
\begin{equation}\label{eq_w1w2}
	W_{1,k}(x,a,h) = \sum_{k'=1}^{k-1} \mu_{h}^{k'}(x,a)n_h^{k'}(x,a),W_{2,k}(x,a,h) = \sum_{k'=1}^{k-1} v_{h+1}^{k'}(x,a)n_h^{k'}(x,a),
\end{equation}
we have that
\begin{equation}\label{eq_w1}
	W_{1,k+1}(x,a,h) = W_{1,k}(x,a,h) + \mu_{h}^{k}(x,a)n_h^{k}(x,a),
\end{equation}
\begin{equation}\label{eq_w2}
	W_{2,k+1}(x,a,h) = W_{2,k}(x,a,h) + v_{h+1}^{k}(x,a)n_h^{k}(x,a)
\end{equation}
and
\begin{equation}\label{eq_w_w1w2}
	W_{k+1}(x,a,h) = \frac{W_{1,k+1}(x,a,h)}{N_h^{k+1}(x,a)} - \left[\frac{W_{2,k+1}(x,a,h)}{N_h^{k+1}(x,a)}\right]^2.
\end{equation}
This indicates that the central server, by actively maintaining and updating the quantities $W_{1,k}$ and $W_{2,k}$ and systematically collecting $n_h^{m,k}$s, $\mu_h^{m,k}$s and $v_{h+1}^{m,k}$s, is able to compute $W_{k+1}$.

Next, we define
$$\beta_t(x,a,h) = c'\left(\min\left\{\sqrt{\frac{H\iota}{t}(W_{k^t+1}(x,a,h)+H)}+\iota\frac{\sqrt{H^7SA}+\sqrt{MSAH^{6}}}{t},\sqrt{\frac{H^3\iota}{t}}\right\}\right),$$
in which $c'>0$ is a positive constant. Here, $W_{k^t+1}(x,a,h) = W^t(x,a,h)$ which is mentioned in \Cref{sec:extend_bernstein}. With this, the upper confidence bound $b_t(x,a,h)$ for a single visit is determined by 
$$\beta_t(x,a,h) = 2\sum_{i=1}^t \theta_t^i b_t(x,a,h),$$
which can be calculated as follows:
$$b_1(x, a, h):=\frac{\beta_1(x, a, h)}{2},$$
$$b_t(x, a, h):=\frac{\beta_t(x, a, h)-\left(1-\alpha_t\right) \beta_{t-1}(x, a, h)}{2 \alpha_t}.$$
When there is no ambiguity, we adopt the simplified notation $\tilde{b}_t = b_t(x,a,h)$ and $\tilde{\beta}_t = \beta_t(x, a, h)$. In the Bernstein-type algorithm, we let $\Tilde{\beta} = \beta_{t^k}(x,a,h) - \alpha^c(t^{k-1}+1,t^k)\beta_{t^{k-1}}(x,a,h)$ in replace of $\beta^k$ in \Cref{update_q_small_n} and \Cref{update_q_large_n}. We know that 
$\tilde{\beta}_t\leq \beta_t$ when $c = c'$, indicating that the Bernstein-type algorithm operates with a smaller upper confidence bound.  

Next, we will delve into certain components of the algorithm in round $k$. We remark that we discuss our algorithm based on the general situation where there is no necessity for zero latency and synchronization assumptions. In this general scenario, agent $m$ generates $n^{m,k}$ episodes in round $k$.

\textbf{Coordinated Exploration for Agents.} At the beginning of round $k$, the server decides a deterministic policy $\pi^k = \{\pi_{h}^k\}_{h=1}^H$, and then broadcasts it along with $\{N_h^k(x,\pi_{h}^k(x))\}_{x,h}$ and $\{V_h^k(x)\}_{x,h}$ to all of the agents. When $k=1$, $N_h^1(x,a) = 0, Q_h^1(x,a) = V_h^1(x) = H, \forall (x,a,h)\in \sah$ and $\pi^1$ is an arbitrary deterministic policy. 

Once receiving such information, the agents will execute policy $\pi^k$ and start collecting trajectories. 

\textbf{Event-Triggered Termination of Exploration.} During exploration, every agent $m$ will monitor $n_h^{m,k}(x,a)$, i.e., the total number of visits for each  $(x,a,h)$  triple within the current round. For any agent $m$, at the end of each episode, it sequentially conducts two procedures. First, if any $(x,a,h)$ has been visited by $\max\left\{1,\lfloor\frac{\tilde{C}}{M}N_h^k(x, a)\rfloor\right\}$ times by agent $m$, it will abort its own exploration and send an abortion signal to the server and other clients. Second, it checks whether it has received an abortion signal. If so, it will abort its exploration. We remark that \Cref{rough_num_ep} still holds, and for any $k\in [K]$, there exists a tuple $(x,a,h)$ such that the equality is met.

\textbf{Local Updating of the Estimated Expected Return.} 
Each agent updates the local estimate of the expected return $v_{h+1}^{m,k}(x,a)$ at the end of round $k$.
Next, each agent $m$ sends $\{r_h(x,\pi_h^k(x))\}_{x,h}$,$\{n_h^{m,k}(x,\pi_h^k(x))\}_{x,h}$, $\{v_{h+1}^{m,k}(x,\pi_h^k(x))\}_{x,h}$ and $\{\mu_{h}^{m,k}(x,\pi_h^k(x))\}_{x,h}$ to the central server for aggregation. 

\textbf{Server-side Information Aggregation.} 
After receiving the information sent by the agents, for each $(x,a,h)$ tuple visited by the agents, the server first calculates $W_{1,k+1}(x,a,h)$, $W_{2,k+1}(x,a,h)$ and $W_{k+1}(x,a,h)$ based on \Cref{eq_w1w2}, \Cref{eq_w1}, \Cref{eq_w2} and \Cref{eq_w_w1w2} for each pair $(x,h)$ with $a = \pi_h^k(x)$. Then it sets $t^{k-1} = N_h^k(x,a), t^k = N_h^{k+1}(x,a)$, $\alpha_{agg} = 1-\alpha^c(t^{k-1}+1,t^k)$ and $\tilde{\beta} = \beta_{t^k}(x,a,h) - \alpha^c(t^{k-1}+1,t^k)\beta_{t^{k-1}}(x,a,h)$, and updates the global estimate of the value functions according to one of the following two cases. 
\begin{itemize}[topsep=0pt, left=0pt] 
	\item \textbf{Case 1:} $N_h^k(x,a)< i_0$. {Due to \Cref{rough_num_ep}, this case implies that each client can visit each $(x,a)$ pair at step $h$ at most once.} Then, we denote $1\leq m_1<m_2\ldots < m_{t^k-t^{k-1}}\leq M$ as the agent indices with $n_{h}^{m,k}(x,a)>0$. The server then updates the global estimate of action values as follows:
	\begin{equation}\label{update_q_small_n_b}
		Q_h^{k+1}(x,a)= (1-\alpha_{agg})Q_h^k(x,a)+\alpha_{agg}r_h(x,a)+\sum_{t = 1}^{t^{k} - t^{k-1}}\theta_{t^k}^{t^{k-1}+t}v_{h+1}^{m_t,k}(x,a) + \tilde{\beta}/2.
	\end{equation}
	\item \textbf{Case 2:} $N_h^k(x,a)\geq i_0$. In this case, the central server calculates $v_{h+1}^k(x,a)$ as and updates the $Q$-estimate as
	\begin{equation}\label{update_q_large_n_b}
		Q_h^{k+1}(x,a)= (1-\alpha_{agg})Q_h^k(x,a)+\alpha_{agg}\left(r_h(x,a)+v_{h+1}^k(x,a)\right) + \tilde{\beta}/2.
	\end{equation}
\end{itemize}

After finishing updating the estimated $Q$ function, the central server updates the estimated value function and the policy based on \Cref{rel_V_Q_est,rel_policy_Q}.
The algorithm then proceeds to round $k+1$.

\Cref{alg_bernstein_server,alg_bernstein_agent} formally present the Bernstein-type design. Inputs $K_0$ and $T_0$ in Algorithms \ref{alg_bernstein_server} are termination conditions, where $K_0$ limits the total number of rounds and $T_0$ limits the total number of samples generated by all the agents before the last round.

\begin{algorithm}[ht]
	\caption{FedQ-Bernstein (Central Server)}
	\label{alg_bernstein_server}
	\begin{algorithmic}[1]
		\STATE {\bf Input:} $T_0,K_0\in\mathbb{N}_+$.
		\STATE {\bf Initialization:} $k=1$, $N_h^1(x,a) = W_{1,k}(x,a,h) = W_{2,k}(x,a,h) = 0, Q_h^1(x,a) = V_h^1(x) = H, \forall (x,a,h)\in \sah$ and $\pi^1 = \left\{\pi_h^1: \mathcal{S} \rightarrow \mathcal{A}\right\}_{h \in[H]}$ is an arbitrary deterministic policy. 
		\WHILE{$H\sum_{k'=1}^{k-1} Mn^{k'}< T_0\,\& \,k\leq K_0$}
		\STATE Broadcast $\pi^k,$ $\{N_h^k(x,\pi_{h}^k(x))\}_{x,h}$ and $\{V_h^k(x)\}_{x,h}$ to all clients.
		\STATE Wait until receiving an abortion signal and send the signal to all agents.
		\STATE Receive $\{r_h(x,\pi_h^k(x))\}_{x,h}$,$\{n_h^{m,k}(x,\pi_h^k(x))\}_{x,h,m}$, $\{v_{h+1}^{m,k}(x,\pi_h^k(x))\}_{x,h,m}$ and $\{\mu_{h}^{m,k}(x,\pi_h^k(x))\}_{x,h,m}$ from clients.
		\STATE Calculate $N_h^{k+1}(x,a), n_{h}^k(x,a),v_{h+1}^k(x,a),\forall (x,h)\in \mathcal{S}\times [H]$ with $a = \pi_h^k(x)$.
		\STATE Calculate $W_k(x,a,h),W_{k+1}(x,a,h),W_{1,k+1}(x,a,h),W_{2,k+1}(x,a,h),$ $\forall (x,h)\in \mathcal{S}\times [H]$ with $a = \pi_h^k(x)$ based on \Cref{eq_w1w2}, \Cref{eq_w1}, \Cref{eq_w2} and \Cref{eq_w_w1w2}.
		\FOR{$(x,a,h)\in \sah$}
		\IF{$a\neq \pi_h^k(x)$ \OR $n_h^k(x,a) = 0$}
		\STATE $Q_h^{k+1}(x,a)\leftarrow Q_h^{k}(x,a)$.
		\ELSIF{$N_h^k(x,a)<i_0$}
		\STATE Update $Q_h^{k+1}(x,a)$ according to \Cref{update_q_small_n_b}.
		\ELSE
		\STATE Update $Q_h^{k+1}(x,a)$ according to \Cref{update_q_large_n_b}.
		\ENDIF
		\ENDFOR
		\STATE Update $V_h^{k+1}$ and $\pi^{k+1}$ according to \Cref{rel_V_Q_est} and \Cref{rel_policy_Q}.
		\STATE $k\gets k+1$.
		\ENDWHILE
		
	\end{algorithmic}
\end{algorithm}

\begin{algorithm}[ht]
	
	\caption{FedQ-Bernstein (Agent $m$ in round $k$)}
	\label{alg_bernstein_agent}
	\begin{algorithmic}[1]
		\STATE $n_h^m(x,a)=v_{h+1}^m(x,a)=r_h(x,a)=\mu_{h}^m(x,a)=0,\forall (x,a,h)\in \sah$. 
		\STATE Receive $\pi^k,$ $\{N_h^k(x,\pi_{h}^k(x))\}_{x,h}$ and $\{V_h^k(x)\}_{x,h}$ from the central server. 
		\WHILE{no abortion signal from the central server}
		\WHILE{$n_h^{m}(x_h,a_h) < \max\left\{1,\lfloor\frac{\tilde{C}}{M}N_h^k(x_h, a_h)\rfloor\right\}, \forall (x,a,h)\in \sah$} 
		\STATE Collect a new trajectory $\{(x_h,a_h,r_h)\}_{h=1}^H$ with $a_h = \pi_h^k(x_h)$.
		\STATE $n_h^m(x_h,a_h)\leftarrow n_h^m(x_h,a_h) + 1,$ $v_{h+1}^m(x_h,a_h)\leftarrow v_{h+1}^m(x_h,a_h) + V_{h+1}^k(x_{h+1})$, $\mu_{h}^m(x_h,a_h)\leftarrow \mu_{h}^m(x_h,a_h) + \left[V_{h+1}^k(x_{h+1})\right]^2$,
		and ${r}_h(x_h,a_h)\leftarrow r_h,\forall h\in [H].$
		\ENDWHILE
		\STATE Send an abortion signal to the central server.
		\ENDWHILE
		\STATE $n_h^{m,k}(x,a)\leftarrow n_h^{m}(x,a), v_{h+1}^{m,k}(x,a)\leftarrow v_{h+1}^{m}(x,a)/n_h^{m}(x,a)$ and $\mu_{h}^{m,k}(x,a)\leftarrow \mu_{h}^{m}(x,a)/n_h^{m}(x,a), \forall (x,h)\in \mathcal{S}\times [H]$ with $a = \pi_h^k(x)$. 
		\STATE Send $\{r_h(x,\pi_h^k(x))\}_{x,h}$,$\{n_h^{m,k}(x,\pi_h^k(x))\}_{x,h}$, $\{\mu_{h}^{m,k}(x,\pi_h^k(x))\}_{x,h}$ and $\{v_{h+1}^{m,k}(x,\pi_h^k(x))\}_{x,h}$ to the central server.
		
	\end{algorithmic}
\end{algorithm}

We provide some remarks. First, the coordinated exploration for agents is designed based on the general situation where $n^{m,k}$ might be different across different agents, and clients share $\mu_{h}^{m,k}$s in addition to the information shared during the coordinated exploration in the Hoeffding-type Algorithm \ref{alg_hoeffding_agent}. Second, the information aggregation at the central server differs from that in the Hoeffding-type Algorithm \ref{alg_hoeffding_server}, in terms of specifying $\tilde{\beta}$ to set the upper confidence bound and in maintaining $W_{1,k},W_{2,k}$ and $W_k$.

\subsection{Proof of Theorem \ref{thm_regret_bernstein}}
In this subsection, we provide proof for Theorem \ref{thm_regret_bernstein} which provides the regret of \Cref{alg_bernstein_server,alg_bernstein_agent}. 
\subsubsection{Bounds on $Q_h^k - Q_h^\star$}
\label{sec:bounds_qq_hoeffding}
We first try to provide a Lemma that has stronger results than Lemma \ref{lemma_ineq_rel_hoeffding}.
\begin{lemma}\label{lemma_ineq_rel_bernstein}
	Using \Cref{alg_bernstein_server,alg_bernstein_agent}, there exists a positive constant $c'>0$ such that, for any $p\in(0,1)$, the following relationship holds simultaneously for all $(x,a,h,K')\in \mathcal{S}\times \mathcal{A}\times [H]\times [K]$ with probability at least $1-p$.
	\begin{equation}\label{bound_gap_bernstein}
		0\leq Q_h^{K'}(x,a) - Q_h^\star(x,a)\leq \theta_t^0 H + \sum_{i=1}^t \tilde{\theta}_t^i (V_{h+1}^{k^i} - V_{h+1}^\star)(x_{h+1}^{m^i,k^i,j^i})+ \beta_t(x,a,h),
	\end{equation}
	in which $t = N_{h}^{K'}(x,a)$.
\end{lemma}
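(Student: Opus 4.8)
The plan is to reproduce the two-step structure behind Lemma~\ref{lemma_ineq_rel_hoeffding}, replacing the Azuma--Hoeffding concentration by a variance-aware (Freedman/Bernstein) one. First I would observe that the recursive decomposition of Lemma~\ref{lemma_rel_QQ_hoeffding} carries over unchanged, because \Cref{update_q_small_n_b,update_q_large_n_b} have exactly the algebraic form of \Cref{update_q_small_n,update_q_large_n} with $\beta^k/2$ replaced by $\tilde\beta/2$. Telescoping the per-round increments $\tilde\beta=\beta_{t^k}(x,a,h)-\alpha^c(t^{k-1}+1,t^k)\beta_{t^{k-1}}(x,a,h)$ across rounds (using $\beta_0=0$ and the identity $\sum_{i=1}^t\theta_t^i b_i=\tfrac12\beta_t(x,a,h)$ built into the choice of $b_t$), one obtains
$$(Q_h^{K'}-Q_h^\star)(x,a)=\tilde\theta_t^0(H-Q_h^\star(x,a))+\sum_{i=1}^t\tilde\theta_t^iX_i+\sum_{i=1}^t\tilde\theta_t^i(V_{h+1}^{k^i}-V_{h+1}^\star)(x_{h+1}^{m^i,k^i,j^i})+\tfrac12\beta_t(x,a,h),$$
with $X_i=(\tilde{\mathbb E}_{x,a,h,i}-\mathbb E_{x,a,h})V_{h+1}^\star$. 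Thus it suffices to show $|\sum_{i=1}^t\tilde\theta_t^iX_i|\le\tfrac12\beta_t(x,a,h)$ and then run a downward induction on $h$ to get both the optimism bound and the matching upper bound.

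The core is a Bernstein concentration for $|\sum_{i=1}^t\tilde\theta_t^iX_i|$, which I would prove by the same martingale/non-martingale split as in Lemma~\ref{lemma_concen_Hoeffding}, writing $\sum_i\tilde\theta_t^iX_i=\sum_i\theta_t^iX_i+\sum_i(\tilde\theta_t^i-\theta_t^i)X_i$. For the martingale term I would apply Freedman's inequality instead of Azuma--Hoeffding, yielding a bound of order $\sqrt{(\iota/t)\sum_i(\theta_t^i)^2\,\mathrm{Var}_i}$ where $\mathrm{Var}_i$ is the conditional variance of $V_{h+1}^\star(x_{h+1})$; combined with $\sum_i(\theta_t^i)^2\le 2H/t$ from Lemma~\ref{property_theta}(c) this gives the leading $\sqrt{(H\iota/t)\,\overline{\mathrm{Var}}}$ term. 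The non-martingale drift is handled exactly as before---grouping by rounds, relating $\tilde\theta_{t_3}^i$ and $\theta_{t_3}^i$ to the current round via Lemma~\ref{property_tilde_theta}(a), and exploiting the round-wise equal-weight structure---so that it contributes only a lower-order term. The remaining work is to replace the population variance $\overline{\mathrm{Var}}$ by the computable estimator $W^t(x,a,h)$ of \Cref{appx:fedQ_B}, showing it concentrates around $\tfrac1t\sum_i\theta_t^i\,\mathrm{Var}_i$ up to errors from (i) estimating a variance from finitely many samples and (ii) the systematic gap between $V_{h+1}^{k^i}$ and $V_{h+1}^\star$; these two slacks are precisely what produce the additive correction $\iota(\sqrt{H^7SA}+\sqrt{MSAH^6})/t$ in \eqref{eqn:ucb_b}, with $\sqrt M$ reflecting that all visits within a round share a single empirical mean pooled across the $M$ agents. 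Since $\beta_t$ is a minimum of a Bernstein-type and a Hoeffding-type radius, establishing the Freedman bound and reusing the bound of Lemma~\ref{lemma_concen_Hoeffding} simultaneously yields $|\sum_i\tilde\theta_t^iX_i|$ below both, hence below their minimum.

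Finally I would close the induction. Assuming $V_{h+1}^{k}\ge V_{h+1}^\star$ for all $k$, the term $\sum_i\tilde\theta_t^i(V_{h+1}^{k^i}-V_{h+1}^\star)$ is nonnegative, $\tilde\theta_t^0(H-Q_h^\star)\ge 0$, and $\sum_i\tilde\theta_t^iX_i+\tfrac12\beta_t\ge 0$ by the concentration bound, so $Q_h^{K'}\ge Q_h^\star$ and therefore $V_h^{k}\ge V_h^\star$ at step $h$; the stated upper bound \Cref{bound_gap_bernstein} then reads off from the same decomposition, absorbing the concentration term into the second half of $\beta_t$ and using $\tilde\theta_t^0=\theta_t^0$. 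The genuine obstacle is the variance-transfer step: unlike the single-agent Bernstein analysis of \cite{jin2018q}, here $W^t$ is assembled from per-round, equal-weight aggregates $\mu_h^{m,k}$ and $v_{h+1}^{m,k}$ that are not causally adapted to the global visiting order, so I must control the discrepancy between this federated empirical variance and the martingale quadratic variation---the Bernstein counterpart of the non-martingale difficulty already encountered in the Hoeffding proof.
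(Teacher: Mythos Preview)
Your proposal captures the right skeleton---the recursive decomposition, the martingale/non-martingale split with Freedman on $\sum_i\theta_t^iX_i$, and the reuse of the Hoeffding drift bound on $\sum_i(\tilde\theta_t^i-\theta_t^i)X_i$. The gap is in the variance-transfer step, and your diagnosis of its source is off. The estimator $W_{K'+1}(x,a,h)$ is the ordinary unweighted sample variance of $\{V_{h+1}^{k^i}(x_{h+1}^{m^i,k^i,j^i})\}_{i\le t}$; every summand is fixed by the end of round $k^i<K'$, so there is no adaptedness problem and the federated pooling plays no role. The paper writes $|W_{K'+1}-[\mathbb{V}_hV_{h+1}^\star]|\le|P_1-P_2|+|P_2-P_3|+|P_3-P_4|$, handles the first two by Azuma--Hoeffding on the sample second moment and sample mean, and isolates the hard piece as $|P_3-P_4|\le \tfrac{H}{t}\sum_{i\le t}|V_{h+1}^{k^i}-V_{h+1}^\star|(x_{h+1}^{m^i,k^i,j^i})$. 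Under optimism the absolute value drops, but you still must bound $\tfrac{1}{t}\sum_i(V_{h+1}^{k^i}-V_{h+1}^\star)$; this is where the correction $\iota(\sqrt{H^7SA}+\sqrt{MSAH^6})/t$ actually originates, and it requires a separate weighted-sum lemma (Lemma~\ref{Lemma_rel_hh+1w_bernstein}), itself proved by recursion on $h$ using a preliminary Hoeffding upper bound (Lemma~\ref{lemma_ineq_rel0_bernstein}) as a bootstrap. Your plan supplies neither ingredient, so the link from the Freedman bound in the true variance to the algorithm's bonus $\beta_t$ in the empirical $W^t$ is not closed.

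A smaller point: the paper runs the outer induction on the round index $K'$---the hypothesis at $K'\le K_0'$ gives $V_{h+1}^{k^i}\ge V_{h+1}^\star$ for all visits $i$ before round $K_0'+1$ (so $|P_3-P_4|$ can be unsigned), and then one recurses on $h$ within the step $K_0'+1$. Your downward-on-$h$ induction can also be made to work, since $\beta_t\le c'\sqrt{H^3\iota/t}$ recovers the Hoeffding bootstrap automatically from the Bernstein bound at step $h+1$; but you should make the circularity and its resolution explicit.
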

The remaining content of \Cref{sec:bounds_qq_hoeffding} is dedicated to proving this Lemma. First, we can easily find that Lemma \ref{lemma_rel_QQ_hoeffding} still holds with $b_t,\beta_t$ replaced by $\tilde{b}_t,\tilde{\beta}_t$, and Lemma \ref{lemma_concen_Hoeffding} still holds. Next, due to \Cref{ineq_concen_hoeffding} and \Cref{rel_QQ_hoeffding} with $b_t,\beta_t$ replaced, we can easily obtain a similar one-sided result summarized in the following Lemma.
\begin{lemma}\label{lemma_ineq_rel0_bernstein}
	Using the Bernstein-type algorithm, there exists a positive constant $c_0'>0$ such that, for any $p\in(0,1)$, the following relationship holds simultaneously for all $(x,a,h,K')\in \mathcal{S}\times \mathcal{A}\times [H]\times [K]$ with probability at least $1-p$.
	\begin{equation}\label{bound_gap_bernstein0}
		Q_h^{K'}(x,a) - Q_h^\star(x,a)\leq \theta_t^0 H + \sum_{i=1}^t \tilde{\theta}_t^i (V_{h+1}^{k^i} - V_{h+1}^\star)(x_{h+1}^{m^i,k^i,j^i})+ c_0'\sqrt{H^3\iota/t},
	\end{equation}
	in which $t = N_{h}^{K'}(x,a)$.
\end{lemma}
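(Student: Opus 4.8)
The plan is to transplant the proof of \Cref{lemma_ineq_rel_hoeffding} and simply discard the reverse (optimism) inequality, since the target \eqref{bound_gap_bernstein0} is one-sided. First I would invoke the recursive identity of \Cref{lemma_rel_QQ_hoeffding}, which --- as already noted in the text --- continues to hold verbatim for the Bernstein-type algorithm once $b_i$ is replaced by its variance-aware per-visit counterpart $\tilde b_i$. Writing $t = N_h^{K'}(x,a)$, this yields
\begin{align*}
(Q_h^{K'} - Q_h^\star)(x,a) &= \tilde{\theta}_t^0\bigl(H - Q_h^\star(x,a)\bigr) + \sum_{i=1}^t \tilde{\theta}_t^i(\tilde{\mathbb{E}}_{x,a,h,i} - \mathbb{E}_{x,a,h})V_{h+1}^\star \\
&\quad + \sum_{i=1}^t \tilde{\theta}_t^i (V_{h+1}^{k^i} - V_{h+1}^\star)(x_{h+1}^{m^i,k^i,j^i}) + \sum_{i=1}^t \theta_t^i \tilde{b}_i .
\end{align*}
The third term on the right is exactly the term appearing in \eqref{bound_gap_bernstein0}, so it remains only to bound the other three.

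For the initialization term I would use $\tilde\theta_t^0 = \theta_t^0$ together with $0 \le Q_h^\star(x,a) \le H$ to get $\tilde\theta_t^0\bigl(H - Q_h^\star(x,a)\bigr) \le \theta_t^0 H$. The concentration term is precisely the quantity controlled by \Cref{lemma_concen_Hoeffding}: its proof concerns only the martingale differences $X_i = (\tilde{\mathbb{E}}_{x,a,h,i} - \mathbb{E}_{x,a,h})V_{h+1}^\star$ and the reweighting $\tilde\theta_t^i$, neither of which depends on the choice of bonus, so \eqref{ineq_concen_hoeffding} applies unchanged and bounds it by $c_0\sqrt{H^3\iota/t}$ simultaneously over all $(x,a,h,K')$ on an event of probability at least $1-p$. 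For the bonus term, the per-visit bonuses are defined so that $\tilde\beta_t(x,a,h) = 2\sum_{i=1}^t \theta_t^i \tilde b_i$, whence $\sum_{i=1}^t \theta_t^i \tilde b_i = \tfrac12\beta_t(x,a,h)$; the $\min$ in the definition \eqref{eqn:ucb_b} of $\beta_t$ then gives the deterministic bound $\beta_t(x,a,h) \le c'\sqrt{H^3\iota/t}$. Collecting the three estimates and setting $c_0' = c_0 + c'/2$ produces \eqref{bound_gap_bernstein0} on the same high-probability event.

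I would close by flagging why only the upper direction is obtained here, and why this is the \emph{easy} companion of \Cref{lemma_ineq_rel_bernstein}. The reverse inequality $Q_h^{K'} \ge Q_h^\star$ requires the bonus to \emph{dominate} the fluctuation $\bigl|\sum_i \tilde\theta_t^i X_i\bigr|$; since the whole purpose of the Bernstein bonus is to be much smaller than $\sqrt{H^3\iota/t}$ whenever the variance is small, establishing optimism needs the sharp variance-dependent martingale bound built on the estimator $W_k(x,a,h)$, which is deferred to \Cref{lemma_ineq_rel_bernstein}. Here, by contrast, I only ever bound the bonus and the fluctuation from above by multiples of $\sqrt{H^3\iota/t}$, so no variance analysis enters. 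The only genuine bookkeeping is verifying that the per-round increments $\tilde\beta = \beta_{t^k} - \alpha^c(t^{k-1}+1,t^k)\beta_{t^{k-1}}$ used in \eqref{update_q_small_n_b}--\eqref{update_q_large_n_b} telescope across rounds to exactly $\sum_{i=1}^t \theta_t^i \tilde b_i$, and that $\tilde\theta_t^0 = \theta_t^0$; both are immediate from the definitions and \Cref{property_tilde_theta}. I do not expect any real obstacle, since the delicate non-martingale concentration has been quarantined inside \Cref{lemma_concen_Hoeffding}.
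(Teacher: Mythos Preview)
Your proposal is correct and follows essentially the same route as the paper: invoke the recursion of \Cref{lemma_rel_QQ_hoeffding} with $b_i$ replaced by $\tilde b_i$, bound the fluctuation by \Cref{lemma_concen_Hoeffding}, and cap the bonus via the $\min$ in \eqref{eqn:ucb_b}. The paper's own proof is a one-line pointer to exactly these two ingredients, so your write-up is simply a more explicit version of it.
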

\begin{proof}
	This relationship can be directly obtained from \Cref{ineq_concen_hoeffding} and \Cref{rel_QQ_hoeffding} with $b_t,\beta_t$ replaced.
\end{proof}

With this, we can introduce the following technical Lemma.
\begin{lemma}\label{Lemma_rel_hh+1w_bernstein}
	Suppose that \Cref{bound_gap_bernstein0} holds. For any given $K'\in\mathbb{N}$, denote $\sum_{m,k,j}^{K'} = \sum_{k=1}^{K'}\sum_{m = 1}^M\sum_{j=1}^{n_h^{m,k}}$ and $w = \mbox{vec}(\{w_{mkj}\})$ with $m\in [M],k\in[K'],j\in [n^{m,k}]$ be a non-negative vector. Then there exists a numerical constant $c_1'>0$ such that, for all $(h,K')\in [H]\times [K]$,
	\begin{align}
		\sum_{m,k,j}^{K'} &w_{mkj}\left(V_h^k(x_h^{m,k,j}) - V_h^\star(x_h^{m,k,j})\right)\nonumber\\
		&\leq c_1'\left(\|w\|_\infty MSA\sqrt{H^5\iota} + \sqrt{SA\|w\|_\infty\|w\|_1H^5\iota}+H^4SA(M-1)\|w\|_\infty\right). \label{rel_hh+1w}
	\end{align}
	
\end{lemma}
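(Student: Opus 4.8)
The plan is to reduce the weighted $V$-gap at step $h$ to a weighted $Q$-gap, feed in the one-sided bound \Cref{bound_gap_bernstein0}, and unroll the resulting recursion down to step $H+1$, where $V_{H+1}^{k}=V_{H+1}^\star=0$. Write $\Phi_h(w)$ for the left-hand side of \Cref{rel_hh+1w}. Since $a_h^{m,k,j}=\pi_h^k(x_h^{m,k,j})$ maximizes $Q_h^k(x_h^{m,k,j},\cdot)$ while $V_h^\star(x)\ge Q_h^\star(x,a)$ for every $a$, we have $(V_h^k-V_h^\star)(x_h^{m,k,j})\le (Q_h^k-Q_h^\star)(x_h^{m,k,j},a_h^{m,k,j})$, so writing $t=t_h^{m,k,j}:=N_h^k(x_h^{m,k,j},a_h^{m,k,j})$, \Cref{bound_gap_bernstein0} gives
\begin{align*}
\Phi_h(w)
&\le \underbrace{H\sum_{m,k,j}^{K'} w_{mkj}\,\theta_{t}^0}_{A_h(w)}
+\sum_{m,k,j}^{K'} w_{mkj}\sum_{i=1}^{t}\tilde\theta_{t}^i(V_{h+1}^{k^i}-V_{h+1}^\star)(x_{h+1}^{(m,k,j)_i})\\
&\quad+\underbrace{c_0'\sqrt{H^3\iota}\sum_{m,k,j}^{K'} w_{mkj}/\sqrt{t}}_{D_h(w)}.
\end{align*}

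The middle term is the recursion. Following the regrouping in the proof of \Cref{split_delta_part4}, I would rewrite it as $\sum_{m',k',j'}w'_{m'k'j'}(V_{h+1}^{k'}-V_{h+1}^\star)(x_{h+1}^{m',k',j'})=\Phi_{h+1}(w')$, where for the visit $(m',k',j')$ carrying global index $i'$ of its pair $(x,a)$, $w'_{m'k'j'}=\sum_{k>k'}\tilde\theta_{N_h^k(x,a)}^{i'}\big(\sum_{(m,j)\text{ in round }k}w_{mkj}\big)$; here I use optimism $V_{h+1}^{k'}\ge V_{h+1}^\star$ on the good event to keep all signs nonnegative. The round-wise identity in \Cref{property_tilde_theta} yields $\|w'\|_1\le\|w\|_1$, while the coefficient bound $\sum_{k}n_h^k(x,a)\,\tilde\theta^{i'}\le e^{3/H}$ established inside the proof of \Cref{split_delta_part4} (valid once $N_h^{k}(x,a)>i_1:=(M-1)H(H+1)$), together with \Cref{property_tilde_theta}, gives $\|w'\|_\infty\le e^{3/H}\|w\|_\infty$ on that regime. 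The sub-threshold visits $t\le i_1$ form a boundary term: as in \Cref{split_delta_part4}, for each $(x,a)$ there are $O(i_1+M)=O(H^2(M-1))$ of them, adding at most $O(H^3SA(M-1)\|w\|_\infty)$ per step. Iterating from $h$ down to $H+1$ therefore gives $\Phi_h(w)\le\sum_{h'=h}^{H}\big(A_{h'}(w^{(h'-h)})+D_{h'}(w^{(h'-h)})\big)+O(H^4SA(M-1)\|w\|_\infty)$, where the propagated weights obey $\|w^{(\ell)}\|_\infty\le e^{3}\|w\|_\infty$ and $\|w^{(\ell)}\|_1\le\|w\|_1$ uniformly.

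It remains to bound the accumulated $A$ and $D$ families. Since $\theta_t^0=\mathbb I[t=0]$ and, as in the proof of \Cref{split_delta_part3}, each $(x,a)$ has at most $M$ first visits, $A_{h'}(w^{(\ell)})\le H\|w^{(\ell)}\|_\infty MSA$, so summing over the $H$ steps gives $O(H^2\|w\|_\infty MSA)$, dominated by $\|w\|_\infty MSA\sqrt{H^5\iota}$. For the confidence family the key estimate is the weighted bound $\sum_{m,k,j}^{K'} w_{mkj}\,\mathbb I[t_{h'}^{m,k,j}\ge1]/\sqrt{t_{h'}^{m,k,j}}\le O(\sqrt{SA\|w\|_\infty\|w\|_1})$, which I would prove block by block in $(x,a)$: within a block, \Cref{eq_rel_TK2} gives $t\ge i/2$ for the $i$-th global visit (the low-count part $t<M$ handled as in \Cref{split_delta_part7} and absorbed into the $MSA$ term), so $\sum_i w_i/\sqrt{t_i}\le\sqrt2\sum_i w_i/\sqrt i\le 2\sqrt2\sqrt{\|w\|_\infty\|w\|_1^{(x,a)}}$ via the rearrangement inequality $\sum_i w_i/\sqrt i\le 2\sqrt{\|w\|_\infty\sum_i w_i}$ under $w_i\le\|w\|_\infty$; a final Cauchy--Schwarz over the $SA$ blocks gives the claim. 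Hence $D_{h'}(w^{(\ell)})=O(\sqrt{H^3\iota\,SA\|w\|_\infty\|w\|_1})$ and summing over $H$ steps yields $O(\sqrt{H^5\iota\,SA\|w\|_\infty\|w\|_1})$. Collecting the three families reproduces the three terms of \Cref{rel_hh+1w}.

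The main obstacle, and where the equal-weight federated structure bites, is controlling $\|w^{(\ell)}\|_\infty$ uniformly across the $H$ unrolling steps: because the aggregation weights $\tilde\theta_t^i$ are non-martingale, the single-agent identity $\sum_t\theta_t^i=1+1/H$ from \Cref{property_theta} must be replaced by the round-grouped bound $\sum_k n_h^k\tilde\theta^{i'}\le e^{3/H}$, which holds only past the burn-in threshold $i_1=(M-1)H(H+1)$; the sub-threshold visits are precisely what generate the $(M-1)$ overhead. I expect the delicate part to be this threshold bookkeeping through the $H$-fold recursion — verifying that the $e^{3/H}$ factors compound to a harmless $e^3$ and that the boundary counts stay $O(H^2(M-1))$ per $(x,a)$ at every step — whereas the confidence-sum and first-visit estimates are weighted analogues of \Cref{split_delta_part7} and \Cref{split_delta_part3}.
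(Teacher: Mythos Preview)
Your proposal is correct and follows essentially the same approach as the paper: reduce the $V$-gap to a $Q$-gap via optimism, apply \Cref{bound_gap_bernstein0}, split into first-visit, recursion, and confidence pieces at the threshold $i_1=(M-1)H(H+1)$, show the propagated weight vector satisfies $\|w'\|_1\le\|w\|_1$ and $\|w'\|_\infty\le e^{3/H}\|w\|_\infty$, and unroll $H$ times. The only cosmetic difference is that your confidence bound $D_h$ uses a rearrangement-type estimate $\sum_i w_i/\sqrt{i}\le 2\sqrt{\|w\|_\infty\sum_i w_i}$, whereas the paper routes through auxiliary averaged weights $w'(i,x,a,h)$ and a concavity argument; both lead to the same $O(\sqrt{H^3SA\iota\|w\|_\infty\|w\|_1}+MSA\|w\|_\infty\sqrt{H^3\iota})$ per step.
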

\begin{proof}
	We denote 
	$$\tilde{V}_h^{m,k,j} = V_h^k(x_h^{m,k,j}) - V_h^\star(x_h^{m,k,j}).$$
	Noticing that $Q_h^{k}\left(x_h^{m,k,j},a_h^{m,k,j}\right)\geq V_h^{k}\left(x_h^{m,k,j}\right)$ and $Q_h^{\star}\left(x_h^{m,k,j},a_h^{m,k,j}\right)=\max_{a\in\mathcal{A}}Q_h^{\star}\left(x_h^{m,k,j}\right)\leq V_h^{\star}\left(x_h^{m,k,j}\right)$,
	letting $k = K'$ and $(x,a) = (x_h^{m,k,j},a_h^{m,k,j})$, we have that
	$$\tilde{V}_h^{m,k,j}\leq \theta_{t_h^{m,k,j}}^0 H + \sum_{i=1}^{t_h^{m,k,j}} \tilde{\theta}_{t_h^{m,k,j}}^i \tilde{V}_{h+1}^{(m,k,j)_{i,h}^{m,k,j}}+ c_0'\sqrt{H^3\iota/t_h^{m,k,j}}.$$
	Taking the summation with regard to $k$ from 1 to $K'$ and noticing that $\theta_{t_h^{m,k,j}}^0 = \mathbb{I}[t_h^{m,k,j}>0]$,
	we have
	\begin{align*}
		\sum_{m,k,j}^{K'} w_{mkj}\tilde{V}_h^{m,k,j}&\leq H\sum_{m,k,j}^{K'} w_{mkj}\mathbb{I}[t_h^{m,k,j} = 0]+\sum_{m,k,j} w_{mkj}\sum_{i=1}^{t_h^{m,k,j}} \theta_{t_h^{m,k,j}}^i \tilde{V}_{h+1}^{(m,k,j)_{i,h}^{m,k,j}}\\
		&\quad +\sum_{m,k,j}^{K'}  w_{mkj}\Omega\left(\sqrt{\frac{H^3\iota}{t_h^{m,k,j}}}\right).
	\end{align*}
	Next, we find upper bounds with regard to the three terms.
	
	\textbf{Step 1:} finding an upper bound for $H\sum_{m,k,j}^{K'} w_{mkj}\mathbb{I}[t_h^{m,k,j} = 0]$. Noticing that $w_{mkj}\leq \|w\|_\infty$, using the same way as Proof of \Cref{split_delta_part3} in \Cref{sec:three_rels}, we can find that
	$$H\sum_{m,k,j}w_{mkj}\mathbb{I}[t_h^{m,k,j} = 0]\leq MHSA\|w\|_\infty.$$
	
	\textbf{Step 2:} finding an upper bound for $\sum_{m,k,j}^{K'} w_{mkj}\sum_{i=1}^{t_h^{m,k,j}} \Tilde{\theta}_{t_h^{m,k,j}}^i \tilde{V}_{h+1}^{(m,k,j)_{i,h}^{m,k,j}}.$
	Similar to Proof of \Cref{split_delta_part4} in \Cref{sec:three_rels}, with $i_1 = (M-1)H(H+1)$, we still split it into two parts as follows:
	\begin{align*}
		\sum_{m,k,j}^{K'} w_{mkj}\sum_{i=1}^{t_h^{m,k,j}} \Tilde{\theta}_{t_h^{m,k,j}}^i \tilde{V}_{h+1}^{(m,k,j)_{i,h}^{m,k,j}} &= \sum_{m,k,j}^{K'} w_{mkj}\mathbb{I}[t_h^{m,k,j}\leq i_1]\sum_{i=1}^{t_h^{m,k,j}} \Tilde{\theta}_{t_h^{m,k,j}}^i \tilde{V}_{h+1}^{(m,k,j)_{i,h}^{m,k,j}}\\
		&\quad+ \sum_{m,k,j}^{K'}w_{mkj}\mathbb{I}[t_h^{m,k,j}> i_1]\sum_{i=1}^{t_h^{m,k,j}} \tilde{\theta}_{t_h^{m,k,j}}^i \tilde{V}_{h+1}^{(m,k,j)_{i,h}^{m,k,j}}.
	\end{align*}
	For the first part, applying $w_{mkj}\leq \|w\|_\infty$, using the same way as Proof of \Cref{split_delta_part4} in \Cref{sec:three_rels}, we can find that
	$$\sum_{m,k,j}^{K'} w_{mkj}\mathbb{I}[t_h^{m,k,j}\leq i_1]\sum_{i=1}^{t_h^{m,k,j}} \Tilde{\theta}_{t_h^{m,k,j}}^i \tilde{V}_{h+1}^{(m,k,j)_{i,h}^{m,k,j}}\leq O(H^3SA(M-1)\|w\|_\infty).$$
	For the second part, we regroup the summations in a different way. For every $(m',k',j')$, the term $\tilde{V}_{h+1}^{m',k',j'}$ appears in the term $w_{mkj}\mathbb{I}[t_h^{m,k,j}> i_1]\sum_{i=1}^{t_h^{m,k,j}} \theta_{t_h^{m,k,j}}^i \tilde{V}_{h+1}^{(m,k,j)_{i,h}^{m,k,j}}$ for $(k,m,j)$ if and only if $K'\geq k>k'$ and $(x_h^{m,k,j},a_h^{m,k,j}) = (x_h^{m',k',j'},a_h^{m',k',j'})$. So, for each $(m',k',j')$, we denote $(x,a) = (x_h^{m',k',j'},a_h^{m',k',j'})$. We consider all the later round indices $k'= k_0<k_1<k_2<...<k_g\leq K'$ that satisfy $n_s = n_h^{k_s}(x, a) > 0, s\in\mathbb{N}$. Here, $k_s$s and $g$ are simplified notations for functions of $(m',k',j',h)$, and we use the simplified notations when there is no ambiguity and the stated meaning is only valid in proof of step 2. So, the summation of coefficients related to $(m',k',j')$ equals to
	$$\Tilde{w}_{m'k'j'} = \sum_{s=1}^g \left(\sum_{i = N_h^{k_s}(x,a)+1}^{N_h^{k_s}(x,a)+n_s}w_{(mkj)_{i,h}^{m,k,j}}\right)\mathbb{I}[N_h^{k_s}(x,a)> i_1] \tilde{\theta}_{N_h^{k_s}(x,a)}^{i'},$$
	in which $i'$ is the global visiting number for $(x,a,h)$ at $(m',k',j')$, which means that $(m',k',j') = m_h(i';x,a),k_h(i';x,a),j_h(i';x,a)$. This means that
	$$\sum_{m,k,j}^{K'}w_{mkj}\mathbb{I}[t_h^{m,k,j}> i_1]\sum_{i=1}^{t_h^{m,k,j}} \tilde{\theta}_{t_h^{m,k,j}}^i \tilde{V}_{h+1}^{(m,k,j)_{i,h}^{m,k,j}} = \sum_{m',k',j'}^{K'} \Tilde{w}_{m'k'j'}\tilde{V}_{h+1}^{m',k',j'}.$$
	Denote $\tilde{w} = \mbox{vec}(\{\tilde{w}_{m'k'j'}\})$, we have that
	$$\|\Tilde{w}\|_1 = \sum_{m,k,j}^{K'}w_{mkj}\mathbb{I}[t_h^{m,k,j}> i_1]\sum_{i=1}^{t_h^{m,k,j}} \tilde{\theta}_{t_h^{m,k,j}}^i\leq \sum_{m,k,j}^{K'}w_{mkj}\sum_{i=1}^{t_h^{m,k,j}} \tilde{\theta}_{t_h^{m,k,j}}^i \leq \|w\|_1$$
	due to (c) in Lemma \ref{property_tilde_theta}. We can also find that
	$$\Tilde{w}_{m'k'j'} \leq \sum_{s=1}^g n_s\|w\|_\infty\mathbb{I}[N_h^{k_s}(x,a)> i_1] \tilde{\theta}_{N_h^{k_s}(x,a)}^{i'}\leq \exp(3/H)\|w\|_\infty,$$
	where the proof of the last inequality is the same as the Proof of \Cref{split_delta_part4}. Combining the two parts, we have that
	$$\sum_{m,k,j}^{K'} w_{mkj}\sum_{i=1}^{t_h^{m,k,j}} \Tilde{\theta}_{t_h^{m,k,j}}^i \tilde{V}_{h+1}^{(m,k,j)_{i,h}^{m,k,j}}\leq \sum_{m',k',j'}^{K'} \Tilde{w}_{m'k'j'} \tilde{V}_{h+1}^{m',k',j'}+O\left(H^3SA(M-1)\|w\|_\infty\right).$$
	
	\textbf{Step 3:} finding an upper bound for $\sum_{m,k,j}^{K'}  w_{mkj}\Omega\left(\sqrt{\frac{H^3\iota}{t_h^{m,k,j}}}\right).$ We split it into two parts as follows.
	\begin{align*}
		\sum_{m,k,j}^{K'}  w_{mkj}\Omega\left(\sqrt{\frac{H^3\iota}{t_h^{m,k,j}}}\right) &= \sum_{m,k,j}^{K'}  w_{mkj}\mathbb{I}[0<t_h^{m,k,j}\leq M-1 ]\Omega\left(\sqrt{\frac{H^3\iota}{t_h^{m,k,j}}}\right)\\
		&\quad+\sum_{m,k,j}^{K'}  w_{mkj}\mathbb{I}[t_h^{m,k,j}\geq M]\Omega\left(\sqrt{\frac{H^3\iota}{t_h^{m,k,j}}}\right).
	\end{align*}
	For the first part, applying that $w_{mkj}\leq \|w\|_\infty$, similar to Proof of \Cref{split_delta_part7} in \Cref{sec:three_rels}, we have that
	$$\sum_{m,k,j}^{K'}  w_{mkj}\mathbb{I}[0<t_h^{m,k,j}\leq M-1 ]\Omega\left(\sqrt{\frac{H^3\iota}{t_h^{m,k,j}}}\right) = \Omega\left(\|w\|_\infty SA(M-1)\sqrt{H^3\iota}\right).$$
	For the second part, we denote $w_k'(x,a,h) = \sum_{m=1}^M \sum_{j=1}^{n^{m,k}} w_{mkj}\mathbb{I}[(x_h^{m,k,j},a_h^{m,k,j}) = (x,a)]$. We also introduce the following notation. For every pair $(x,a,h)\in \sah$, we consider all the rounds indexed as $0 < \Tilde{k}_1 < \Tilde{k}_2 < ...<\Tilde{k}_{g}\leq K$ satisfying $n_h^{\tilde{k}_s}(x, a) > 0$, $N_h^{\Tilde{k}_1}(x,a) \geq M$ and $N_h^{\Tilde{k}_1-1}(x,a) < M$. Here, $\Tilde{k}_s$s and $g$ are simplified notations for functions of $(x,a,h)$, and we use the simplified notations when there is no ambiguity and the stated meaning is only valid in proof of step 3. Then we have
	$$\sum_{m,k,j}^{K'}  w_{mkj}\mathbb{I}[t_h^{m,k,j}\geq M]\Omega\left(\sqrt{\frac{H^3\iota}{t_h^{m,k,j}}}\right) = \Omega(1)\sum_{(x,a)\in\mathcal{S}\times \mathcal{A}}\sum_{s=1}^g w'_{\tilde{k}_s}(x,a,h)\sqrt{\frac{H^3\iota}{N_h^{\Tilde{k}_{s}}(x,a)}}.$$
	We also define that
	$$w'(i,x,a,h) = w'_{\tilde{k}_s}(x,a,h)/n_h^{\Tilde{k}_s}(x,a),\forall i\in\mathbb{N}_+,j\in [N_h^{\Tilde{k}_s}(x,a),N_h^{\Tilde{k}_s}(x,a)+n_h^{\Tilde{k}_s}(x,a)-1],$$
	which indicates that
	$$w'(j,x,a,h)\leq \|w\|_\infty.$$
	Similar to Proof of \Cref{split_delta_part7}, we have that
	$$\sqrt{2}\geq \max_{(j,d)\in \Tilde{B}} \frac{1\Big/\sqrt{N_h^{\Tilde{k}_{s}}(x,a)}}{1\Big/\sqrt{N_h^{\Tilde{k}_{s}}(x,a)+d}},\forall (x,a,h)\in \sah,$$
	in which $\Tilde{B} = \{(s,d)\in \mathbb{N}^2:1\leq s\leq g, 1\leq d \leq n_h^{\tilde{k}_s}(x,a)-1\}$. So we have
	$$\sum_{(x,a)\in\mathcal{S}\times \mathcal{A}}\sum_{s=1}^g w'_{\tilde{k}_s}(x,a,h)\sqrt{\frac{H^3\iota}{N_h^{\Tilde{k}_{s}}(x,a)}} = O(1)\sum_{(x,a)\in\mathcal{S}\times \mathcal{A}}\sum_{i=N_h^{\tilde{k_1}}(x,a)}^{N_h^{\tilde{k_g}+1}(x,a)-1} w'(i,x,a,h)\sqrt{H^3\iota/i}$$
	with
	$$\sum_{(x,a)\in\mathcal{S}\times \mathcal{A}}\sum_{i=N_h^{\tilde{k_1}}(x,a)}^{N_h^{\tilde{k_g}+1}(x,a)-1} w'(i,x,a,h) \leq \|w\|_1.$$
	Denote $w''(x,a,h) = \|w\|_\infty\left\lceil\sum_{i=N_h^{\tilde{k_1}}(x,a)}^{N_h^{\tilde{k_g}+1}(x,a)-1} w'(i,x,a,h)/\|w\|_\infty\right\rceil$, which indicates that
	$$w''(x,a,h)\leq \sum_{i=N_h^{\tilde{k_1}}(x,a)}^{N_h^{\tilde{k_g}+1}(x,a)-1} w'(i,x,a,h) + \|w\|_\infty$$
	so that
	$$\sum_{(x,a)\in\mathcal{S}\times \mathcal{A}}w''(x,a,h)\leq \|w\|_1+SA\|w\|_\infty.$$
	Then by letting the mass related to $\{w'(i,x,a,h)\}_i$ concentrate at large values for $\{\sqrt{H^3\iota/i}\}_i$ as much as possible, we have
	\begin{align*}
		\sum_{i=N_h^{\tilde{k_1}}(x,a)}^{N_h^{\tilde{k_g}+1}(x,a)-1} w'(i,x,a,h)\sqrt{H^3\iota/i}&\leq \|w\|_\infty \sum_{i=N_h^{k_1}(x,a)}^{N_h^{k_1}(x,a)+w''(x,a,h)/\|w\|_\infty-1}\sqrt{H^3\iota/i}\\
		& =O\left(\sqrt{H^3\iota w''(x,a,h)\|w\|_\infty}\right).
	\end{align*}
	By the concavity of $f(x) = \sqrt{H^3\iota x}$, we have
	\begin{align*}
		\sum_{m,k,j}^{K'}  w_{mkj}\mathbb{I}[t_h^{m,k,j}\geq M]\Omega\left(\sqrt{\frac{H^3\iota}{t_h^{m,k,j}}}\right)&\leq \sum_{(x,a)\in\mathcal{S}\times \mathcal{A}}O\left(\sqrt{H^3\iota w''(x,a,h)\|w\|_\infty}\right)\\
		&\leq \left(\sqrt{H^3SA\iota(\|w\|_1+SA\|w\|_\infty)\|w\|_\infty}\right)\\
		&= \left(\sqrt{H^3SA\iota\|w\|_1\|w\|_\infty}+SA\|w\|_\infty\sqrt{H^3\iota}\right).
	\end{align*}
	To conclude, for step 3, we have that
	$$\sum_{m,k,j}^{K'}  w_{mkj}\Omega\left(\sqrt{\frac{H^3\iota}{t_h^{m,k,j}}}\right)\leq O\left(\sqrt{H^3SA\iota\|w\|_1\|w\|_\infty}+MSA\|w\|_\infty\sqrt{H^3\iota}\right).$$
	
	Combining the results for the three different steps, we have
	\begin{align*}
		\sum_{m,k,j} w_{mkj}\tilde{V}_h^{m,k,j}&\leq \sum_{m,k,j} \Tilde{w}_{mkj}\tilde{V}_{h+1}^{m,k,j} + O\left(\sqrt{H^3SA\iota\|w\|_1\|w\|_\infty}\right. \\
		& \quad \left.+MSA\|w\|_\infty\sqrt{H^3\iota}+H^3SA(M-1)\|w\|_\infty\right),
	\end{align*}
	with $\|w\|_1\leq \exp(3/H)\|\Tilde{w}\|_1$ and $\|w\|_\infty\leq \|\Tilde{w}\|_\infty$. So, by recursions with regard to $h,h+1\ldots H$, we can get the result.
\end{proof}

Next, we will establish relationships between $W_k(x,a,h)$ and $\left[\mathbb{V}_h V_{h+1}^{\star}\right](x, a)$, in which $\left[\mathbb{V}_hV_{h+1}^{\star}\right](x, a)$ is a variance operator define below. We also need these definitions for any $(x,a,h,K')\in\sah\times [K+1]$ with $t = N_h^{K'}(x,a)$.
$$\left[\mathbb{P}_h V_{h+1}^{\star}\right](x, a) = \mathbb{E}[V_{h+1}^{\star}(x_{h+1})|(x_h,a_h) = (x,a)].$$
$$		\left[\mathbb{V}_h V_{h+1}^{\star}\right](x, a)=  \mathbb{E}_{x^{\prime} \sim \mathbb{P}_h(\cdot \mid x, a)}\left[V_{h+1}^{\star}\left(x^{\prime}\right)-\left[\mathbb{P}_h V_{h+1}^{\star}\right](x, a)\right]^2  =: P_1 $$
Here, $P_1$ depends on $(x,a,h)$ and we will use the simplified notation when there is no ambiguity.
$$\frac{1}{t} \sum_{\tilde{i}=1}^t\left[V_{h+1}^{\star}\left(x_{h+1}^{(m,k,j)_h(i;x,a)}\right)-\left[\mathbb{P}_h V_{h+1}^{\star}\right](x, a)\right]^2 =: P_2.$$
$$\frac{1}{t} \sum_{\tilde{i}=1}^t\left[V_{h+1}^{\star}\left(x_{h+1}^{(m,k,j)_h(i;x,a)}\right)-\frac{1}{t} \sum_{i'=1}^t V_{h+1}^{\star}\left(x_{h+1}^{(m,k,j)_h(i';x,a)}\right)\right]^2 =: P_3$$
$$W_{K'}(x, a, h)= \frac{1}{t} \sum_{i=1}^t\left[V_{h+1}^{k_h(i;x,a)}\left(x_{h+1}^{(m,k,j)_h(i;x,a)}\right)-\frac{1}{t} \sum_{i'=1}^t V_{h+1}^{k_h(i';x,a)}\left(x_{h+1}^{(m,k,j)_h(i';x,a)}\right)\right]^2 =: P_4 .$$
Here, $P_2,P_3,P_4$ depend on $(x,a,h,k)$ and we use the simplified notations when there is no ambiguity. The following Lemmas establish the closeness of these quantities to illustrate the closeness between $W_{K'}(x,a,h)$ and $\left[\mathbb{V}_h V_{h+1}^{\star}\right](x, a)$.
\begin{lemma}\label{close_p1p2}
	For any $p\in (0,1)$ with probability at least $1-p$, the following holds simultaneously for all $(x, a, h, K') \in \mathcal{S} \times \mathcal{A} \times[H] \times[K+1]$ with $t=N_h^{K'}(x,a)$.
	$$|P_1-P_2|\leq O\left(H^2\sqrt{\iota/t}\right).$$
\end{lemma}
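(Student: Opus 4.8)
The plan is to recognize $P_2 - P_1$ as a normalized sum of bounded martingale differences and then apply the Azuma--Hoeffding inequality followed by a union bound, mirroring the argument already used in the proof of Lemma \ref{lemma_concen_Hoeffding}. The key structural feature that makes this clean is that the centering constant in $P_2$ is $c := [\mathbb{P}_h V_{h+1}^\star](x,a)$, which is a fixed quantity of the MDP and of the fixed optimal value function, and therefore is not a data-dependent estimate.

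First I would fix $(x,a,h)$ and, for each global visiting index $i$, define
$$Z_i := \left[V_{h+1}^\star\left(x_{h+1}^{(m,k,j)_h(i;x,a)}\right) - c\right]^2.$$
On the visit indexed by $i$, which belongs to round $k^i = k_h(i;x,a)$, the deterministic policy $\pi^{k^i}$ is executed and $(x,a)$ is visited at step $h$, so $a = \pi_h^{k^i}(x)$ and the next state $x_{h+1}^{(m,k,j)_h(i;x,a)}$ is drawn from $\mathbb{P}_h(\cdot \mid x,a)$ conditionally on the filtration $\mathcal{F}_{i-1}$ generated by everything observed before this visit. Hence $\mathbb{E}[Z_i \mid \mathcal{F}_{i-1}] = P_1$ for every $i$, exactly as $\{X_i\}$ forms a martingale-difference sequence in Lemma \ref{lemma_concen_Hoeffding}. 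Consequently $\{Z_i - P_1\}_i$ is itself a martingale-difference sequence, and by construction $P_2 - P_1 = \frac{1}{t}\sum_{i=1}^t (Z_i - P_1)$.

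Next, since $V_{h+1}^\star(\cdot)\in[0,H]$ and $c\in[0,H]$, each deviation obeys $|V_{h+1}^\star(\cdot)-c|\le H$, so $Z_i\in[0,H^2]$ and $|Z_i-P_1|\le H^2$. For a fixed $(x,a,h)$ and a fixed target count $t'\in\mathbb{N}_+$, Azuma--Hoeffding then gives, for any $p\in(0,1)$, with probability at least $1-p$,
$$\left|\frac{1}{t'}\sum_{i=1}^{t'}(Z_i-P_1)\right| \le O\!\left(H^2\sqrt{\tfrac{1}{t'}\log\tfrac{2}{p}}\right).$$
I would then union bound over all $(x,a,h)\in\sah$ and all admissible values of $t'$. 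By Lemma \ref{lemma_relationship_TK}(d), $t=N_h^{K'}(x,a)\le (1+\tilde{C})T_0/H+M$, so $t'$ ranges over a set of size at most $(1+\tilde{C})T_0/H+M$; together with the $SAH$ choices of $(x,a,h)$, the number of events is $O\!\big(SAH(T_0/H+M)(1+\tilde{C})\big)$, whose logarithm is absorbed into $\iota$ (it is precisely the quantity appearing in $\iota_1$). Realizing $t=t'$ yields the stated bound simultaneously over all $(x,a,h,K')$.

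The only delicate point I expect is the verification that $\{Z_i - P_1\}$ is genuinely a martingale-difference sequence. This rests on the same fact used for $\{X_i\}$ in Lemma \ref{lemma_concen_Hoeffding}: the global indexing of Section \ref{auxi_lemma} is chronological, so each next state is a fresh draw from $\mathbb{P}_h(\cdot\mid x,a)$ given the past, and the (deterministic) conditional mean equals $P_1$ regardless of how the visits are distributed across rounds or agents. In particular, unlike the non-martingale weights $\tilde\theta_t^i$ treated earlier, here the summands carry uniform weight $1/t$, so no round-wise regrouping is needed and the argument reduces to standard bounded-martingale concentration plus the union-bound template already established in the preceding lemmas.
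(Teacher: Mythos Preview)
Your proposal is correct and follows essentially the same approach as the paper: recognize $\{Z_i-P_1\}$ as a bounded martingale-difference sequence with $|Z_i-P_1|\le H^2$, apply Azuma--Hoeffding for each fixed $t'$, then union bound over $(x,a,h,t')$ using the range $t'\le (1+\tilde{C})T_0/H+M$ from Lemma~\ref{lemma_relationship_TK}(d). The only minor quibble is that the resulting log factor matches $\iota_0$ rather than $\iota_1$, but since $\iota=\max\{\iota_0,\iota_1\}$ this is inconsequential.
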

\begin{proof}
	We have that $\left\{\left[V_{h+1}^{\star}\left(x_{h+1}^{(m,k,j)_h(i;x,a)}\right)-\left[\mathbb{P}_h V_{h+1}^{\star}\right](x, a)\right]^2 - P_1\right\}_{i=1}^\infty$ is a martingale difference bounded by $O(H^2)$, and the random variable $t\leq T_0/H(1+\Tilde{C})$. By Azuma-Hoeffding Inequality, for any given $(x,a,h)\in \mathcal{S}\times \mathcal{A}\times [H]$ and a given $t'\in \mathbb{N_+}$, for any $p\in (0,1)$, with probability $1-p$, 
	$$\frac{1}{t'}\left|\sum_{i=1}^{t'}\left(\left[V_{h+1}^{\star}\left(x_{h+1}^{(m,k,j)_h(i;x,a)}\right)-\left[\mathbb{P}_h V_{h+1}^{\star}\right](x, a)\right]^2 - P_1\right)\right|\leq O\left(H^2\sqrt{\frac{1}{t'}\log \frac{2}{p} }\right).$$
	By considering all the possible combinations $(x,a,h,t')\in \mathcal{S}\times \mathcal{A}\times [H]\times \left[[T_0(1+\Tilde{C})/H+M]\right]$, with a union bound and the realization of $t = t'$, we can claim the conclusion.
\end{proof}
\begin{lemma}\label{close_p2p3}
	For any $p\in (0,1)$ with least $1-p$ probability, the following holds simultaneously for all $(x, a, h, K') \in \mathcal{S} \times \mathcal{A} \times[H] \times[K+1]$ with $t = N_{h}^{K'}(x,a)$:
	$$|P_2-P_3|\leq O\left(H^2\sqrt{\iota/t}\right).$$
\end{lemma}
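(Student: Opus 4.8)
The plan is to reduce this to a concentration statement on the very same martingale difference sequence already controlled in Lemma \ref{lemma_concen_Hoeffding}. Write $Y_i = V_{h+1}^{\star}(x_{h+1}^{(m,k,j)_h(i;x,a)})$, $\mu = \left[\mathbb{P}_h V_{h+1}^{\star}\right](x,a)$, and $\bar{Y} = \frac{1}{t}\sum_{i=1}^t Y_i$ for the empirical mean over the first $t$ visits. Then $P_2 = \frac{1}{t}\sum_{i=1}^t (Y_i - \mu)^2$ and $P_3 = \frac{1}{t}\sum_{i=1}^t (Y_i - \bar{Y})^2$, so the two quantities differ only in whether the true mean $\mu$ or the empirical mean $\bar{Y}$ is subtracted.

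First I would apply the elementary bias–variance decomposition. Expanding $(Y_i - \mu)^2 = (Y_i - \bar{Y})^2 + 2(Y_i - \bar{Y})(\bar{Y} - \mu) + (\bar{Y} - \mu)^2$ and summing over $i$, the cross term vanishes since $\sum_{i=1}^t (Y_i - \bar{Y}) = 0$. This yields the exact identity $P_2 - P_3 = (\bar{Y} - \mu)^2$, reducing the problem to bounding $|\bar{Y} - \mu|$.

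Next, observe that $\bar{Y} - \mu = \frac{1}{t}\sum_{i=1}^t (Y_i - \mu) = \frac{1}{t}\sum_{i=1}^t X_i$, where $X_i = (\tilde{\mathbb{E}}_{x,a,h,i} - \mathbb{E}_{x,a,h})V_{h+1}^{\star}$ is precisely the martingale difference sequence from the proof of Lemma \ref{lemma_concen_Hoeffding}, with $|X_i| \leq H$. Applying Azuma–Hoeffding to the equal-weight average $\frac{1}{t'}\sum_{i=1}^{t'} X_i$ for each fixed $(x,a,h,t')$, and then taking a union bound over all $(x,a,h,t') \in \mathcal{S}\times\mathcal{A}\times[H]\times[T_0(1+\tilde{C})/H + M]$ exactly as in the proof of Lemma \ref{close_p1p2}, gives that with probability at least $1-p$, simultaneously for all $(x,a,h,K')$ with $t = N_h^{K'}(x,a)$, $|\bar{Y} - \mu| \leq O(H\sqrt{\iota/t})$.

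Finally I would combine the two steps. From the identity and the concentration bound, $|P_2 - P_3| = (\bar{Y} - \mu)^2 \leq O(H^2\iota/t)$. Since $Y_i, \mu \in [0,H]$ also give the deterministic bound $|\bar{Y} - \mu| \leq H$, we have $(\bar{Y} - \mu)^2 \leq \min\{O(H^2\iota/t), H^2\}$, and this minimum is always at most $O(H^2\sqrt{\iota/t})$: when $t \geq \iota$ the first argument satisfies $H^2\iota/t \leq H^2\sqrt{\iota/t}$, while when $t < \iota$ the trivial bound gives $H^2 \leq H^2\sqrt{\iota/t}$. This establishes the claim. I do not anticipate a genuine obstacle here; the only points requiring care are making the union bound uniform over $t'$ so the estimate holds simultaneously for all tuples, and invoking the trivial $O(H^2)$ bound in the small-$t$ regime so that the squared concentration estimate $H^2\iota/t$ can be converted into the stated $\sqrt{\iota/t}$ rate.
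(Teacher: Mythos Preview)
Your proof is correct and follows essentially the same route as the paper: both reduce $|P_2-P_3|$ to the deviation $|\bar{Y}-\mu|$ of the empirical mean of $V_{h+1}^\star$ from its true mean, then control that deviation by Azuma--Hoeffding with a union bound over $(x,a,h,t')$. The only cosmetic difference is in the reduction step: the paper uses the cruder inequality $|P_2-P_3|=(\bar{Y}-\mu)^2\le H|\bar{Y}-\mu|$ (since $|\bar{Y}-\mu|\le H$), which lands directly on $O(H^2\sqrt{\iota/t})$ without your case split on $t\gtrless\iota$; your exact identity actually yields the sharper rate $O(H^2\iota/t)$ before you weaken it.
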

\begin{proof}
	We can find that
	$$|P_2-P_3|\leq O\left(H\left|\frac{1}{t} \sum_{i'=1}^t V_{h+1}^{\star}\left(x_{h+1}^{(m,k,j)_h(i';x,a)}\right) - \left[\mathbb{P}_h V_{h+1}^{\star}\right](x, a)\right|\right).$$
	Knowing that $\left\{V_{h+1}^{\star}\left(x_{h+1}^{(m,k,j)_h(i';x,a)}\right) - \left[\mathbb{P}_h V_{h+1}^{\star}\right](x, a)\right\}_{i'=1}^\infty$ is a martingale difference bounded by $O(H)$, using the same procedure as proof for Lemma \ref{close_p1p2}, we can claim the result.
\end{proof}
For $|P_3-P_4|$, similar to the proof of Lemma C.3 in \cite{jin2018q}, we have
$$|P_3-P_4|\leq O\left(\frac{H}{t}\sum_{i=1}^t\left|V_{h+1}^{k_h(i;x,a)}\left(x_{h+1}^{(m,k,j)_h(i;x,a)}\right) - V_{h+1}^{\star}\left(x_{h+1}^{(m,k,j)_h(i;x,a)}\right)\right|\right).$$
We mark an event \Cref{event_nonneg} here, which means that the difference is always non-negative.
\begin{align}
	\mbox{Event}(K')&=\left\{\sum_{i=1}^t\left|V_{h+1}^{k^i}\left(x_{h+1}^{m^i,k^i,j^i}\right) - V_{h+1}^{\star}\left(x_{h+1}^{m^i,k^i,j^i}\right)\right|\right.\nonumber\\
	& \quad\quad\left.= \sum_{i=1}^t\left(V_{h+1}^{k^i}\left(x_{h+1}^{m^,k^i,j^i}\right) - V_{h+1}^{\star}\left(x_{h+1}^{m^i,k^i,j^i}\right)\right),\forall (x,a,h)\in \sah\right\}. \label{event_nonneg}
\end{align}
We do not need a new statistical lemma to prove that it holds with high probability. It will be shown to hold automatically based on some other statistical events that hold with high probability later. Under this event, we need to find an upper bound for 
$$\frac{1}{t}\sum_{i=1}^t\left(V_{h+1}^{k_h(i;x,a)}\left(x_{h+1}^{(m,k,j)_h(i;x,a)}\right) - V_{h+1}^{\star}\left(x_{h+1}^{(m,k,j)_h(i;x,a)}\right)\right).$$
Under the event of \Cref{bound_gap_bernstein0}, based on Lemma \ref{Lemma_rel_hh+1w_bernstein}, letting $w_{mkj} = \frac{1}{t}\mathbb{I}[(x_h^{m,k,j},a_h^{m,k,j}) = (x,a)]$, we have that
\begin{align*}
	\frac{1}{t}\sum_{i=1}^t & \left(V_{h+1}^{k_h(i;x,a)}\left(x_{h+1}^{(m,k,j)_h(i;x,a)}\right) - V_{h+1}^{\star}\left(x_{h+1}^{(m,k,j)_h(i;x,a)}\right)\right)\\
	&\leq O\left(\frac{MSA}{t}\sqrt{H^5\iota}+ \sqrt{\frac{SA}{t}H^5\iota}+H^4SA(M-1)\frac{1}{t}\right),
\end{align*}
which indicates that, under the intersections of the events of \Cref{bound_gap_bernstein0}, and $\bigcap_{k=1}^{K'}\mbox{Event}(k)$, for any $(x,a,h)\in \sah$, we have
$$|P_3-P_4|\leq O\left(\frac{MSA}{t}\sqrt{H^7\iota} + \frac{\sqrt{SAH^7\iota}}{\sqrt{t}}+\frac{(M-1)SAH^5}{t}\right).$$

To conclude about the relationship between $W_k(x,a,h)$ and $\left[\mathbb{V}_h V_{h+1}^{\star}\right](x, a)$, we have that, under the interaction of the events of \Cref{bound_gap_bernstein0}, $\bigcap_{k=1}^{K'}\mbox{Event}(k)$, Lemma \ref{close_p1p2} and Lemma \ref{close_p2p3}, we have $\forall (x,a,h,k)\in \sah\times [K']$,
\begin{align}
	&|W_k(x,a,h) - \left[\mathbb{V}_h V_{h+1}^{\star}\right](x, a)|\nonumber\\
	&\quad \leq O\left(\frac{MSA}{t}\sqrt{H^7\iota} + \frac{\sqrt{SAH^7\iota}}{\sqrt{t}}+\frac{(M-1)SAH^5}{t}\right).\label{gap_ww}
\end{align}
With this relationship, we can provide the new concentration results. Similar to the proof of Lemma \ref{lemma_concen_Hoeffding}, for a given $(x,a,h)\in \mathcal{S}\times \mathcal{A}\times [H]$, we decompose the summation $\sum_{i=1}^t \tilde{\theta}_t^iX_i$ as follows.
$$\sum_{i=1}^t \tilde{\theta}_t^iX_i = \sum_{i=1}^t \theta_t^iX_i+ \sum_{i=1}^t (\tilde{\theta}_t^i-\theta_t^i)X_i.$$
\Cref{event_gap_mart_dif} has already provided an upper bound for all $(x,a,h,K')\in \sah\times [K]$ for the second summation. 
Next, we focus on $\left|\sum_{i=1}^t \theta_t^iX_i\right|$. By Azuma-Bernstein Inequality, for any fixed $t'\in\mathbb{N}_{+}$ and fixed $(x,a,h)\in \sah$, for any $p\in(0,1)$, with probability at least $1-p$, we have that
$$\left|\sum_{i=1}^{t'} \theta_{t'}^iX_i\right|\leq O\left(\sqrt{\frac{1}{t'}H\left[\mathbb{V}_h V_{h+1}^{\star}\right](x, a)\log \frac{2}{p}}+\frac{1}{t'}H^2\log \frac{2}{p}\right).$$
After considering the union bound with regard to $(x,a,h)\in \sah$ and $t' \leq T_0/H$, we can claim the following conclusion: for any $p\in (0,1)$, with probability at least $1-p$, the following relationship holds simultaneously for all $(x,a,h,K')\in \sah\times [K]$,
\begin{equation}\label{concen_bern_V}
	\left|\sum_{i=1}^{t} \theta_{t}^iX_i\right|\leq O\left(\sqrt{\frac{\iota}{t'}H\left[\mathbb{V}_h V_{h+1}^{\star}\right](x, a)}+\frac{\iota}{t}H^2\right),t = N_h^{K'}(x,a).
\end{equation}

The intersection of events of \Cref{concen_bern_V} and \Cref{event_gap_mart_dif} indicates that the following relationship holds simultaneously for all $(x,a,h,K')\in \mathcal{S}\times \mathcal{A}\times [H]\times [K]$ with $t = N_h^{K'}(x,a)$:
\begin{equation}\label{ineq_concen_bernstein_pre}
	\left|\sum_{i=1}^t \tilde{\theta}_t^i(\tilde{\mathbb{E}}_{x,a,h,i} - \mathbb{E}_{x,a,h})V_{h+1}^\star(x_{h+1})\right|\leq O\left(\sqrt{\frac{\iota}{t}H\left[\mathbb{V}_h V_{h+1}^{\star}\right](x, a)}+\frac{\iota}{t}H^2 + \sqrt{H\iota/t}\right).
\end{equation}
Combining with the event of \Cref{gap_ww} for $K'$ replaced by $K'+1$, we have that
\begin{align*}
	&\left|\sum_{i=1}^t \tilde{\theta}_t^i\left((\tilde{\mathbb{E}}_{x,a,h,i} - \mathbb{E}_{x,a,h})V_{h+1}^\star(x_{h+1})\right)\right|\\
	& \quad\quad\leq O\left(\sqrt{\frac{\iota}{t}H\left(W_{K'+1}(x,a,h)+\frac{MSA}{t}\sqrt{H^7\iota} + \frac{\sqrt{SAH^7\iota}}{\sqrt{t}}+\frac{(M-1)SAH^5}{t}\right)}\right.\\
	& \quad\quad\quad\left.+\frac{\iota}{t}H^2 + \sqrt{H\iota/t}\right).
\end{align*}
Due to $2\sqrt{\frac{H^7 S A \iota}{t}} \leq H+\frac{H^6 S A \iota}{t}$, we have that
\begin{align*}
	O&\left(\sqrt{\frac{\iota}{t}H\left(W_{K'+1}(x,a,h)+\frac{MSA}{t}\sqrt{H^7\iota} + \frac{\sqrt{SAH^7\iota}}{\sqrt{t}}+\frac{(M-1)SAH^5}{t}\right)}\right)\\
	& \quad\leq O\left(\sqrt{\frac{\iota}{t}H\left(W_{K'+1}(x,a,h)+\frac{MSA}{t}\sqrt{H^7\iota} + H+\frac{SAH^6\iota}{t}+\frac{(M-1)SAH^5}{t}\right)}\right)\\
	& \quad\leq O\left(\sqrt{\frac{\iota}{t}H\left(W_{K'+1}(x,a,h)+\frac{MSA}{t}\sqrt{H^7\iota} + H+\frac{SAH^6\iota}{t}+\frac{(M-1)SAH^5}{t}\right)}\right).
\end{align*}
Noticing that
$$\frac{MSA}{t}\sqrt{H^7\iota} + \frac{(M-1)SAH^5}{t} = O\left(\frac{MSA}{t}H^5\iota\right),$$
we have
\begin{align}
	&\left|\sum_{i=1}^t \tilde{\theta}_t^i\left((\tilde{\mathbb{E}}_{x,a,h,i} - \mathbb{E}_{x,a,h})V_{h+1}^\star(x_{h+1}) + r_h^{(m,k,j)_h(i;x,a)} - r_h(x,a)\right)\right|\nonumber\\
	&\quad\leq O\left(\sqrt{\frac{H\iota}{t}(W_{K'+1}(x,a,h)+H)}+\iota\frac{\sqrt{H^7SA}+\sqrt{MSAH^{6}}}{t}\right),\label{ineq_concen_bernstein}
\end{align}
which indicates that
$$\left|\sum_{i=1}^t \tilde{\theta}_t^i\left((\tilde{\mathbb{E}}_{x,a,h,i} - \mathbb{E}_{x,a,h})V_{h+1}^\star(x_{h+1}) \right)\right|\leq \beta_t(x,a,h)/2$$
when combining with \Cref{bound_gap_bernstein0} and $c'$ is large enough.

Finally, we are ready to provide proof for Lemma \ref{lemma_ineq_rel_bernstein}. We let $c'$ to be large enough and Will provide discussion under the intersections of events for \Cref{event_gap_mart_dif}, \Cref{bound_gap_bernstein0}, Lemma \ref{close_p1p2}, Lemma \ref{close_p2p3} and \Cref{concen_bern_V}. We know that these events hold simultaneously with probability $1-c_pp$ for some $c_p>0$. Next, we will prove \Cref{bound_gap_bernstein} by induction. It obviously holds that for all $(x,a,h)\in \sah$ when $K'=1$. We suppose that it holds for every $K'\leq K'_0$. When $K' = K'_0+1$, LHS of \Cref{bound_gap_bernstein} indicates that $\bigcap_{k=1}^{K'_0+1}\mbox{Event}(k)$ holds. By the discussion above, this indicates that \Cref{ineq_concen_bernstein} holds for $K'_0+1$, by recursions on $H,H-1,\ldots ,1$ (similar to the proof of Lemma 4.3 in \cite{jin2018q}), we can prove that \Cref{bound_gap_bernstein} holds for all $(x,a,h)\in\sah$ for $K'_0+1$. This finishes the induction. After we replace $p$ with $p/c_p$, we finish the proof.

\subsubsection{Remaining Parts for Proving Theorem \ref{thm_regret_bernstein}}
Next, we begin to discuss the overall complexity. Similar to Lemma C.5 in \cite{jin2018q}, we will provide the following Lemma. 
\begin{lemma}\label{lemma_sum_v}
	For any $p\in (0,1)$, with probability at least $1-p$, 
	$$\sum_{m,k,j,h}\left[\mathbb{V}_h V_{h+1}^{\pi^k}\right](x_h^{m,k,j}, a_h^{m,k,j})\leq O(H\hat{T}+H^3\iota).$$
\end{lemma}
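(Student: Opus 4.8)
The plan is to reduce the sum of conditional variances to a per-trajectory law-of-total-variance identity and then control the \emph{realized} (rather than expected) sum by martingale concentration. For each trajectory $(m,k,j)$ and step $h$, I would introduce the one-step martingale difference
$$D_h^{m,k,j} = V_{h+1}^{\pi^k}(x_{h+1}^{m,k,j}) - \left[\mathbb{P}_h V_{h+1}^{\pi^k}\right](x_h^{m,k,j}, a_h^{m,k,j}),$$
so that, conditioning on the history $\mathcal{F}_h^{m,k,j}$ up to $(x_h^{m,k,j}, a_h^{m,k,j})$, we have $\mathbb{E}[D_h^{m,k,j}\mid\mathcal{F}_h^{m,k,j}] = 0$, $\mathbb{E}[(D_h^{m,k,j})^2\mid\mathcal{F}_h^{m,k,j}] = \left[\mathbb{V}_h V_{h+1}^{\pi^k}\right](x_h^{m,k,j}, a_h^{m,k,j})$, and $|D_h^{m,k,j}| \le H$. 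Because the agents execute the fixed deterministic policy $\pi^k$, the Bellman equation \Cref{eq_Bellman} telescopes to $\sum_{h=1}^H r_h^{m,k,j} - V_1^{\pi^k}(x_1^{m,k,j}) = \sum_{h=1}^H D_h^{m,k,j}$; since the total reward and $V_1^{\pi^k}$ both lie in $[0,H]$, this yields the deterministic bound $\big(\sum_{h=1}^H D_h^{m,k,j}\big)^2 \le H^2$.

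First I would record the algebraic decomposition, valid for every single trajectory,
$$\sum_{h=1}^H \left[\mathbb{V}_h V_{h+1}^{\pi^k}\right](x_h^{m,k,j}, a_h^{m,k,j}) = \left(\sum_{h=1}^H D_h^{m,k,j}\right)^2 - 2\sum_{1\le h<h'\le H} D_h^{m,k,j} D_{h'}^{m,k,j} + \sum_{h=1}^H\left(\mathbb{E}\big[(D_h^{m,k,j})^2\mid\mathcal{F}_h^{m,k,j}\big] - (D_h^{m,k,j})^2\right),$$
which follows by expanding $\big(\sum_h D_h\big)^2 = \sum_h D_h^2 + 2\sum_{h<h'}D_h D_{h'}$ and replacing $\sum_h D_h^2$ by $\sum_h \mathbb{E}[D_h^2\mid\mathcal{F}_h]$ plus the correction. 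Summing over all $(m,k,j)$, the first term is bounded deterministically by $(\hat{T}/H)\cdot H^2 = H\hat{T}$, since the number of trajectories is $\hat{T}/H$.

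Next I would control the two remaining sums, both of which become sums of martingale differences once the tuples are concatenated in the global order already used in \Cref{concen_xi_hoeffding} (round, then episode, then step, then agent). For the cross term, writing $\sum_{h<h'}D_h D_{h'} = \sum_{h'=2}^H D_{h'}\big(\sum_{h<h'}D_h\big)$, the inner partial sum is measurable before $x_{h'+1}^{m,k,j}$ is drawn, so each summand is a martingale difference bounded by $H^2$; the variance-correction summand is a martingale difference bounded by $H^2$ by construction. Applying the Azuma--Hoeffding inequality to each sequence, with a union bound over the total length $t\le (1+\tilde{C})T_0 + HM$ (via (e) of \Cref{lemma_relationship_TK}) to absorb the random stopping, each sum is $O(H^2\sqrt{\hat{T}\iota})$ with high probability. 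The arithmetic--geometric mean inequality then gives $H^2\sqrt{\hat{T}\iota} = \sqrt{(H\hat{T})(H^3\iota)} \le \tfrac12(H\hat{T}+H^3\iota)$, so all three contributions combine to the claimed bound $O(H\hat{T}+H^3\iota)$.

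The main obstacle is not the algebra but establishing the martingale structure cleanly across the concatenation of independently generated trajectories: I must verify that each summand of the cross term and of the variance-correction term has conditional mean zero with respect to a \emph{single} nested filtration spanning all rounds, episodes, agents, and steps. This relies on the fact that $\pi^k$ (hence $V_{h+1}^{\pi^k}$ and the variance operator) is frozen at the start of round $k$, and that trajectories within a round are conditionally independent given $\pi^k$. Handling the random episode counts $n^{m,k}$ through a union bound over a deterministic upper envelope for the total step count, exactly as in \Cref{concen_xi_hoeffding}, is the remaining care point.
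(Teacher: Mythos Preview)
Your proposal is correct and the martingale structure you set up is clean; the deterministic bound $(\sum_h D_h)^2\le H^2$ from the reward telescoping, the cross-term martingale (with the partial sum $\sum_{h<h'}D_h=\sum_{h<h'}r_h+V_{h'}^{\pi^k}(x_{h'})-V_1^{\pi^k}(x_1)\in[-H,2H]$ so each increment is $O(H^2)$), and the variance-correction martingale all behave as you claim, and the union bound over the random total length is handled exactly as in \Cref{concen_xi_hoeffding}.

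The paper, however, takes a different and terser route. It works at the \emph{episode} level rather than the step level: it orders the episodes globally, sets $R_i=\sum_{h=1}^H[\mathbb{V}_h V_{h+1}^{\pi^k}](x_h^{(m,k,j)(i)},a_h^{(m,k,j)(i)})$, and invokes (following Lemma~C.5 of \cite{jin2018q}) the law-of-total-variance identity to get $\mathbb{E}[R_i\mid\mathcal{F}_{i-1}]\le H^2$ together with the crude bounds $0\le R_i\le H^3$ and $\operatorname{Var}[R_i\mid\mathcal{F}_{i-1}]\le H^5$; a Bernstein/Freedman-type martingale inequality on $\sum_i R_i$ then gives the result directly. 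Your step-level decomposition is more explicit and self-contained --- in particular it needs only Azuma--Hoeffding for the two fluctuation sums, avoiding the variance-aware concentration that the episode-level argument implicitly relies on to reach the $H^3\iota$ (rather than $H^{2.5}\sqrt{\hat T\iota}$) scale. The paper's version is shorter because it packages the whole step-level algebra into the single fact $\mathbb{E}[R_i\mid\mathcal{F}_{i-1}]\le H^2$ imported from \cite{jin2018q}.
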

\begin{proof}
	We assign an order for all the episodes based on the ``round first, episode second, agent third" rule and suppose $m(i),k(i),j(i)$ recovers the agent index, round index and within round episode index for the $i-$th episode.
	Denote $R_i = \sum_{h=1}^H \mathbb{V}_h V_{h+1}^{\pi^k}(x_h^{(m,k,j)(i)},a_h^{(m,k,j)(i)})$ and $\mathcal{F}_{i-1}$ be the $\sigma-$field generated by the information before the $i-$th episode. Similar to the proof of Lemma C.5 in \cite{jin2018q}, we have
	$$\mathbb{E} [R_i|F_{i-1}]\leq H^2,$$
	$$0\leq R_i\leq H^3,$$
	$$\mbox{Var} [R_i|F_{i-1}]\leq H^5.$$
	So, by Azuma-Hoeffding Inequality based on $\sum_{i=1}^{t} R_i$ with regard to the filtration $\{\mathcal{F}_i\}_{i=1}^\infty$ and a union bound for $t\leq T_0(1+\Tilde{C})/H+M$, we conclude that
	$$\sum_{m,k,j,h}\left[\mathbb{V}_h V_{h+1}^{\pi^k}\right](x_h^{m,k,j}, a_h^{m,k,j}) = \sum_{i=1}^{\hat{T}/H} R_i\leq O(H\hat{T}+H^3\iota).$$
	
\end{proof}

We also provide a Lemma that focuses on the concentration of $\xi_h^k$.
\begin{lemma}\label{concen_xi_berstein}
	For any $p\in(0,1)$, with probability at least $1-p$, the following relationships holds simultaneously:
	\begin{equation}\label{concen_xi_ineq1}
		\left|\sum_{k=1}^K C_h\sum_{h=h'}^{H} \xi_{h+1}^k\right|\leq O(H\sqrt{\hat{T}\iota}),\forall h'\in [H],
	\end{equation}
	\begin{equation}\label{concen_xi_ineq2}
		\left|\sum_{k=1}^K \sum_{h=1}^{H} \xi_{h+1}^k\right|\leq O(H\sqrt{\hat{T}\iota}),
	\end{equation}
	in which $C_h = \exp(3(h-1)/H)$.
\end{lemma}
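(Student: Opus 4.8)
The plan is to treat both displayed quantities as sums of bounded martingale differences and to bound them with the Azuma--Hoeffding inequality followed by union bounds, mirroring the proof of Lemma \ref{concen_xi_hoeffding}; the only genuinely new features are the truncation index $h'$ appearing in \Cref{concen_xi_ineq1} and the absence of the weight $C_h$ in \Cref{concen_xi_ineq2}. The starting observation is that every summand building up $\xi_{h+1}^k$ has the form $(\mathbb{P}-\hat{\mathbb{P}})(V_{h+1}^\star-V_{h+1}^{\pi^k})(x_h^{m,k,j},a_h^{m,k,j})$, which has zero conditional mean given $(x_h^{m,k,j},a_h^{m,k,j})$ and $\pi^k$, and whose absolute value is at most $H$ because $0\le V_{h+1}^\star-V_{h+1}^{\pi^k}\le H$.

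First I would fix $h'\in[H]$ and flatten $\sum_{k=1}^K\sum_{h=h'}^H C_h\,\xi_{h+1}^k$ into a single scalar sequence $V(m,k,j,h)=C_h(\mathbb{P}-\hat{\mathbb{P}})(V_{h+1}^\star-V_{h+1}^{\pi^k})(x_h^{m,k,j},a_h^{m,k,j})$, ordered ``round first, episode second, step third, agent fourth'' as in Lemma \ref{concen_xi_hoeffding}. Under this ordering $(x_h^{m,k,j},a_h^{m,k,j})$ is measurable with respect to the past while the next state $x_{h+1}^{m,k,j}$ is fresh, so the flattened sequence is a martingale difference sequence; dropping the terms with $h<h'$ preserves this, since any order-respecting subsequence of a martingale difference sequence is again one. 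Each term is bounded by $O(H)$ because $C_h\le e^3$. Applying Azuma--Hoeffding to the first $t$ terms yields a bound $O(H\sqrt{t\log(2/p)})$; a union bound over the random number of terms $t\le(1+\tilde{C})T_0+HM$, using part (e) of Lemma \ref{lemma_relationship_TK} to bound the total count by $O(\hat{T})$, then gives $O(H\sqrt{\hat{T}\iota})$ for this fixed $h'$.

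To make \Cref{concen_xi_ineq1} hold simultaneously over all $h'$, I would add one more union bound over the $H$ values of $h'$, which multiplies the failure probability by $H$ and contributes only an additive $\log H$ absorbed into $\iota$, leaving the $O(H\sqrt{\hat{T}\iota})$ bound intact. Then \Cref{concen_xi_ineq2} follows as the special case $C_h\equiv 1$, for which the single sum $\sum_{k=1}^K\sum_{h=1}^H\xi_{h+1}^k$ is controlled by one pass of the same flattening-plus-Azuma argument, without the extra union bound over $h'$. I expect the main technical point to be the careful verification that the chosen flattening genuinely produces a martingale difference sequence---that is, that the filtration at each position contains $(x_h^{m,k,j},a_h^{m,k,j})$ but excludes the next-state randomness---together with the bookkeeping that converts a fixed-$t$ tail bound into one valid for the random, data-dependent number of terms; both steps are routine once the ordering is pinned down.
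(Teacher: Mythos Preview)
Your proposal is correct and matches the paper's own proof essentially step for step: the same flattening of $\sum_{k,h}\xi_{h+1}^k$ into a martingale difference sequence via the ``round first, episode second, step third, agent fourth'' ordering, the same $O(H)$ bound on each term via $C_h\le e^3$, Azuma--Hoeffding applied to partial sums, a union bound over the random length using Lemma~\ref{lemma_relationship_TK}(e), and an extra union bound over $h'\in[H]$ to get simultaneity in \Cref{concen_xi_ineq1}. The paper likewise handles \Cref{concen_xi_ineq2} by one more pass of the same argument and then combines the two events by rescaling $p$.
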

\begin{proof}
	We first focus on the first event. Denote $V(m,k,j,h) = C_h(\mathbb{P} - \hat{\mathbb{P}})\left(V_{h+1}^\star - V_{h+1}^{\pi^k}\right)(x_h^{m,k,j},a_h^{m,k,j})$ and a simplified notation $\sum_{m,k,j,h:h'} = \sum_{k=1}^K\sum_{m=1}^M\sum_{j=1}^{n^{m,k}}\sum_{h=h'}^{H-1}$. The quantity we focus on can be rewritten as
	$$\sum_{m,k,j,h:h'}V(m,k,j,h),$$
	with $|V(m,k,j,h)|\leq O(H)$ as $C_h \leq \exp(3)$. Let $\Tilde{V}(\Tilde{i})$ be the $\Tilde{i}-$th term in the summation that contains $\hat{T}(H-h')/H$ terms, in which the order follows a ``round first, episode second, step third, agent fourth" rule. Then the sequence $\{\Tilde{V}(\Tilde{i})\}$ is a martingale difference. By Azuma-Hoeffding Inequality, for any $p\in (0,1)$ and $t\in N_{+}$, with probability at least $1-p$,
	$$\left|\sum_{\Tilde{i}=1}^t \Tilde{V}(\Tilde{i})\right|\leq O\left(H\sqrt{t}\right).$$
	Then by applying a union bound with regard to $h'\in [H-1]$ and all possible $t$ which is divisible by $H-h'$ and knowing that $\hat{T}(H-h')/H\leq T_0(1+\Tilde{C})+HM$ due to (e) in Lemma \ref{lemma_relationship_TK}, we can claim that, for any $p\in (0,1)$, with probability at least $1-p$, the following relationship holds simultaneously:
	$$\left|\sum_{k=1}^K C_h\sum_{h=h'}^H \xi_{h+1}^k\right| = \left|\sum_{\Tilde{i}=1}^{\hat{T}(H-h')/H} \Tilde{V}(\Tilde{i})\right|\leq O(H\sqrt{\hat{T}\iota}),\forall h'\in [H].$$
	The second event can be analyzed similarly for the same conclusion. By combining these two events and re-scaling $p$, we can claim the result.
\end{proof}
Next, we try to find the upper bound for the regret. We pick $c'$ to be large enough and discuss based on the intersection of events of \Cref{event_gap_mart_dif}, \Cref{bound_gap_bernstein0}, Lemma \ref{close_p1p2}, Lemma \ref{close_p2p3}, \Cref{concen_bern_V}, Lemma \ref{concen_xi_berstein}, Lemma \ref{lemma_ineq_rel_bernstein} and Lemma \ref{lemma_sum_v}. They hold simultaneously with probability at least $1-c'_pp$ where $c'_p>0$ is a numerical constant and indicates \Cref{gap_ww}. Similar to the discussions in Proof of Theorem \ref{thm_regret_hoeffding},
we can claim that for $\forall h\in [H]$,
\begin{equation}\label{result_delta_v1}
	\sum_{k=1}^K\delta_h^k\leq O\left(\sqrt{H^4\iota \hat{T}SA} +HSA(M-1)\sqrt{H^3\iota}+MH^2SA+H^4SA(M-1)\right),
\end{equation}
due to $\beta_t(x,a,h) = O(\sqrt{H^3\iota/t})$. In addition, due to the relationship 
\begin{align*}
	\sum_{k=1}^K\delta_h^k &\leq \exp(3/H)\sum_{k=1}^K\delta_{h+1}^k+\sum_{k=1}^K\xi_{h+1}^k+O(1)\sum_{k,m,j} \beta_{t_h^{m,k,j}}(x_h^{m,k,j},a_h^{m,k,j},h)\\
	&\quad+ O\left(MHSA+H^3SA(M-1)\right),
\end{align*}
which can be obtained similar to the situation in \Cref{proof_h_regret_step2} for Proof of Theorem \ref{thm_regret_hoeffding},
denoting $\sum_{k,m,j,h} = \sum_{k,m,j}\sum_{h=1}^H$, we have
\begin{equation}\label{result_delta_v2}
	\sum_{k=1}^K \delta_1^{k}\leq O(MH^2SA+H^4SA(M-1)+\sqrt{H^2\hat{T}\iota})+O(1)\sum_{k,m,j,h} \beta_{t_h^{m,k,j}}(x_h^{m,k,j},a_h^{m,k,j},h)
\end{equation}

We will bound the last term by splitting it into two parts.
\begin{align*}
	\sum_{k,m,j,h} \beta_{t_h^{m,k,j}}(x_h^{m,k,j},a_h^{m,k,j},h) &= \sum_{k,m,j,h} \beta_{t_h^{m,k,j}}(x_h^{m,k,j},a_h^{m,k,j},h)\mathbb{I}[t_h^{m,k,j}\leq M-1]\\
	&\quad +\sum_{k,m,j,h}\beta_{t_h^{m,k,j}}(x_h^{m,k,j},a_h^{m,k,j},h)\mathbb{I}[t_h^{m,k,j}\geq M].
\end{align*}
For the first part, knowing that $\beta_{t_h^{m,k,j}}(x_h^{m,k,j},a_h^{m,k,j},h)\leq O(\sqrt{H^3\iota})$, using the similar technique as Proof of \Cref{split_delta_part7}, we have that
$$\sum_{k,m,j,h} \beta_{t_h^{m,k,j}}(x_h^{m,k,j},a_h^{m,k,j},h)\mathbb{I}[t_h^{m,k,j}\leq M-1]\leq O\left(HSA(M-1)\sqrt{H^3\iota}\right).$$
For the second part, we have that
\begin{align*}
	\sum_{k,m,j,h} & \beta_{t_h^{m,k,j}}(x_h^{m,k,j},a_h^{m,k,j},h)\mathbb{I}[t_h^{m,k,j}\geq M] \\
	&\leq\sum_{k,m,j,h}O\left(\sqrt{\frac{H\iota}{t_h^{m,k,j}}(W_{k+1}(x_h^{m,k,j},a_h^{m,k,j},h)+H)}+\iota\frac{\sqrt{H^7SA}+\sqrt{MSAH^{6}}}{t_h^{m,k,j}}\right)\\
	& \quad\quad\quad\quad \cdot \mathbb{I}[t_h^{m,k,j}\geq M].
\end{align*}

Later on, we use another simplified notation
$\sum_{k,m,j,h:M} = \sum_{k,m,j,h}\mathbb{I}[t_h^{m,k,j}\geq M]$. Using the same technique of finding $C''$ in \Cref{split_delta_part7}, we can find that
\begin{equation}\label{others_bernstein_sum1}
	\sum_{k,m,j,h:M} 1/t_h^{m,k,j}\leq O(1)\sum_{(x,a,h)\in \sah} \sum_{i=M}^{N_h^{K+1}(x,a)-1} 1/i\leq HSA\iota
\end{equation}
and
\begin{equation}\label{others_bernstein_sum2}
	\sum_{k,m,j,h:M} 1/\sqrt{t_h^{m,k,j}}\leq O(1)\sum_{(x,a,h)\in \sah} \sum_{i=M}^{N_h^{K+1}(x,a)-1} 1/i\leq \sqrt{HSA\hat{T}}.
\end{equation}
So, we have
$$\sum_{k,m,j,h:M} \iota\frac{\sqrt{H^7SA}+\sqrt{MSAH^{6}}}{t_h^{m,k,j}}\leq \iota^2HSA\left(\sqrt{H^7SA}+\sqrt{MSAH^{6}}\right).$$
We also have
\begin{align*}
	\sum_{k,m,j,h:M}&\left(\sqrt{\frac{H}{t_h^{m,k,j}}(W_{k}(x_h^{m,k,j},a_h^{m,k,j},h)+H)}\right)\\
	&\leq O(1)\sqrt{\left(\sum_{m,k,j,h:M} (W_{k+1}(x_h^{m,k,j},a_h^{m,k,j},h)+H) \right)\left(\sum_{m,k,j,h:M} \frac{H}{t_h^{m,k,j}} \right)}\\
	&\leq O(1)\sqrt{H^3SA\hat{T}\iota}+ O(1)\sqrt{H^2SA\iota}\sqrt{\sum_{m,k,j,h:M} W_{k+1}(x_h^{m,k,j},a_h^{m,k,j},h)},
\end{align*}
where the first inequality follows from Cauchy's inequality and the second inequality is due to \Cref{others_bernstein_sum1}.

To conclude, we have
\begin{align}
	\sum_{k,m,j,h}& \beta_{t_h^{m,k,j}}(x_h^{m,k,j},a_h^{m,k,j},h) \nonumber\\
	&= O\left(HSA(M-1)\sqrt{H^3\iota}+\iota^2\sqrt{H^9S^3A^3}+\iota^2\sqrt{MS^3A^3H^{8}}+\right.\nonumber\\
	&\quad\left.+\sqrt{H^3SA\hat{T}\iota^2}+ \sqrt{H^2SA\iota^2}\sqrt{\sum_{m,k,j,h:M} W_{k+1}(x_h^{m,k,j},a_h^{m,k,j},h)}\right). \label{beta_bernstein}
\end{align}

Next, we try to find an upper bound for
$$\sqrt{\sum_{m,k,j,h:M} W_{k+1}(x_h^{m,k,j},a_h^{m,k,j},h)}.$$
We know that 
\begin{align*}
	W_k(x, a, h)&\leq \mathbb{V}_h \left[V_{h+1}^{\pi^k}\right](x, a)+ \left|\left[\mathbb{V}_h V_{h+1}^{\star}\right](x,a)-W_k(x, a, h)\right|\\
	&\quad+\left|\left[\mathbb{V}_h V_{h+1}^{\star}\right](x,a)-\left[\mathbb{V}_h V_{h+1}^{\pi^k}\right](x, a)\right|.
\end{align*}

By Lemma \ref{lemma_sum_v},
$$\sqrt{\sum_{m,k,j,h:M} \mathbb{V}_h \left[V_{h+1}^{\pi^k}\right](x_h^{m,k,j}, a_h^{m,k,j})}\leq O\left(\sqrt{H\hat{T}+H^3\iota}\right).$$
By \Cref{gap_ww}, denoting $\tilde{t}_h^{m,k,j} = N_h^{k+1}(x_h^{m,k,j},a_h^{m,k,j})$,
\begin{align*}
	&\sqrt{\sum_{m,k,j,h:M} \left|\left[\mathbb{V}_h V_{h+1}^{\star}\right](x_h^{m,k,j}, a_h^{m,k,j})-W_{k+1}(x_h^{m,k,j}, a_h^{m,k,j}, h)\right|} \\
	&\quad\leq O\left(\sqrt{\sum_{m,k,j,h:M} \left(\frac{MSA}{\tilde{t}_h^{m,k,j}}\sqrt{H^7\iota} + \frac{\sqrt{SAH^7\iota}}{\sqrt{\tilde{t}_h^{m,k,j}}}+\frac{(M-1)SAH^5}{\tilde{t}_h^{m,k,j}}\right)}\right).
\end{align*}
As $\tilde{t}_h^{m,k,j}\geq t_h^{m,j,k}$, we have that
\begin{align*}
	&\sqrt{\sum_{m,k,j,h:M} \left|\left[\mathbb{V}_h V_{h+1}^{\star}\right](x_h^{m,k,j}, a_h^{m,k,j})-W_{k+1}(x_h^{m,k,j}, a_h^{m,k,j}, h)\right|}\\
	&\quad\leq O\left(\sqrt{\sum_{m,k,j,h:M} \left(\frac{MSA}{t_h^{m,k,j}}\sqrt{H^7\iota} + \frac{\sqrt{SAH^7\iota}}{\sqrt{t_h^{m,k,j}}}+\frac{(M-1)SAH^5}{t_h^{m,k,j}}\right)}\right)\\
	&\quad=O\left(\sqrt{MH^{4.5}S^2A^2\iota^{1.5}+H^4SA\sqrt{\hat{T}\iota}+(M-1)H^6S^2A^2}\right),
\end{align*}
where the last inequality is due to \Cref{others_bernstein_sum1} and \Cref{others_bernstein_sum2}. We also have
\begin{align*}
	&\sqrt{\sum_{m,k,j,h:M}\left|\left[\mathbb{V}_h V_{h+1}^{\star}\right](x_h^{m,k,j}, a_h^{m,k,j})-\left[\mathbb{V}_h V_{h+1}^{\pi^k}\right](x_h^{m,k,j}, a_h^{m,k,j})\right|}\\
	&\quad\leq \sqrt{\sum_{m,k,j,h}\left|\left[\mathbb{V}_h V_{h+1}^{\star}\right](x_h^{m,k,j}, a_h^{m,k,j})-\left[\mathbb{V}_h V_{h+1}^{\pi^k}\right](x_h^{m,k,j}, a_h^{m,k,j})\right|}.
\end{align*}
Next, we will show that, for any $(x,a,h)\in \sah$,
$$\left|\left[\mathbb{V}_h V_{h+1}^{\star}\right](x,a)-\left[\mathbb{V}_h V_{h+1}^{\pi^k}\right](x,a)\right|\leq O(H)\left(\left[\mathbb{P}_h V_{h+1}^{\star}\right](x,a)-\left[\mathbb{P}_h V_{h+1}^{\pi^k}\right](x,a)\right).$$
Suppose that $u,v$ are random variables such that $u$ follows the distribution of $V_{h+1}^\star(x_{h+1})$ under $\pi_\star$ when $(x_h,a_h) = (x,a)$, and $v$ follows the distribution of $V_{h+1}^{\pi^k}(x_{h+1})$ under $\pi^k$ when $(x_h,a_h) = (x,a)$ and $u\geq v$. The third requirement is reasonable because the distribution of $x_{h+1}$ only depends on $(x,a)$ and $V_{h+1}^\star(x_{h+1})\geq V_{h+1}^{\pi^k}(x_{h+1})$.
We have that $u,v\leq H$. So,
\begin{align*}
	\left|\left[\mathbb{V}_h V_{h+1}^{\star}\right](x,a)-\left[\mathbb{V}_h V_{h+1}^{\pi^k}\right](x,a)\right|&=\left|\mbox{Var}(u) - \mbox{Var}(v)\right|\\
	&\leq |\mathbb{E}(u^2)-\mathbb{E}(v^2)+(\mathbb{E}v)^2-(\mathbb{E}u)^2|\\
	&\leq \left|\mathbb{E}(u-v)(u+v)+(\mathbb{E}v-\mathbb{E}u)(\mathbb{E}v+\mathbb{E}u)\right|\\
	&\leq O(H)(|\mathbb{E} (u-v)|+\mathbb{E} |u-v|)\\
	&= O(H)\mathbb{E} (u-v).
\end{align*}

This proves the conclusion. Using the conclusion, we can find that
\begin{align*}
	\sum_{m,k,j,h}&\left|\left[\mathbb{V}_h V_{h+1}^{\star}\right](x_h^{m,k,j}, a_h^{m,k,j})-\left[\mathbb{V}_h V_{h+1}^{\pi^k}\right](x_h^{m,k,j}, a_h^{m,k,j})\right|\\
	&\leq O(H)\sum_{m,k,j,h}\left(\left[\mathbb{P}_h V_{h+1}^{\star}\right](x_h^{m,k,j},a_h^{m,k,j})-\left[\mathbb{P}_h V_{h+1}^{\pi^k}\right](x_h^{m,k,j},a_h^{m,k,j})\right)\\
	&= O(H)\sum_{h=1}^{H} \sum_{k=1}^{K} (\delta_{h+1}^k-\phi_{h+1}^k + \xi_{h+1}^k)\\
	&\leq O(H)\sum_{h=1}^{H} \sum_{k=1}^{K} (\delta_{h+1}^k + \xi_{h+1}^k),
\end{align*}
in which the last inequality is due to $\phi_h^k\geq 0$ based on \Cref{bound_gap_bernstein}.
By \Cref{result_delta_v1} and Lemma \ref{concen_xi_berstein}, we have
$$O(H)\sum_{h=1}^{H} \sum_{k=1}^{K} (\delta_{h+1}^k + \xi_{h+1}^k) = O\left(\sqrt{H^8\iota \hat{T}SA} +H^3SA(M-1)\sqrt{H^3\iota}+MH^4SA+H^6SA(M-1)\right).$$
So, we have
\begin{align*}
	\sum_{m,k,j,h:M} &W_{k+1}(x_h^{m,k,j},a_h^{m,k,j},h)\\
	&\leq O\left(H\hat{T}+H^3\iota+MH^4SA+\sqrt{H^8\hat{T}SA\iota}+H^6SA(M-1) + H^2SA(M-1)\sqrt{H^5\iota}\right)\\
	&\quad+O\left(MS^2A^2\sqrt{H^9\iota^3}+SA\sqrt{H^8\hat{T}\iota}+S^2A^2H^6(M-1)\right)\\
	&=O\left(H\hat{T}+MH^{4.5}S^2A^2\iota^{1.5}+H^4SA\sqrt{\hat{T}\iota}+H^6S^2A^2(M-1)\right),
\end{align*}
where the last relationship is due to
$$(M-1)H^6S^2A^2\geq (M-1)H^6SA,H^4SA\sqrt{\hat{T}\iota}\geq \sqrt{H^8\hat{T}SA\iota}$$
and
$$MS^2A^2\sqrt{H^9\iota^3}\geq MH^4SA+H^3\iota+H^2SA(M-1)\sqrt{H^5\iota}.$$
Inserting it into \Cref{beta_bernstein}, we have
\begin{align*}
	\sum_{k,m,j,h}& \beta_{t_h^{m,k,j}}(x_h^{m,k,j},a_h^{m,k,j},h) \\
	&= O\left(HSA(M-1)\sqrt{H^3\iota}+\iota^2\sqrt{H^9S^3A^3}+\iota^2\sqrt{MS^3A^3H^{8}}+\right.\\
	&\quad\left.+\sqrt{H^3SA\hat{T}\iota^2}+ \sqrt{H^2SA\iota^2}\sqrt{\sum_{m,k,j,h:M} W_{k+1}(x_h^{m,k,j},a_h^{m,k,j},h)}\right)\\
	&=O\left(HSA(M-1)\sqrt{H^3\iota}+\iota^2\sqrt{H^9S^3A^3}+\iota^2\sqrt{MS^3A^3H^{8}}+\sqrt{H^3SA\hat{T}\iota^2}\right.\\
	&\quad\left.+\sqrt{MH^{6.5}S^3A^3\iota^{3.5}}+\sqrt{H^6S^2A^2\hat{T}^{0.5}\iota^{2.5}}+\sqrt{H^8S^3A^3(M-1)\iota^2}\right).
\end{align*}
Due to
$$\sqrt{MH^{8}S^3A^3\iota^4}\geq \sqrt{MH^{6.5}S^3A^3\iota^{3.5}},$$
$$\sqrt{H^8S^3A^3(M-1)\iota^2}\leq \iota^2\sqrt{MS^3A^3H^{8}}$$
and 
$$\sqrt{H^6S^2A^2\hat{T}^{0.5}\iota^{2.5}}\leq H^{4.5}S^{1.5}A^{1.5}\iota^{1.5}+\sqrt{\hat{T}SAH^3\iota^2}\leq H^{4.5}S^{1.5}A^{1.5}\iota^{2}+\sqrt{\hat{T}SAH^3\iota^2},$$
we have
\begin{align*}
	&\sum_{k,m,j,h} \beta_{t_h^{m,k,j}}(x_h^{m,k,j},a_h^{m,k,j},h)\\
	&\quad\leq O\left(HSA(M-1)\sqrt{H^3\iota}+\iota^2\sqrt{H^9S^3A^3}+\iota^2\sqrt{MS^3A^3H^{8}}+\sqrt{H^3SA\hat{T}\iota^2}\right).
\end{align*}
Inserting it into \Cref{result_delta_v2}, we have
\begin{align*}
	\mbox{Regret}(T)&\leq \sum_{k=1}^{K}\delta_1^k \\
	&= O\left(MH^2SA+H^4SA(M-1)+HSA(M-1)\sqrt{H^3\iota}\right.\\
	&\quad\left. +\iota^2\sqrt{H^9S^3A^3}+\iota^2\sqrt{MS^3A^3H^{8}}+\sqrt{H^3SA\hat{T}\iota^2}\right).
\end{align*}

Finally, for the probability of the intersection of all the events, if we use $p/c'_p$ to replace $p$, we complete the proof.

\tcb{\section{Numerical Experiments}}
In this section, we conduct experiments in a synthetic environment to validate the theoretical performances of FedQ-Hoeffding, FedQ-Beinstein, and compare with their single-user counterparts UCB-H and UCB-B  \citep{jin2018q}, respectively. 

{\bf Synthetic Environment.} We generate a synthetic environment to evaluate the proposed algorithms. We set the number of states $S$ to be 3, the number of actions $A$ for each state to be 2, and the episode length $H$ to be 5. The reward $r_h(s, a)$ for each state-action pair and each step is generated independently and uniformly at random from $[0,1]$. We also generate the transition kernel $P_h(\cdot \mid s, a)$ from an $S$-dimensional simplex independently and uniformly at random for each state-action pair and each step. Such procedure guarantees that the synthetic environment is a proper tabular MDP.

Under the given MDP, we set $M = 10$ and $T/H=3\times 10^4$  for FedQ-Hoeffding, FedQ-Beinstein, and $T/H = 3\times 10^5, M = 1$ for UCB-H and UCB-B. Thus, the total number of episodes is $3\times 10^5$ for all four algorithms. We choose $c = \iota = 1$ for all algorithms. For each episode, we randomly choose the initial state uniformly from $S$ states. We collect 10 sample paths under all algorithms under the same MDP environment, and plot $\mbox{Regret}(T)/\sqrt{MT}$ versus $MT/H$ in \Cref{comparison_image}. The solid line represents the median of the 10 sample paths, while the shaded area shows the 10th and 90th percentiles. As we can see, both FedQ-Hoeffding and FedQ-Beinstein stay very close to their single-agent counterpart, indicating that FedQ-Hoeffding achieves linear speedup with respect to the number of clients $M$, as predicted by \Cref{thm_regret_hoeffding} and \Cref{thm_regret_bernstein}.
Besides, as time progresses, FedQ-Beinstein achieves lower regret than FedQ-Hoeffding, which is consistent with the theoretical results as well.


We also track the number of communication rounds throughout the learning process under FedQ-Hoeffding and FedQ-Bernstein, and plot the median profiles as well as the $10$th and $90$th percentiles in \Cref{comm_image}. Both curves exhibit sublinear growth, corroborating the theoretical result in \Cref{thm_comm_cost_hoeffding}. Besides, the total number of communication rounds under FedQ-Bernstein becomes lower than that under FedQ-Hoeffding as $T$ becomes sufficiently large. This is because after the more active early-stage exploration of FedQ-Bernstein, it reaches a more stable policy, under which the synchronization triggered by $(x,a,h)$s that are less likely to be visited under the optimal policy rarely happens.


\begin{figure}[!t]
	\centering
	\includegraphics[scale=0.8]{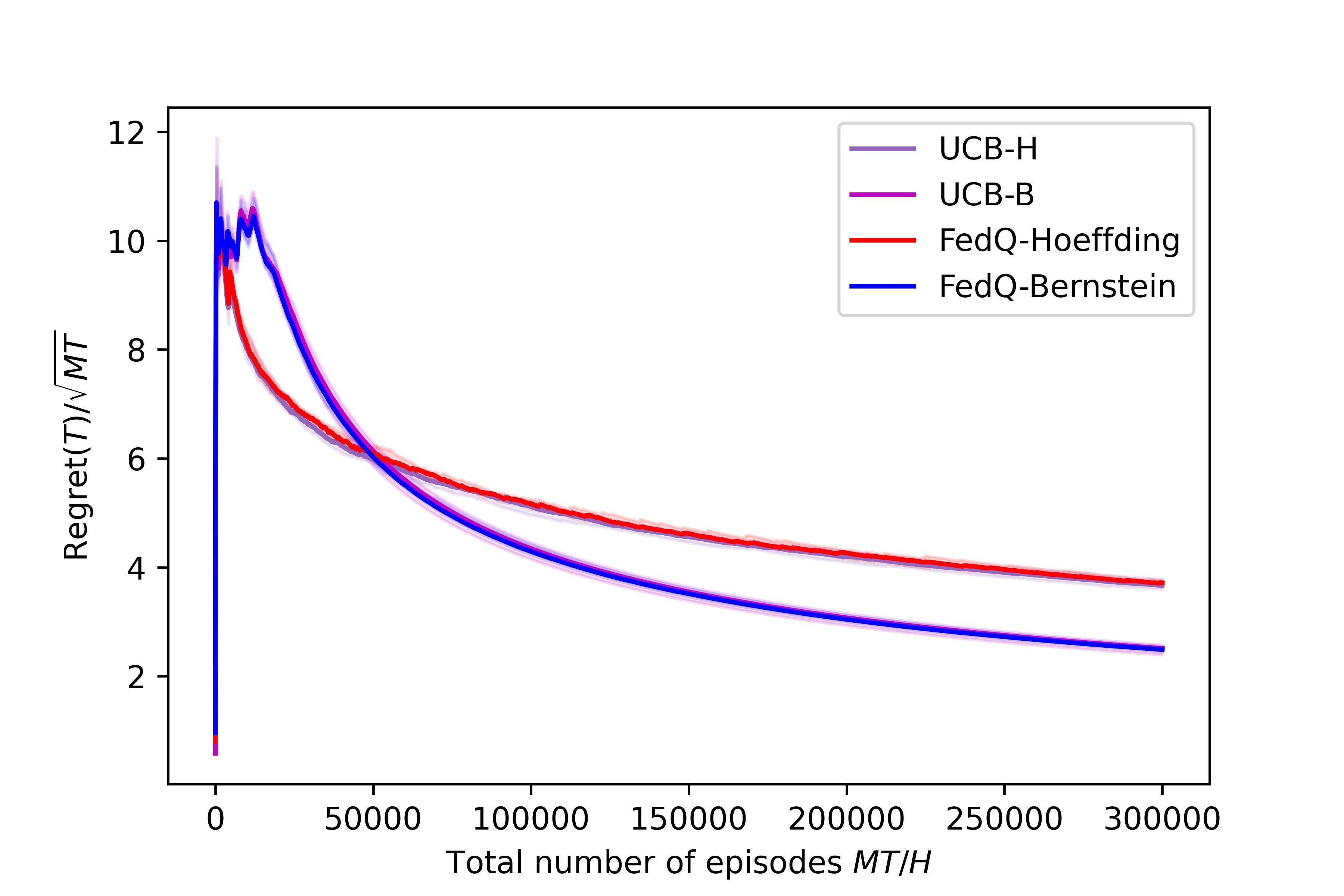}
	\caption{Regret comparison.}
	\label{comparison_image}
\end{figure}

\begin{figure}[!t]
	\centering
	\includegraphics[scale=0.8]{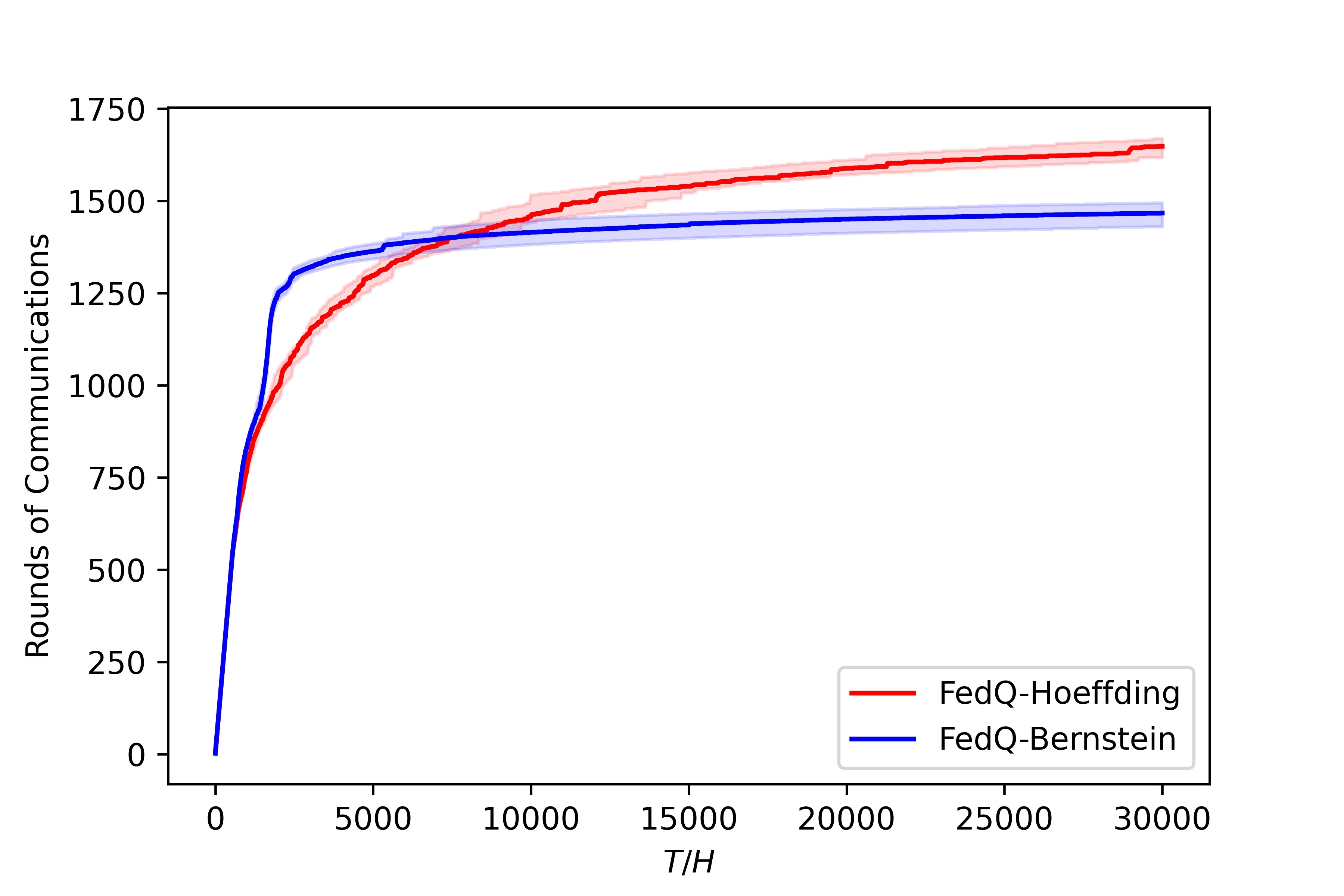}
	\caption{Total number of communication rounds as a function of $T/H$.}
	\label{comm_image}
\end{figure}
\end{document}